\theoremstyle{definition}
\newtheorem{theorem}{Theorem}
\newtheorem{lemma}{Lemma}
\newtheorem{corollary}{Corollary}
\newtheorem{remark}{Remark}
\newtheorem{assumption}{Assumption}
\crefname{equation}{}{}
\crefname{algorithm}{Alg.}{Alg.}
\crefname{line}{Line}{Lines}
\crefname{section}{Section}{Sections}
\crefname{assumption}{Assumption}{Assumptions}
\crefname{definition}{Definition}{Definitions}
\crefname{lemma}{Lemma}{Lemmas}
\crefname{question}{Question}{Questions}
\crefname{figure}{Fig.}{Fig.}
\crefname{table}{Table}{Tables}
\crefname{remark}{Remark}{Remarks}
\crefname{theorem}{Theorem}{Theorems}
\crefname{corollary}{Corollary}{Corollaries}
\crefname{appendix}{Appendix}{Appendices}
\renewcommand{\paragraph}[1]{\noindent\textbf{#1}\: }
\newif\ifedit
\title{Decentralized Hyper-Gradient Computation\\over Time-Varying Directed Networks}
\author[12\thanks{\tt{naoyuki.terashita.sk@hitachi.com}}]{Naoyuki Terashita}
\author[2]{Satoshi Hara}
\affil[1]{Hitachi, Ltd.}
\affil[2]{Osaka University}
\begin{document}

\maketitle

\begin{abstract}
This paper addresses the communication issues when estimating hyper-gradients in decentralized federated learning (FL).
Hyper-gradients in decentralized FL quantifies how the performance of globally shared optimal model is influenced by the perturbations in clients' hyper-parameters.
In prior work, clients trace this influence through the communication of Hessian matrices over a static undirected network, resulting in (i) excessive communication costs and (ii) inability to make use of more efficient and robust networks, namely, time-varying directed networks.
To solve these issues, we introduce an alternative optimality condition for FL using an averaging operation on model parameters and gradients. 
We then employ Push-Sum as the averaging operation, which is a consensus optimization technique for time-varying directed networks.
As a result, the hyper-gradient estimator derived from our optimality condition enjoys two desirable properties; (i) it only requires Push-Sum communication of vectors and (ii) it can operate over time-varying directed networks.
We confirm the convergence of our estimator to the true hyper-gradient both theoretically and empirically, and we further demonstrate that it enables two novel applications: decentralized influence estimation and personalization over time-varying networks.
Code is available at \url{https://github.com/hitachi-rd-cv/pdbo-hgp.git}. 
\end{abstract}

\section{Introduction} 
\subsection{Background} \label{sec:intro_bg}
Hyper-gradient has gained attention for addressing various challenges in federated learning (FL)~\citep{McMahan2017}, such as preserving fairness among clients in the face of data heterogeneity~\citep{Lu2022,Li2022}, tuning hyper-parameters with client cooperation~\citep{Yang2022,Gao2022,Chen2022DBO}, and improving the interpretability of FL training~\citep{xue2021toward}.

This paper primarily focuses on hyper-gradient computation in decentralized (or peer-to-peer) FL~\citep{Kairouz2021}[1.1], under practical consideration for communications.
Decentralized FL is known to offer stronger privacy protection~\citep{Cyffers2022}, faster model training~\citep{Lian2017,Marfoq2020}, and robustness against slow clients~\citep{8737602}.
However, these properties of decentralized FL also bring unique challenges in the hyper-gradient estimation.
This is because clients must communicate in a peer-to-peer manner to measure \textit{how the perturbations on hyper-parameters of individual clients alter the overall performance of the shared optimal model},  requiring the careful arrangement of what and how clients should communicate.

Specifically, there are two essential challenges: (i) \emph{communication cost} and (ii) \emph{configuration of communication network}.
We provide a brief overview of these challenges below and in \cref{table:dbo}.

\noindent\textbf{Communication cost}\: 
In centralized FL, the central server can gather all necessary client information for hyper-gradient computation, enabling a communication-efficient algorithm as demonstrated by \citet{Tarzanagh2022}.
However, designing such an efficient algorithm for decentralized FL is more challenging, as clients need to perform the necessary communication and computation without central orchestration.
This challenge results in less efficient algorithms~\citep{Yang2022,Chen2022DBO} as shown in \tablename~\ref{table:dbo}; the current decentralized hyper-gradient computations require large communication costs for exchanging Hessian matrices.

\noindent\textbf{Configuration of communication network}\: There are several types of communication network configurations for decentralized FL. 
One of the most general and efficient configurations is time-varying directed communication networks, which allow any message passing to be unidirectional.
This configuration is known to be resilient to failing clients and deadlocks~\citep{Tsianos2012} at a minimal communication overhead~\citep{Assran2019}. 
However, hyper-gradient computation on such a dynamic network remains unsolved, and previous approaches operate over less efficient configurations as shown in \tablename~\ref{table:dbo}.

\begin{table}[t]
	\caption{Concise comparison of the hyper-gradient for FL.}
	\label{table:dbo}
	\centering
    \small
\begin{tabular}{lcccc}
	\toprule
        & Decentralized? &  Communication Cost & Communication Network \\\midrule 
        \citet{Tarzanagh2022} & No & \textbf{Small} & Centralized \\ \midrule
        \citet{Chen2022DBO} & \textbf{Yes} & Large & Static Undirected \\ 
        \citet{Yang2022} & \textbf{Yes} & Large & Static Undirected \\ 
        \textbf{HGP (Ours)} &  \textbf{Yes} &  \textbf{Small}  & \textbf{Time-Varying Directed} \\ \bottomrule
\end{tabular}
\end{table}

\subsection{Our Contributions}
In this paper, we demonstrate that both problems can be solved simply by introducing an appropriate optimality condition of FL which is utilized to derive the hyper-gradient estimator.
We found that the optimality condition of decentralized FL can be expressed by the averaging operation on model parameters and gradients.
We then select Push-Sum~\citep{benezit2010weighted} as the averaging operation,  which is known as a consensus optimization that runs over time-varying directed networks.
Based on our findings and the specific choice of Push-Sum for average operation, we propose our decentralized algorithm for hyper-gradient computation, named Hyper-Gradient Push (HGP), and provide its theoretical error bound.
Notably, the proposed HGP resolves the aforementioned two problems; (i) it communicates only vectors using Push-Sum, avoiding exchanging Hessian matrices, and (ii) it can operate on time-varying networks, which is more efficient and robust than static undirected networks.

Our numerical experiments confirmed the convergence of HGP towards the true hyper-gradient.
We verified the efficacy of our HGP on two tasks: influential training instance estimation and model personalization.
The experimental results demonstrated that our HGP enabled us, for the first time, to solve these problems over time-varying communication networks.
For personalization, we also verified the superior performance of our HGP against existing methods on centralized and static undirected communication networks.

Our contributions are summarized as follows:
\begin{itemize}[leftmargin=*]
    \item We introduce a new formulation of hyper-gradient for decentralized FL using averaging operation, which can be performed by Push-Sum iterations. This enabled us to design the proposed HGP; it only requires the communication of the model parameter-sized vectors over time-varying directed networks. We also provide a theoretical error bound of our estimator.
    \item We empirically confirmed the convergence of our estimation to the true hyper-gradient through the experiment. We also demonstrated two applications that are newly enabled by our algorithm: influence estimation and personalization over time-varying communication networks.
\end{itemize}

\medskip
\paragraph{Notation}
$\langle \boldsymbol{A} \rangle_{ij}$ denotes the \(i\)-th row and \(j\)-th column block of the matrix \(\boldsymbol{A}\) and \(\langle \boldsymbol{a} \rangle_{i}\) denotes the $i$-th block vector of the vector \(\boldsymbol{a}\).
For a vector function $\boldsymbol{h}: \mathbb{R}^{m} \to \mathbb{R}^{n} $, we denote its total and partial derivatives by $\mathrm{d}_{\boldsymbol{x}}\boldsymbol{h}(\boldsymbol{x}) \in \mathbb{R}^{m\times n}$ and $\partial_{\boldsymbol{x}}\boldsymbol{h}(\boldsymbol{x}) \in \mathbb{R}^{m\times n}$, respectively.
For a real-valued function $h: \mathbb{R}^{m} \times \mathbb{R}^{s} \to \mathbb{R}$, we denote the Jacobian of  $\partial_{\boldsymbol{x}}h(\boldsymbol{x},\boldsymbol{y})$ with respect to $\boldsymbol{x}$ and $\boldsymbol{y}$ by $\partial_{\boldsymbol{x}}^2{h}(\boldsymbol{x},\boldsymbol{y}) \in\mathbb{R}^{m\times m} $ and $\partial_{\boldsymbol{x}\boldsymbol{y}}^2{h}(\boldsymbol{x},\boldsymbol{y}) \in\mathbb{R}^{s\times m} $, respectively.
We also introduce a concatenating notation \([\boldsymbol{z}_i]_{i=1}^{n}=[\boldsymbol{z}_1^\top~\cdots~\boldsymbol{z}_n^\top]{}^{\top} \in \mathbb{R}^{nd}\) for vectors \(\boldsymbol{z}_i \in \mathbb{R}^{d}\).
We denote the largest and smallest singular values of a matrix $\boldsymbol{A}$ by $ \sigma_{\max}(\boldsymbol{A})$ and $ \sigma_{\min}(\boldsymbol{A})$, respectively.

\section{Preliminaries}
This section provides the background of our study.
\cref{sec:pushsum} introduces the model of the time-varying network and Push-Sum algorithm.
\cref{sec:fl} provides the formulation of decentralized FL and its optimality condition, then \cref{sec:hg} presents a typical approach for hyper-gradient estimation in a single client setting.

\subsection{Time-Varying Directed Networks and Push-Sum}  \label{sec:pushsum}
\textbf{Time-varying directed communication networks}, in which any message passing can be unidirectional, have proven to be resilient to failing clients and deadlocks~\citep{Tsianos2012} and they enjoy the minimal communication overhead~\citep{Assran2019}. 
We denote the time-varying directed graph at a time step index $s>0$ by  $\mathcal{G}(s)$ with vertices $\{1,\ldots,n\}$ and edges defined by $\mathcal{E}(s)$.

We suppose that at step \(s\),  any $i$-th client sends messages to its out-neighborhoods $\mathcal{N}_{i}^{\mathrm{out}}(s) \subseteq \{1,\ldots,n\}$ and receives messages from the in-neighborhoods $\mathcal{N}_{i}^{\mathrm{in}}(s)$.
In addition, by standard practice, every $i$-th client is always regarded as its own in-neighbor and out-neighbor, i.e.,  $i \in \mathcal{N}_{i}^{\mathrm{out}}(s)$ and  $i \in \mathcal{N}_{i}^{\mathrm{in}}(s)$ for all $i,s$.
We also introduce an assumption on the connectivity of $\mathcal{G}(s)$ following \citet{Nedic2016}.
Roughly speaking, \cref{ass:connect} requires the time-varying network $\mathcal{G}(s)$ to be repeatedly connected over some sufficiently long time scale  $B > 0$.
\begin{assumption} \label{ass:connect}
The graph with edge set $\bigcup _{s=tB}^{(t+1)B-1}\mathcal{E}(s)$ is strongly-connected for every $t\geq 0$.
\end{assumption}
 
\begin{wrapfigure}{r}{0.38\textwidth}
\vskip -0.2in
\begin{algorithm}[H] 
\caption{Push-Sum} \label{alg:pushsum}
\DontPrintSemicolon
\KwIn{$\boldsymbol{y}_{i}^{(0)}$}
$\boldsymbol{z}_{i}^{(0)} \leftarrow \boldsymbol{y}_{i}^{(0)},~\omega_{i}^{(s)}\leftarrow 1$\;
\For{$s = 1$ \KwTo $S$}{
    $\boldsymbol{z}_{i}^{(s)} \leftarrow \sum_{j\in \mathcal{N}_{i}^{\mathrm{in}}(s)} \frac{\boldsymbol{z}_{j}^{(s-1)}}{\vert\mathcal{N}_{j}^{\mathrm{out}}(s)\vert} $\;
    $\omega_{i}^{(s)} \leftarrow \sum_{j\in \mathcal{N}_{i}^{\mathrm{in}}(s)} \frac{\omega_{j}^{(s-1)}}{\vert\mathcal{N}_{j}^{\mathrm{out}}(s)\vert}$\;
    $\boldsymbol{y}_{i}^{(s)} \leftarrow \frac{\boldsymbol{z}_{i}^{(s)}}{\omega_{i}^{(s)}}$\;
}
\KwOut{$\boldsymbol{y}_{i}^{(S)}$}
\end{algorithm}
\vskip -0.2in
\end{wrapfigure}

\textbf{Push-Sum}~\citep{benezit2010weighted} (\cref{alg:pushsum}) is an algorithm for computing an average of values possessed by each client through communications over time-varying directed networks $\mathcal{G}(s)$ satisfying \cref{ass:connect}. 
When each $i$-th client runs \cref{alg:pushsum} from its initial value vector $\boldsymbol{y}_{i}^{(0)} \in \mathbb{R}^{d}$, it eventually obtains the average of initial values (or consensus) over the clients~\citep{nedic2014distributed}, i.e., $\lim _{S\rightarrow \infty } \boldsymbol{y}_{i}^{(S)} =\frac{1}{n}\sum _{k} \boldsymbol{y}_{k}^{(0)}$.
From this property, we can regard Push-Sum as a linear operator $\Theta$.
Namely, denoting concatenated vectors by $\boldsymbol{y}^{(0)}=[\boldsymbol{y}_{i}^{(0)}]_{i=1}^{n}$, $\bar{\boldsymbol{y}} = [\frac{1}{n}\sum _{k} \boldsymbol{y}_{k}^{(0)}]_{i=1}^{n}$, we have
\begin{equation} \label{eq:lin_op}
\boldsymbol{\Theta}\boldsymbol{y}^{(0)} =\bar{\boldsymbol{y}}, \quad \left\langle\boldsymbol{\Theta}\right\rangle_{ij} = \frac{1}{n} \boldsymbol{I}_{d},~~\forall i,j = 1,\ldots,n,
\end{equation}
where, $\boldsymbol{I}_{d}$ denotes the identity matrix with size $d\times d$.
Finally, we remark on a useful consequence of this section, which takes an important role in our decentralized hyper-gradient estimation:
\begin{remark} \label{remark:pushsum}
    When \cref{ass:connect} is satisfied and every $j$-th client knows $\boldsymbol{y}_{j}^{(0)}$, any $i$-th client can obtain $\langle \boldsymbol{\Theta} \boldsymbol{y}^{(0)}\rangle_{i}$ by communications over time-varying directed networks.
\end{remark}

\subsection{Decentralized Federated Learning}  \label{sec:fl}
The federated learning (FL)~\citep{McMahan2017} consisting of $n$ clients is formulated by
\begin{equation} \label{eq:fl}
\min_{\boldsymbol{x}_{1}, \ldots, \boldsymbol{x}_{n}}\sum _{k=1}^{n}\mathbb{E} [g_{k} (\boldsymbol{x}_{k} ,\boldsymbol{\lambda} _{k} ;\xi _{k} )], \quad  \mathrm{s.t.} \quad \boldsymbol{x}_{i} =\boldsymbol{x}_{j} ,\ \forall i, j, 
\end{equation}
where, $g_{i} :\mathbb{R}^{d_{\boldsymbol{x}}} \times \mathbb{R}^{d_{\boldsymbol{\lambda} }}\rightarrow \mathbb{R}$ is a cost function of the $i$-th client, and $\xi _{i}$ denotes a random variable that represents the instance only accessible by the $i$-th client.
Note that the distribution of $\xi _{i}$ may differ between each client.
In decentralized FL, the objective of each $i$-th client is to find $\boldsymbol{x}_{i}$ that minimizes the total cost while maintaining \textit{consensus constraint}, i.e., $\boldsymbol{x}_{i} =\boldsymbol{x}_{j}, \forall i, j$.
Stochastic gradient push~\citep{Nedic2016,Assran2019} enables us to solve (\ref{eq:fl}) over time-varying directed networks.

\subsection{Hyper-Gradient Computation} \label{sec:hg}
Hyper-gradient is an effective tool for solving bilevel problems, which is the nested problem consisting of inner- and outer-problem~\citep{Domke2012,Maclaurin2015,Pedreska2016}.
Hyper-gradient can also be used for influential training instance estimation, which studies how the removal of a training instance influences the performance of the optimal model~\citep{Koh2017}. 
Below, we explain the definition and computation method of the hyper-gradient in the context of the bilevel problem.

Using differentiable function $f$ and $g$, the bilevel problem is formulated by
\begin{equation} \label{eq:bilevel}
    \underbrace{\min_{\boldsymbol{\lambda} \in \mathbb{R}^b} f(\boldsymbol{x}(\boldsymbol{\lambda}), \boldsymbol{\lambda})}_{\text{outer-problem}} , \quad  \mathrm{s.t.} \quad \underbrace{\boldsymbol{x}(\boldsymbol{\lambda}) = \min_{\boldsymbol{x} \in \mathbb{R}^a} g(\boldsymbol{x}, \boldsymbol{\lambda})}_{\text{inner-problem}} .
\end{equation}
Suppose that the optimal solution of the inner-problem $\boldsymbol{x} (\boldsymbol{\lambda} ) \in \mathbb{R}^a$ is expressed by the stationary point given by a differentiable function $\boldsymbol{\varphi} :\mathbb{R}^{a}\times \mathbb{R}^{b}\rightarrow \mathbb{R}^a$:
\begin{equation} \label{eq:stationary}
\boldsymbol{x} (\boldsymbol{\lambda} )=\boldsymbol{\varphi} \left( \boldsymbol{x} (\boldsymbol{\lambda}), \boldsymbol{\lambda} \right) .
\end{equation}
For example, if $g$ is smooth and strongly convex with respect to $\boldsymbol{x}$, we can use $\boldsymbol{\varphi}(\boldsymbol{x},\boldsymbol{\lambda})=\boldsymbol{x}-\eta \partial_{\boldsymbol{x}}g(\boldsymbol{x},\boldsymbol{\lambda})$ with $\eta > 0$ to express the optimality condition $\partial_{\boldsymbol{x}}g(\boldsymbol{x},\boldsymbol{\lambda})=0$ using (\ref{eq:stationary}).

For the bilevel problem (\ref{eq:bilevel}), we refer $\mathrm{d}_{\boldsymbol{\lambda}} f \left( \boldsymbol{x} (\boldsymbol{\lambda}), \boldsymbol{\lambda} \right)$ as hyper-gradient\footnote{In the remainder of the paper, we omit the arguments $(\boldsymbol{x}(\boldsymbol{\lambda}), \boldsymbol{\lambda})$ when it is clear from the context, e.g., $\boldsymbol{\varphi} = \boldsymbol{\varphi}(\boldsymbol{x}(\boldsymbol{\lambda}), \boldsymbol{\lambda})$.}.
One of the most common approach for computing $\mathrm{d}_{\boldsymbol{\lambda}} f$ is the fixed-point method~\citep{Pedreska2016,Lorraine2020}. 
When $\partial_{\boldsymbol{x}}\boldsymbol{\varphi}$ is positive-semidefinite and has its eigenvalues smaller than one, by the derivative of \cref{eq:stationary} and Neumann approximation of inverse, we obtain $\mathrm{d}_{\boldsymbol{\lambda}}\boldsymbol{x} (\boldsymbol{\lambda} ) = \partial _{\boldsymbol{\lambda}} \boldsymbol{\varphi}\left(\boldsymbol{I} - \partial _{\boldsymbol{x}} \boldsymbol{\varphi} \right)^{-1} =\partial _{\boldsymbol{\lambda}} \boldsymbol{\varphi}\sum _{m=0}^{\infty } (\partial _{\boldsymbol{x}} \boldsymbol{\varphi}) ^{m}$, leading to
\begin{equation} \label{eq:fixed_point}
\mathrm{d}_{\boldsymbol{\lambda}} f=\partial _{\boldsymbol{\lambda}} \boldsymbol{\varphi}\sum _{m=0}^{\infty } (\partial _{\boldsymbol{x}} \boldsymbol{\varphi} )^{m} \partial _{\boldsymbol{x}} f + \partial  _{\boldsymbol{\lambda}} f .
\end{equation}
Fixed-point method also provides an efficient algorithm to compute \cref{eq:fixed_point}:
\begin{subequations}
 \begin{align} 
\text{(initialization)} \quad & \boldsymbol{v}^{(0)} =\partial _{\boldsymbol{\lambda}}f , \quad && \boldsymbol{u}^{(0)} =\partial _{\boldsymbol{x}} f , \label{eq:init} \\
\text{(iteration for $m=1, \ldots, M$)} \quad & \boldsymbol{v}^{(m)} =\partial _{\boldsymbol{\lambda}} \boldsymbol{\varphi} \boldsymbol{u}^{(m-1)} +\boldsymbol{v}^{(m-1)}, \quad && \boldsymbol{u}^{(m)} =\partial _{\boldsymbol{x}} \boldsymbol{\varphi}  \boldsymbol{u}^{(m-1)}  \label{eq:recur},
\end{align}
\end{subequations}
which results in \(\boldsymbol{v}^{(M)} =\partial _{\boldsymbol{\lambda}}  \boldsymbol{\varphi} \sum_{m^{\prime}=0}^{M-1} (\partial _{\boldsymbol{x}} \boldsymbol{\varphi} )^{m'} + \partial _{\boldsymbol{\lambda}} f \approx \mathrm{d}_{\boldsymbol{\lambda}} f\). 
Here, no explicit computation of Jacobians are required in \cref{eq:recur}; $\partial _{\boldsymbol{\lambda}} \boldsymbol{\varphi} \boldsymbol{u}^{(m-1)}$ and $\partial _{\boldsymbol{x}} \boldsymbol{\varphi}  \boldsymbol{u}^{(m-1)}$ can be computed using the Jacobian-vector-product technique.

\section{Estimating Hyper-gradient over Time-varying Directed Networks}
In this section, we first explain the main technical challenge of hyper-gradient computation in distributed FL, namely, large communication costs due to the consensus constraint in the optimality condition of FL~\cref{eq:fl}.
We then introduce our alternative optimality condition using the convergence of Push-Sum. 
By using our optimality condition, we finally propose the decentralized hyper-gradient estimation algorithm HGP that runs with reasonable communication cost over time-varying networks.

\subsection{Main Challenge} \label{sec:challenge}
We consider the stationary point of decentralized FL \cref{eq:fl} and hyper-gradient derived from this stationary point. 
Let $\boldsymbol{\lambda} =\left[\boldsymbol{\lambda} _{i} \right]_{i=1}^{n} \in \mathbb{R}^{nd_{\boldsymbol{\lambda} }}$ and $\boldsymbol{x}=\left[\boldsymbol{x}_{i} \right]_{i=1}^{n} \in \mathbb{R}^{nd_{\boldsymbol{x}}}$ be concatenated inner-parameters and hyper-parameters, respectively.
We also denote the expectation of total inner-cost by $ g  (\boldsymbol{x},\boldsymbol{\lambda})=\sum _{k=1}^{n} \mathbb{E} [g_{k} (\boldsymbol{x}_{k} ,\boldsymbol{\lambda} _{k} ;\xi _{k} )] $ with the following assumption. 
\begin{assumption} \label{ass:convex}
    For every $i=1,\ldots,n$, $g_{i}$ is strongly convex with respect to the first argument.
\end{assumption}
We can then reformulate the optimality condition of \cref{eq:fl} by the stationary point \cref{eq:stationary} with
\begin{equation} \label{eq:map_ori}
\boldsymbol{\varphi} \left( \boldsymbol{x},\boldsymbol{\lambda} \right) =\boldsymbol{x} -\eta \partial _{\boldsymbol{x}}  g \left( \boldsymbol{x} ,\boldsymbol{\lambda}\right),  \quad \mathrm{s.t.} \quad \left\langle  \boldsymbol{x} \right\rangle_{i} = \left\langle  \boldsymbol{x}  \right\rangle_{j} \in \mathbb{R}^{d_{\boldsymbol{x}}}, \forall i,j ,
\end{equation}
where $\eta \in \mathbb{R}^{+}$.
Here, the latter constraint corresponds to the consensus constraint in \cref{eq:fl} and \cref{ass:convex} ensures the existence of $\left(\boldsymbol{I} - \partial _{\boldsymbol{x}} \boldsymbol{\varphi} \right)^{-1}$.

Let $ f(\boldsymbol{x},\boldsymbol{\lambda})=\sum _{k} f_{k} (\boldsymbol{x}_{k} ,\boldsymbol{\lambda} _{k} )$ be an outer-cost in bilevel decentralized FL.
Here, each $i$-th client is interested in the hyper-gradient $ f  (\boldsymbol{x}(\boldsymbol{\lambda}),\boldsymbol{\lambda})$ with respect to its hyper-parameter $\boldsymbol{\lambda}_i$.
The technical challenge is to compute $\mathrm{d}_{\boldsymbol{\lambda}}  f $ in a decentralized manner, especially computing \cref{eq:recur}. 
From the consensus constraint~\cref{eq:fl}, for any $m\geq0$, any block vector of $\boldsymbol{u}^{(m)}$ requires the evaluation of $f_k$ and $g_k$ for all $k=1,\ldots,n$, because of the following blocks in \cref{eq:recur}:
\begin{align}
 \langle \partial_{\boldsymbol{x}}  f   \rangle _{i} &=\eta \sum _{k}\partial _{\boldsymbol{x_k}} f_{k} (\boldsymbol{x}_{k}(\boldsymbol{\lambda}) ,\boldsymbol{\lambda} _{k} ), \quad
  \left\langle \partial _{\boldsymbol{x}} \boldsymbol{\varphi}\right\rangle_{ij}  =\boldsymbol{I}-\eta \sum _{k}\mathbb{E}[\partial _{\boldsymbol{x}_k} ^{2}g_k (\boldsymbol{x}_{k}(\boldsymbol{\lambda}) ,\boldsymbol{\lambda}_k;\xi_k ) ], ~\forall i,j. \label{eq:phi_all}
\end{align}
A naive computation of \cref{eq:phi_all} requires gathering these derivatives from all clients through communications.
The communication of the Hessian $\partial _{\boldsymbol{x}_k} ^{2}g_k (\boldsymbol{x}_{k}(\boldsymbol{\lambda}) ,\boldsymbol{\lambda}_k;\xi_k ) $ is particularly a problem for large models such as deep neural networks.

In the next section, we show that there is an alternative yet equivalent stationary condition that does not explicitly require consensus between any clients.
Based on this alternative condition, we introduce the proposed HGP, a fixed-point iteration without requiring exchanging Hessian matrices which can run even on time-varying directed networks. 

\subsection{Alternative yet Equivalent Stationary Condition}
We first present the alternative stationary condition as follows:
\begin{lemma} \label{lem:proximal}
A stationary condition $\boldsymbol{x} (\boldsymbol{\lambda} ) =\boldsymbol{\varphi}^{\prime} \left( \boldsymbol{x} (\boldsymbol{\lambda}), \boldsymbol{\lambda} \right)$ with a function
\begin{equation}\label{eq:stational_alt}
    \boldsymbol{\varphi}^{\prime } (\boldsymbol{x},\boldsymbol{\lambda})=\boldsymbol{\Theta} \left(\boldsymbol{x} -\eta  \partial _{\boldsymbol{x}} g \left( \boldsymbol{x},\boldsymbol{\lambda} \right)\right),
\end{equation}
holds true if and only if $\boldsymbol{x}(\boldsymbol{\lambda})$ is the solution of \cref{eq:fl}.
\end{lemma}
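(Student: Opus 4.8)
The plan is to write the vector fixed-point equation $\boldsymbol{x}(\boldsymbol{\lambda})=\boldsymbol{\varphi}'(\boldsymbol{x}(\boldsymbol{\lambda}),\boldsymbol{\lambda})$ blockwise and exploit two structural facts. By \cref{eq:lin_op}, every block of $\boldsymbol{\Theta}$ equals $\frac{1}{n}$ times the identity, so $\langle\boldsymbol{\Theta}\boldsymbol{z}\rangle_i=\frac{1}{n}\sum_{k}\langle\boldsymbol{z}\rangle_k$ does not depend on $i$; and because $g(\boldsymbol{x},\boldsymbol{\lambda})=\sum_k\mathbb{E}[g_k(\boldsymbol{x}_k,\boldsymbol{\lambda}_k;\xi_k)]$ is separable across clients, $\langle\partial_{\boldsymbol{x}}g(\boldsymbol{x},\boldsymbol{\lambda})\rangle_k=\mathbb{E}[\partial_{\boldsymbol{x}_k}g_k(\boldsymbol{x}_k,\boldsymbol{\lambda}_k;\xi_k)]$ depends only on the $k$-th block of $\boldsymbol{x}$. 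Hence the fixed-point equation reads, for every $i$, $\langle\boldsymbol{x}(\boldsymbol{\lambda})\rangle_i=\frac{1}{n}\sum_k\bigl(\langle\boldsymbol{x}(\boldsymbol{\lambda})\rangle_k-\eta\,\mathbb{E}[\partial_{\boldsymbol{x}_k}g_k(\langle\boldsymbol{x}(\boldsymbol{\lambda})\rangle_k,\boldsymbol{\lambda}_k;\xi_k)]\bigr)$, an equation whose right-hand side is independent of $i$.

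First I would prove the ``only if'' direction. Independence of the right-hand side from $i$ immediately forces the consensus $\langle\boldsymbol{x}(\boldsymbol{\lambda})\rangle_i=\langle\boldsymbol{x}(\boldsymbol{\lambda})\rangle_j$ for all $i,j$; write $\bar{\boldsymbol{x}}$ for this common block. Substituting $\langle\boldsymbol{x}(\boldsymbol{\lambda})\rangle_k=\bar{\boldsymbol{x}}$ and cancelling $\bar{\boldsymbol{x}}$ leaves $\sum_k\mathbb{E}[\partial_{\boldsymbol{x}_k}g_k(\bar{\boldsymbol{x}},\boldsymbol{\lambda}_k;\xi_k)]=0$, which is exactly the first-order optimality condition of the reduced problem $\min_{\bar{\boldsymbol{x}}}\sum_k\mathbb{E}[g_k(\bar{\boldsymbol{x}},\boldsymbol{\lambda}_k;\xi_k)]$ obtained from \cref{eq:fl} by eliminating the consensus constraint. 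By \cref{ass:convex} this reduced objective is strongly convex, so its unique stationary point is its unique global minimizer; therefore the consensus iterate with common block $\bar{\boldsymbol{x}}$ is the solution of \cref{eq:fl}.

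For the converse I would run the argument backwards: a solution of \cref{eq:fl} satisfies the consensus constraint, so all its blocks equal some $\bar{\boldsymbol{x}}$, and, again by strong convexity of the reduced objective, $\bar{\boldsymbol{x}}$ satisfies $\sum_k\mathbb{E}[\partial_{\boldsymbol{x}_k}g_k(\bar{\boldsymbol{x}},\boldsymbol{\lambda}_k;\xi_k)]=0$. Feeding these into $\boldsymbol{\varphi}'(\boldsymbol{x}(\boldsymbol{\lambda}),\boldsymbol{\lambda})=\boldsymbol{\Theta}(\boldsymbol{x}(\boldsymbol{\lambda})-\eta\,\partial_{\boldsymbol{x}}g)$ and using the block form of $\boldsymbol{\Theta}$ gives $\langle\boldsymbol{\varphi}'\rangle_i=\bar{\boldsymbol{x}}-\frac{\eta}{n}\sum_k\mathbb{E}[\partial_{\boldsymbol{x}_k}g_k(\bar{\boldsymbol{x}},\boldsymbol{\lambda}_k;\xi_k)]=\bar{\boldsymbol{x}}=\langle\boldsymbol{x}(\boldsymbol{\lambda})\rangle_i$, so the alternative stationary condition holds.

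I do not expect a genuine obstacle here; the only points needing care are (i) making explicit that applying $\boldsymbol{\Theta}$ annihilates any disagreement between blocks, so that it silently encodes the consensus constraint of \cref{eq:fl}, and (ii) invoking \cref{ass:convex} to upgrade the first-order stationarity condition to true optimality, which is what makes the statement a genuine equivalence rather than a one-sided implication.
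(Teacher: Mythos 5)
Your proposal is correct and follows essentially the same route as the paper's proof: write the fixed-point equation blockwise using $\langle\boldsymbol{\Theta}\rangle_{ij}=\frac{1}{n}\boldsymbol{I}$, observe that the right-hand side is independent of $i$ to force consensus, cancel the common block to obtain $\sum_k\mathbb{E}[\partial_{\boldsymbol{x}_k}g_k]=0$ (using $\eta\neq 0$), and reverse the substitution for the converse. The only (harmless) difference is that you make explicit the step from the first-order condition of the reduced problem to global optimality via \cref{ass:convex}, which the paper leaves implicit.
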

\cref{lem:proximal} states that each $\boldsymbol{x}_i$ is the optimal solution of FL only when it is identical to their average and when the average of the gradients is zero. 
While both \cref{eq:stational_alt} and \cref{eq:map_ori} characterize the optimality condition \cref{eq:fl} though their stationary condition, our \cref{eq:stational_alt} has the following desirable property:
\begin{remark} \label{remark:update}
\cref{lem:proximal} requires $\boldsymbol{x}_{i} =\boldsymbol{x}_{j}$ only implicitly. Thus, any block of the partial derivative with respect to $\boldsymbol{x}$ can be calculated by a single client.
\end{remark}

\subsection{Stochastic and Decentralized Approximation of Hyper-Gradient}
Finally, we present our decentralized algorithm, named Hyper-Gradient Push (HGP).

Since \cref{ass:convex} ensures $\left( I-\partial _{\boldsymbol{x}} \boldsymbol{\varphi}^{\prime} \right)^{-1}$ exists, we can derive the hyper-gradient similar to \cref{eq:fixed_point}:
\begin{equation*}
\mathrm{d}_{\boldsymbol{\lambda}}  f =-\eta \partial _{\boldsymbol{x}\boldsymbol{\lambda}}^{2} g   \boldsymbol{\Theta} \sum\limits _{m=0}^{\infty }\left(\left( \boldsymbol{I}-\eta \partial _{\boldsymbol{x}}^{2} g   \right) \boldsymbol{\Theta} \right)^{m} \partial _{\boldsymbol{x}} f   +\partial _{\boldsymbol{\lambda}} f ,  
\end{equation*}
where we used  $\boldsymbol{\Theta} ^{\top } =\boldsymbol{\Theta} $ from \cref{eq:lin_op}.
Similar to \cref{eq:recur}, we can obtain the fixed-point iteration of the form
\begin{equation} \label{eq:recur_prox}
\boldsymbol{v}^{(m)} =-\eta \partial _{\boldsymbol{x}\boldsymbol{\lambda}}^{2} g   \boldsymbol{\Theta} \boldsymbol{u}^{(m-1)} +\boldsymbol{v}^{(m-1)},  \quad \boldsymbol{u}^{(m)} =\left( \boldsymbol{I}-\eta \partial _{\boldsymbol{x}}^{2} g   \right) \boldsymbol{\Theta} \boldsymbol{u}^{(m-1)}. 
\end{equation}

\begin{algorithm}[t]  
\caption{Hyper-Gradient Push~(HGP)} \label{alg:hgp}
\DontPrintSemicolon
\KwIn{$\boldsymbol{x}_i(\boldsymbol{\lambda})$, $\boldsymbol{\lambda}_i$, $M$, $\eta$}
$\boldsymbol{u}_i^{(0)} \leftarrow \partial _{\boldsymbol{x}} f_i(\boldsymbol{x}_i(\boldsymbol{\lambda}), \boldsymbol{\lambda}_i)$, $\boldsymbol{v}_i^{(0)} \leftarrow \partial _{\boldsymbol{\lambda}} f_i(\boldsymbol{x}_i(\boldsymbol{\lambda}), \boldsymbol{\lambda}_i)$\;
\For{$m = 1$ \KwTo $M$}{
    Run Push-Sum (\cref{alg:pushsum}) with $\boldsymbol{u}_i^{(m-1)}$ and obtain the estimated average $\bar{\boldsymbol{u}}_i^{(m-1)}$\;
    Sample instances $\hat{\xi}_i^{(m)}$ and $\hat{\hat{\xi}}_i^{(m)}$ independently \;
    $\boldsymbol{v}_i^{(m)} \leftarrow \boldsymbol{v}_i^{(m-1)} - \eta\partial _{\boldsymbol{x}_i\boldsymbol{\lambda}_i} ^2 g_i(\boldsymbol{x}_i(\boldsymbol{\lambda}), \boldsymbol{\lambda}_i; \hat{\xi}_i^{(m)})\bar{\boldsymbol{u}}_i^{(m-1)}$\;
    $\boldsymbol{u}_i^{(m)} \leftarrow\bar{\boldsymbol{u}}_i^{(m-1)}-\eta \partial _{\boldsymbol{x}_i}^2 g_i(\boldsymbol{x}_i(\boldsymbol{\lambda}), \boldsymbol{\lambda}_i; \hat{\hat{\xi}}_i^{(m)}) \bar{\boldsymbol{u}}_i^{(m-1)}$\;
}
\Return{$\boldsymbol{v}_i^{(M)} \approx \langle\mathrm{d}_{\boldsymbol{\lambda}} f  \rangle_i = \mathrm{d}_{\boldsymbol{\lambda}_i}\sum _{k} f_{k} (\boldsymbol{x}_{k} (\boldsymbol{\lambda}),\boldsymbol{\lambda} _{k})$}\;
\end{algorithm}
Our HGP is obtained simply by letting the $i$-th client compute the $i$-th block of $\boldsymbol{v}^{(m)}$ and $\boldsymbol{u}^{(m)}$, denoted by $\boldsymbol{v}_{i}^{(m)}$ and $\boldsymbol{u}_{i}^{(m)}$, respectively.
We also replace $\boldsymbol{\Theta}$ with the $S$-step Push-Sum (\cref{alg:pushsum}) which we denote by $\hat{\boldsymbol{\Theta}}$.

HGP described in \cref{alg:hgp} proceeds as follows.
For \(m=0\), any $i$-th client can locally compute $\boldsymbol{u}_{i}^{(0)} =\partial _{\boldsymbol{x}_{i}} f_{i} (\boldsymbol{x}_{i} (\boldsymbol{\lambda} ),\boldsymbol{\lambda} _{i} )$ and $\boldsymbol{v}_{i}^{(0)} =\partial _{\boldsymbol{\lambda} _{i}} f_{i} (\boldsymbol{x}_{i} (\boldsymbol{\lambda} ),\boldsymbol{\lambda} _{i}  )$ (\cref{remark:update}).
Suppose the $i$-th client knows $\boldsymbol{u}_{i}^{(m-1)}$, which is trivially true when $m=1$.
Then, the $i$-th client can compute the average $\bar{\boldsymbol{u}}_{i}^{(m-1)} = \langle \hat{\boldsymbol{\Theta}} \boldsymbol{u}^{(m-1)}\rangle_{i}$ in \cref{eq:recur_prox} by running Push-Sum (\cref{remark:pushsum}).
Because $\langle \partial _{\boldsymbol{x}}^{2} g  \rangle_{ii} =\mathbb{E}[ \partial _{\boldsymbol{x}_i\boldsymbol{\lambda}_i}^{2} g _{i}\left( \boldsymbol{x}_{i} (\boldsymbol{\lambda} ),\boldsymbol{\lambda} _{i};\xi_i \right)]$ can be computed locally (\cref{remark:update}), the $i$-th client can compute $\boldsymbol{u}_{i}^{(m)}$ once $\bar{\boldsymbol{u}}_{i}^{(m-1)}$ is obtained.
This can be performed for every $m\geq0$, and similarly for $\boldsymbol{v}_{i}^{(m)}$.
Note that in \cref{alg:hgp}, we replace expectations for $g_i$ with its finite sample estimates.

HGP described in \cref{alg:hgp} holds two desirable properties.
First, the communication for HGP is only performed by \cref{alg:pushsum}, enabling hyper-gradient estimation over time-varying directed networks.
Second, HGP enjoys small communication complexity of \(O(d_{\boldsymbol{x}})\) as HGP exchanges only \(O(d_{\boldsymbol{x}})\) sized vector through Push-Sum.

We conclude this section by presenting the theoretical upper bound of HGP's estimation error.
\begin{assumption} \label{ass:jacob}
Let $\xi=\{\xi_1,\ldots,\xi_n\}$  and $\hat{g} (\boldsymbol{x},\boldsymbol{\lambda}; \xi)=\sum_{k=1}^{n} g_{k} (\boldsymbol{x}_{k} ,\boldsymbol{\lambda} _{k} ;\xi _{k} ) $, there exists $\kappa_{\boldsymbol{x}}$ and $\kappa_{\boldsymbol{\lambda}}$ such that for every $\boldsymbol{\lambda}$ and $\xi$,
\begin{equation*}
\sigma_{\max}( \partial_{\boldsymbol{x}}^2\hat{g} - \partial _{\boldsymbol{x}}^2 g ) \leq \kappa_{\boldsymbol{x}}, \quad \sigma_{\max}( \partial_{\boldsymbol{x}\boldsymbol{\lambda}}^2\hat{g} - \partial _{\boldsymbol{x}\boldsymbol{\lambda}}^2 g ) \leq \kappa_{\boldsymbol{\lambda}}.
\end{equation*}
\end{assumption}
\begin{theorem} \label{thm:error}
    Under \crefrange{ass:connect}{ass:jacob}, there exist constants $0\leq \tau < 1$ and $\delta  >0$ such that with probability at least $1-\epsilon $,
\begin{equation*}
\frac{\left\Vert \boldsymbol{v}^{(M)} -\mathrm{d}_{\boldsymbol{\lambda}} f \right\Vert }{\left\Vert \partial _{\boldsymbol{x}}  f   \right\Vert } \leq \frac{4\mu}{\alpha }\left( 1+\frac{8\sqrt{n}}{\delta } \tau ^{S}\right)\sqrt{2\log\frac{2n(d_{\boldsymbol{x}}+d_{\boldsymbol{\lambda}})}{\epsilon }} + \frac{2\beta}{\eta \alpha^2 }\frac{8\sqrt{n}}{\delta } \tau ^{S} +\exp (-O(M)) ,
\end{equation*}
where $\mu =\sqrt{\kappa_{\boldsymbol{\lambda}}^2+\kappa_{\boldsymbol{x}}^2\frac{\beta^2}{\alpha^2}}$, $\alpha= \min_{{\boldsymbol{\lambda} ,\xi}}\left\{\sigma_{\min}( \partial_{\boldsymbol{x}}^2\hat{g})\right\}$, and $\beta= \max_{{\boldsymbol{\lambda} ,\xi}}\left\{\sigma_{\max}( \partial_{\boldsymbol{x}\boldsymbol{\lambda}}^2\hat{g})\right\}$, under the conditions
\begin{equation*}
    0 < \alpha < \nicefrac{1}{\eta}, \quad \text{and} \quad S \geq \nicefrac{\log \frac{\delta}{16\sqrt{n}} \frac{\eta \alpha}{1-\eta \alpha }}{\log \tau} .
\end{equation*}
$\exp (-O(M))$ denotes the exponentially diminishing term.
\end{theorem}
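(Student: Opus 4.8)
The plan is to split $\boldsymbol{v}^{(M)}-\mathrm{d}_{\boldsymbol{\lambda}}f$ into a \emph{Push-Sum} error (governed by $\tau^S$), a \emph{truncation} error (governed by $M$), and an irreducible \emph{stochastic} error (governed by the concentration of the finite-sample Hessians), and to bound each separately. Write the per-client quantities in stacked form as in \cref{sec:challenge}: let $\boldsymbol{H}=\partial_{\boldsymbol{x}}^2 g$ and $\boldsymbol{J}=\partial_{\boldsymbol{x}\boldsymbol{\lambda}}^2 g$ (block diagonal, population blocks), let $\hat{\boldsymbol{H}}^{(m)},\hat{\boldsymbol{J}}^{(m)}$ be their block-diagonal finite-sample versions built from the fresh draws $\hat{\hat{\xi}}^{(m)},\hat{\xi}^{(m)}$, and let $\hat{\boldsymbol{\Theta}}$ be the linear operator realized by $S$ steps of Push-Sum. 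Then \cref{alg:hgp} in stacked form is $\boldsymbol{u}^{(m)}=(\boldsymbol{I}-\eta\hat{\boldsymbol{H}}^{(m)})\hat{\boldsymbol{\Theta}}\boldsymbol{u}^{(m-1)}$ and $\boldsymbol{v}^{(m)}=\boldsymbol{v}^{(m-1)}-\eta\hat{\boldsymbol{J}}^{(m)}\hat{\boldsymbol{\Theta}}\boldsymbol{u}^{(m-1)}$ with $\boldsymbol{u}^{(0)}=\partial_{\boldsymbol{x}}f$, $\boldsymbol{v}^{(0)}=\partial_{\boldsymbol{\lambda}}f$. Let $\tilde{\boldsymbol{u}}^{(m)},\tilde{\boldsymbol{v}}^{(m)}$ be the same recursion with $\boldsymbol{H},\boldsymbol{J}$ replacing $\hat{\boldsymbol{H}}^{(m)},\hat{\boldsymbol{J}}^{(m)}$ (still using $\hat{\boldsymbol{\Theta}}$), let $\tilde{\boldsymbol{v}}^{(\infty)}$ be its limit, and note that $\mathrm{d}_{\boldsymbol{\lambda}}f$ is the analogous limit with $\boldsymbol{\Theta}$ in place of $\hat{\boldsymbol{\Theta}}$, by \cref{eq:recur_prox}. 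Then $\|\boldsymbol{v}^{(M)}-\mathrm{d}_{\boldsymbol{\lambda}}f\|\le\|\boldsymbol{v}^{(M)}-\tilde{\boldsymbol{v}}^{(M)}\|+\|\tilde{\boldsymbol{v}}^{(M)}-\tilde{\boldsymbol{v}}^{(\infty)}\|+\|\tilde{\boldsymbol{v}}^{(\infty)}-\mathrm{d}_{\boldsymbol{\lambda}}f\|$, the three terms becoming the $\mu$-term, the $\exp(-O(M))$ term, and the $\beta$-term of the statement.

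\textbf{Spectral facts; truncation and Push-Sum terms.} By \cref{eq:lin_op}, $\boldsymbol{\Theta}$ is the orthogonal projector onto the consensus subspace, on which the block-diagonal $\boldsymbol{I}-\eta\boldsymbol{H}$ acts as $\boldsymbol{I}-\eta\bar{\boldsymbol{H}}$, $\bar{\boldsymbol{H}}=\tfrac1n\sum_k\langle\boldsymbol{H}\rangle_{kk}$; Weyl's inequality and \cref{ass:convex} give $\sigma_{\min}(\bar{\boldsymbol{H}})\ge\alpha$, so under $0<\alpha<1/\eta$ and the spectral-radius requirement of the Neumann expansion behind \cref{eq:recur_prox} one gets $\|(\boldsymbol{I}-\eta\boldsymbol{H})\boldsymbol{\Theta}\|\le\rho:=1-\eta\alpha<1$, and likewise for each $\hat{\boldsymbol{H}}^{(m)}$. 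From \cref{ass:connect} and the standard weighted-averaging analysis there are $\tau\in[0,1)$ and a constant $\delta>0$ (e.g.\ a uniform lower bound on the Push-Sum weights $\omega_i^{(s)}$) with $\|\hat{\boldsymbol{\Theta}}-\boldsymbol{\Theta}\|\le\gamma_S:=\tfrac{8\sqrt n}{\delta}\tau^S$; the stated lower bound on $S$ makes $\gamma_S$ small enough (roughly $\gamma_S\lesssim\eta\alpha$) that the $\hat{\boldsymbol{\Theta}}$-perturbed iteration still contracts at a rate $\tilde\rho<1$ bounded away from $1$ and $(\boldsymbol{I}-(\boldsymbol{I}-\eta\boldsymbol{H})\hat{\boldsymbol{\Theta}})^{-1}$ exists with norm $O((\eta\alpha)^{-1})$; in particular $\|\boldsymbol{u}^{(m)}\|\lesssim\tilde\rho^{\,m}\|\partial_{\boldsymbol{x}}f\|$ along every sample path. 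The truncation term is then the geometric Neumann tail $\|\tilde{\boldsymbol{v}}^{(M)}-\tilde{\boldsymbol{v}}^{(\infty)}\|\le O\!\big(\tfrac{\beta}{\eta\alpha}\big)\tilde\rho^{\,M}\|\partial_{\boldsymbol{x}}f\|=\exp(-O(M))\|\partial_{\boldsymbol{x}}f\|$. The Push-Sum term $\|\tilde{\boldsymbol{v}}^{(\infty)}-\mathrm{d}_{\boldsymbol{\lambda}}f\|$ follows from the resolvent identity $\boldsymbol{A}^{-1}-\boldsymbol{B}^{-1}=\boldsymbol{A}^{-1}(\boldsymbol{B}-\boldsymbol{A})\boldsymbol{B}^{-1}$ with $\boldsymbol{A}=\boldsymbol{I}-(\boldsymbol{I}-\eta\boldsymbol{H})\hat{\boldsymbol{\Theta}}$, $\boldsymbol{B}=\boldsymbol{I}-(\boldsymbol{I}-\eta\boldsymbol{H})\boldsymbol{\Theta}$ (so $\|\boldsymbol{B}-\boldsymbol{A}\|\le\gamma_S$, $\|\boldsymbol{A}^{-1}\|,\|\boldsymbol{B}^{-1}\|\le O((\eta\alpha)^{-1})$), plus the extra $\boldsymbol{J}(\hat{\boldsymbol{\Theta}}-\boldsymbol{\Theta})$ factor in front of the series, producing the $\tfrac{2\beta}{\eta\alpha^2}\gamma_S$ bound.

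\textbf{Stochastic term and main obstacle.} For $\boldsymbol{e}_u^{(m)}:=\boldsymbol{u}^{(m)}-\tilde{\boldsymbol{u}}^{(m)}$ one has $\boldsymbol{e}_u^{(m)}=(\boldsymbol{I}-\eta\boldsymbol{H})\hat{\boldsymbol{\Theta}}\boldsymbol{e}_u^{(m-1)}-\eta(\hat{\boldsymbol{H}}^{(m)}-\boldsymbol{H})\hat{\boldsymbol{\Theta}}\boldsymbol{u}^{(m-1)}$ ($\boldsymbol{e}_u^{(0)}=0$), and $\boldsymbol{v}^{(M)}-\tilde{\boldsymbol{v}}^{(M)}=-\eta\sum_{m=1}^M\big[(\hat{\boldsymbol{J}}^{(m)}-\boldsymbol{J})\hat{\boldsymbol{\Theta}}\boldsymbol{u}^{(m-1)}+\boldsymbol{J}\hat{\boldsymbol{\Theta}}\boldsymbol{e}_u^{(m-1)}\big]$. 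Unrolling $\boldsymbol{e}_u^{(m-1)}$ and interchanging the order of summation rewrites this as $\sum_{m=1}^M\boldsymbol{D}^{(m)}$, where $\boldsymbol{D}^{(m)}$ equals the round-$m$ noises $\hat{\boldsymbol{H}}^{(m)}-\boldsymbol{H},\hat{\boldsymbol{J}}^{(m)}-\boldsymbol{J}$ left-multiplied by quantities ($\boldsymbol{u}^{(m-1)}$, $\hat{\boldsymbol{\Theta}}$, a truncated resolvent $\boldsymbol{J}\hat{\boldsymbol{\Theta}}\sum_{k\ge0}[(\boldsymbol{I}-\eta\boldsymbol{H})\hat{\boldsymbol{\Theta}}]^k$ of norm $O(\beta(\eta\alpha)^{-1})$) that are measurable with respect to the $\sigma$-field $\mathcal{F}_{m-1}$ of the draws of rounds $<m$ (and of the network realization). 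Since the fresh draws make $\hat{\boldsymbol{H}}^{(m)}-\boldsymbol{H},\hat{\boldsymbol{J}}^{(m)}-\boldsymbol{J}$ conditionally mean-zero (unbiasedness, as $g$ is the population sum), mutually independent, and of spectral norm $\le\kappa_{\boldsymbol{x}},\kappa_{\boldsymbol{\lambda}}$ a.s.\ by \cref{ass:jacob}, $\{\boldsymbol{D}^{(m)}\}$ is a martingale-difference sequence with $\|\boldsymbol{D}^{(m)}\|\le c_m:=O(1)\,\eta\mu\,(1+\gamma_S)\,\tilde\rho^{\,m-1}\|\partial_{\boldsymbol{x}}f\|$, the $\kappa_{\boldsymbol{x}}$-part entering with the factor $\beta/\alpha$ (from traversing the $\boldsymbol{e}_u$-recursion, being hit by $\boldsymbol{J}$, and being geometrically summed) — which is exactly why $\mu=\sqrt{\kappa_{\boldsymbol{\lambda}}^2+\kappa_{\boldsymbol{x}}^2\beta^2/\alpha^2}$ — and the $(1+\gamma_S)$ being the Push-Sum-induced inflation of $\|\boldsymbol{u}^{(m-1)}\|$ and $\|\hat{\boldsymbol{\Theta}}\|$. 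A coordinatewise Azuma--Hoeffding bound with a union bound over the $n(d_{\boldsymbol{x}}+d_{\boldsymbol{\lambda}})$ coordinates of the $\boldsymbol{u}$- and $\boldsymbol{v}$-iterates — incurring only a $\sqrt{\log}$, and no dimension prefactor, because the coordinatewise squared increments of $\boldsymbol{D}^{(m)}$ sum to $\|\boldsymbol{D}^{(m)}\|^2\le c_m^2$ — together with $\sum_m\tilde\rho^{2(m-1)}\le(1-\tilde\rho^2)^{-1}\lesssim(\eta\alpha)^{-1}$ and $\eta<1/\alpha$, yields $\|\boldsymbol{v}^{(M)}-\tilde{\boldsymbol{v}}^{(M)}\|\le\tfrac{4\mu}{\alpha}\big(1+\tfrac{8\sqrt n}{\delta}\tau^S\big)\sqrt{2\log\tfrac{2n(d_{\boldsymbol{x}}+d_{\boldsymbol{\lambda}})}{\epsilon}}\,\|\partial_{\boldsymbol{x}}f\|$ with probability $\ge1-\epsilon$; summing the three contributions and dividing by $\|\partial_{\boldsymbol{x}}f\|$ gives the claim. \textbf{The delicate part is this last step:} arranging the re-summation so that each martingale increment stays $\mathcal{F}_{m-1}$-measurable while remaining conditionally mean-zero with an almost-sure norm bound, tracking how the $\kappa_{\boldsymbol{x}}$-noise propagates through the coupled $\boldsymbol{u}$-recursion so that it enters $\mu$ with the correct $\beta/\alpha$ weight, and carrying out the concentration with only a logarithmic — not a $\sqrt{n(d_{\boldsymbol{x}}+d_{\boldsymbol{\lambda}})}$ — dimension penalty; the truncation and Push-Sum pieces are then routine Neumann-series/resolvent estimates once the uniform contraction $\tilde\rho<1$ (i.e.\ the lower bound on $S$) is in place.
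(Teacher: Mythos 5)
Your three-way decomposition (stochastic noise from the sampled Hessians/Jacobians, Push-Sum operator error $\|\hat{\boldsymbol{\Theta}}-\boldsymbol{\Theta}\|\le \frac{8\sqrt{n}}{\delta}\tau^{S}$, and Neumann truncation) is essentially the decomposition the paper uses: its terms (A1)--(A3) are your stochastic part, (A2)+(A4) are your Push-Sum part, and (B) is your truncation part, and the contraction bookkeeping via $1-\eta\alpha$, the role of the lower bound on $S$ in keeping $(1-\eta\alpha)(1+\frac{8\sqrt{n}}{\delta}\tau^{S})<1$, and the martingale structure obtained from the independence of $\hat{\xi}^{(m)}$ and $\hat{\hat{\xi}}^{(m)}$ all match. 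Your resolvent-identity treatment of the Push-Sum term is a slightly cleaner packaging of the paper's direct geometric-series summation and yields the same $\frac{2\beta}{\eta\alpha^{2}}\frac{8\sqrt{n}}{\delta}\tau^{S}$.

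The one step that does not hold up as written is the concentration argument. You propose a coordinatewise Azuma--Hoeffding bound with a union bound over the $n(d_{\boldsymbol{x}}+d_{\boldsymbol{\lambda}})$ coordinates and claim no dimension prefactor ``because the coordinatewise squared increments of $\boldsymbol{D}^{(m)}$ sum to $\|\boldsymbol{D}^{(m)}\|^{2}\le c_{m}^{2}$.'' A union bound over coordinates controls only the $\ell_\infty$ norm of the deviation; converting to the $\ell_2$ norm requires either a $\sqrt{n(d_{\boldsymbol{x}}+d_{\boldsymbol{\lambda}})}$ prefactor or per-coordinate almost-sure bounds $c_{m,j}$ with $\sum_{j}c_{m,j}^{2}\le c_{m}^{2}$ --- but the inequality goes the other way ($\sum_{j}(\operatorname{ess\,sup}|D^{(m)}_{j}|)^{2}\ge \operatorname{ess\,sup}\sum_j |D^{(m)}_j|^2$), since each coordinate can attain its worst case on a different sample path. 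The paper avoids this by applying the \emph{Matrix} Azuma inequality to the matrix-valued martingale differences $X^{(m)},Y^{(m)}$ (whose conditional means vanish and whose squares have spectral norm bounded by the $c_m^2$ you identify), which legitimately puts the dimension inside the logarithm; a Hilbert-space-valued Azuma/Pinelis inequality would work equally well. Your derivation of the increment bounds themselves --- in particular the $\kappa_{\boldsymbol{x}}$-noise entering with weight $\beta/\alpha$ so that $\mu=\sqrt{\kappa_{\boldsymbol{\lambda}}^{2}+\kappa_{\boldsymbol{x}}^{2}\beta^{2}/\alpha^{2}}$ --- is correct and matches the paper's bounds on $\sigma_{\max}(X^{(m)}{X^{(m)}}^{\top})$ and $\sigma_{\max}(Y^{(m)}{Y^{(m)}}^{\top})$, so the fix is only to replace the scalar union-bound step with the matrix (or vector) martingale inequality.
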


\section{Related Work} \label{sec:related_works}
This section compares our work with other hyper-gradient estimation methods in the context of distributed bilevel optimization~\citep{Gao2022,Chen2022DBO,Yang2022}.
In particular, we compare ours and their works in terms of the type of hyper-gradient to estimate, available communication networks, and the requirement of explicit Jacobian in computation and memory.

\paragraph{Type of the hyper-gradient to estimate}
We can categorize the existing hyper-gradients in distributed scenarios into three classes: GlobalGrad~\citep{Yang2022}, ClientGrad~\citep{Chen2022DBO}, and LocalGrad~\citep{Gao2022}.
GlobalGrad takes the derivative of the sum or average of the outer-costs, \(\sum_{k=1}^{n}f_k(\boldsymbol{x}_{k}(\boldsymbol{\lambda}_1,\ldots,\boldsymbol{\lambda}_n),\boldsymbol{\lambda}_i)\).
GlobalGrad thus considers how perturbation on the hyper-parameter of a client alters the overall performance of clients.
Our HGP belongs to this class.
ClientGrad also considers how a hyper-parameter of a client alters the consensus parameters.
However, different from GlobalGrad, it represents the derivative of a single client outer-cost, i.e., \(f_i(\boldsymbol{x}_{i}(\boldsymbol{\lambda}_1,\ldots,\boldsymbol{\lambda}_n),\boldsymbol{\lambda}_i)\).
Consequently, ClientGrad cannot represent the influence between a hyper-parameter and outer-cost across different clients.
LocalGrad considers the hyper-gradient of a single client's outer-cost, \(f_i(\boldsymbol{x}_{i}(\boldsymbol{\lambda}_i),\boldsymbol{\lambda}_i)\), and it assumes that the cost of a client never influences the optimal parameter of the others.
\citet{Gao2022} regards LocalGrad as GlobalGrad assuming the homogeneous client data distributions and utilizes classic hyper-gradient computation~\citep{ghadimi2018approximation}. 

Overall, GlobalGrad can be considered to have richer information than LocalGrad or ClientGrad.
This difference is particularly essential in influence estimation.
It is known that we can quantify the contribution of training instances~\citep{Koh2017} and clients~\citep{xue2021toward} to the overall performance using the hyper-gradient.
As GlobalGrad is the only hyper-gradient that can measure the effects on overall performance \(\sum_{k=1}^{n}f_k(\boldsymbol{x}_{k}(\boldsymbol{\lambda}_1,\ldots,\boldsymbol{\lambda}_n),\boldsymbol{\lambda}_i)\), the accurate estimation of GlobalGrad is essential for influence estimation.

\paragraph{Available communication networks}
For communication networks, previous approaches for GlobalGrad and ClientGrad require static undirected communication network\footnote{Their approaches will work with time-varying undirected networks if clients know the rows or columns of a doubly-stochastic matrix corresponding to a time-varying network at every time step. However, such a situation is not practically realized~\citep{Tsianos2012}.}. 
Only our HGP can operate over time-varying directed networks, with the help of Push-Sum. 
Note that we can obtain HGP for the centralized or static undirected networks by using appropriate estimators for the averaging operator $\boldsymbol{\Theta}$. 
For example, one can easily construct a centralized HGP by introducing $\boldsymbol{\Theta}$ that collects all the values to the central server and computes their average, similar to FedAVG~\citep{McMahan2017}.
HGP over static undirected networks can also be realized as a special case of the original HGP, by replacing the Push-Sum in \cref{alg:hgp} with consensus optimization using a constant doubly-stochastic mixing matrix. 
We demonstrate this flexibility in our experiment (\cref{sec:personal}).

\paragraph{Explicit use of Hessian and Jacobian Matrices}
As we noted in Table~\ref{table:dbo}, some distributed hyper-gradient computation requires communicating large matrices such as Hessian.
This also indicates that we need to compute and store Hessian matrices locally.
However, computing and storing such large matrices are undesirable in modern machine learning such as deep neural networks.
Our HGP is free from these difficulties as it requires only Hessian-vector-products and Jacobian-vector-products that can be computed without computing these large matrices.
 
\section{Experiments} \label{sec:exp}
In this section, we evaluate the estimation error of HGP (\cref{sec:est_err}), and demonstrate the efficacy of our HGP in two applications for decentralized FL, that are, decentralized influence estimation (\cref{sec:infl}) and personalization (\cref{sec:personal}).
Detailed settings and results of every experiment can be found in our appendix.

\begin{figure}[t]
    \centering
    \begin{subfigure}[b]{0.34\textwidth}
        \includegraphics[width=\textwidth]{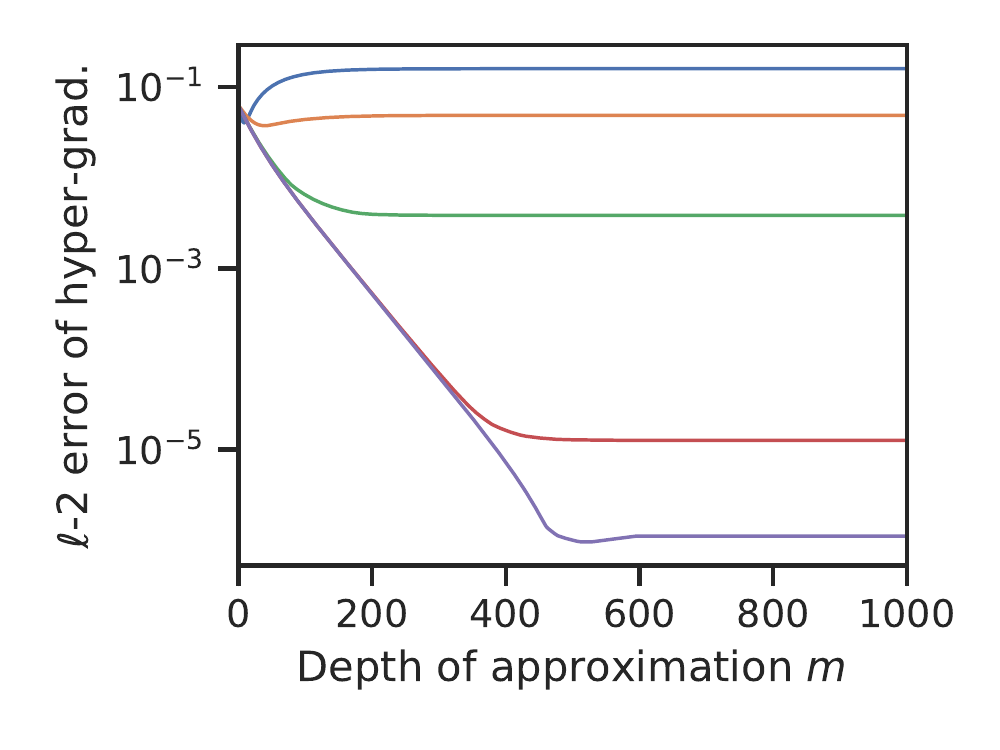}
        \caption{$\ell_2$ error with full-batch $g_i$}
        \label{fig:hypergrad_error_det}
    \end{subfigure}
    \begin{subfigure}[b]{0.34\textwidth}
        \includegraphics[width=\textwidth]{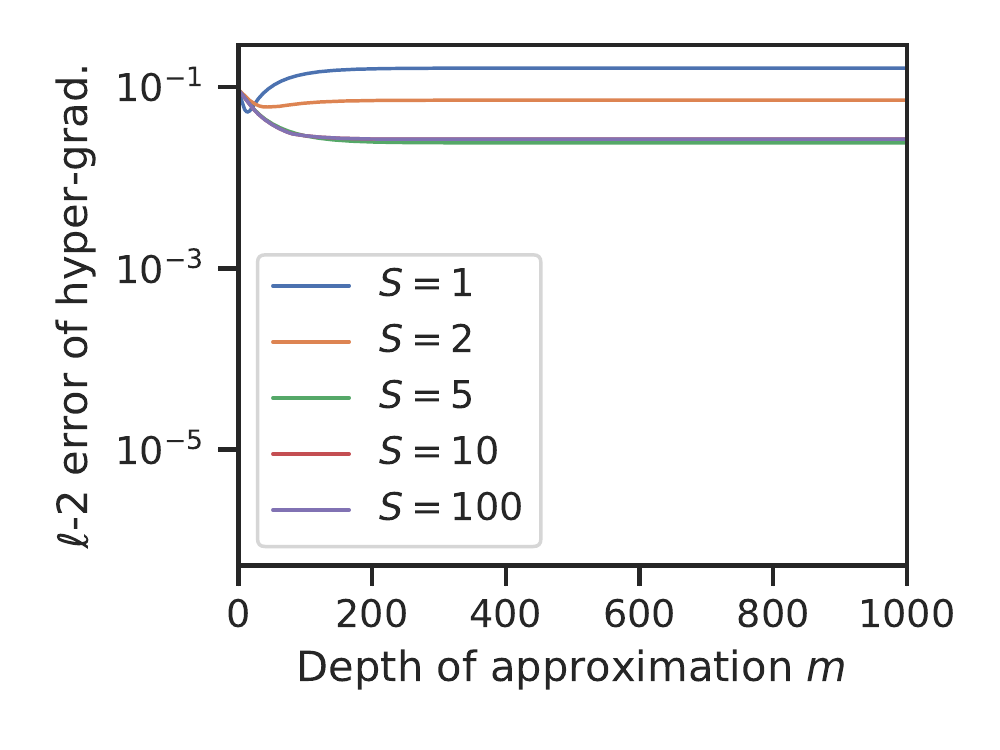}
        \caption{$\ell_2$ error with mini-batch $g_i$}
        \label{fig:hypergrad_error_sto}
    \end{subfigure}
    \begin{subfigure}[b]{0.3\textwidth}
        \includegraphics[width=\textwidth]{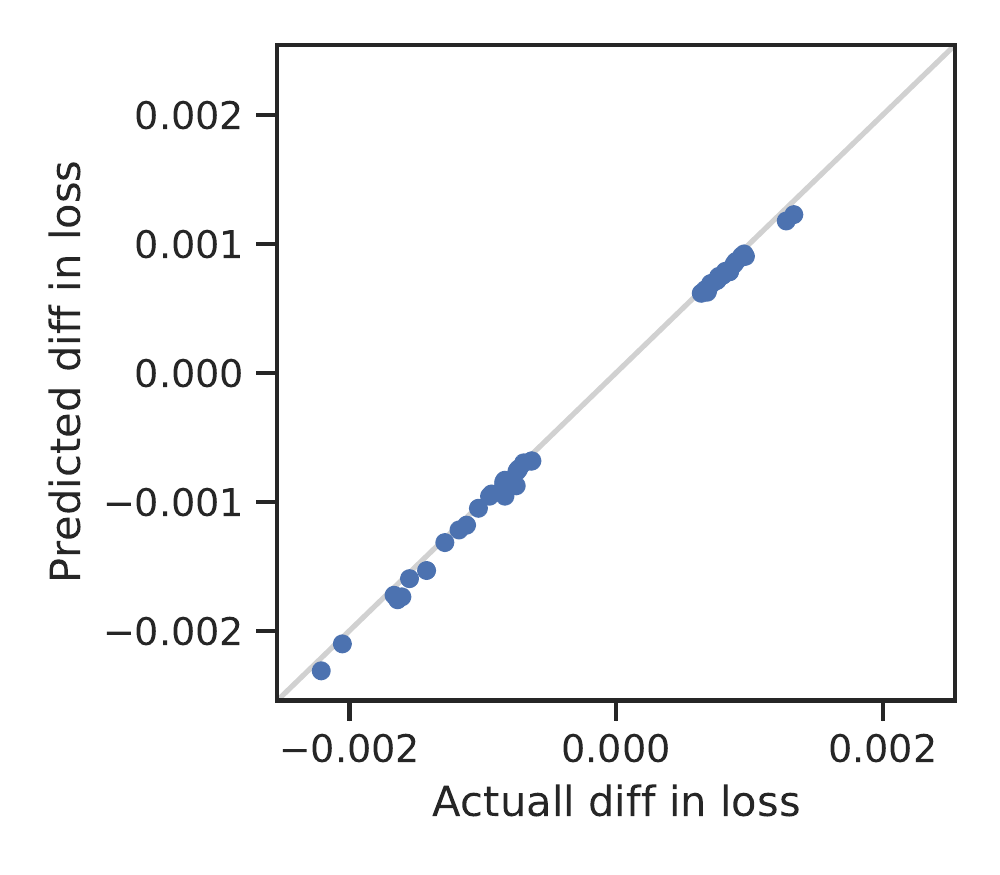}
        \caption{Error of influence estimation}
        \label{fig:lie_error}
    \end{subfigure}
    \caption{Estimation error of the hyper-gradient (a, b) and estimation error of influence instances (c) computed on logistic regression model trained for the synthetic classification dataset~\citep{Marfoq2021}.
    (a, b) show average $\ell_2$-error between true hyper-gradient $\mathrm{d}_{\boldsymbol{\lambda}}f$ and its estimation $ \boldsymbol{v}^{(m)}$ of the validation loss over 10 whole iterations of HGP with different seeds (error bars are omitted for visibility).
    While (a) uses data full-batch to compute Hessians/Jacobians, (b) uses data mini-batch.
    Both (a) and (b) plot the error varying the number of Push-Sum iterations $S$.
In (c), we selected the 50 most influential training instances, predicted by our estimation, and displayed the actual (horizontal) and estimated (vertical) changes in the validation loss after excluding each instance and retraining.}
\end{figure}
\vspace{-6pt}

\subsection{Estimation Error of Hyper-Gradient} \label{sec:est_err}
We assess the estimation error of the HGP estimate $\boldsymbol{v}^{(M)}$ as well as the impact of the choice of Neumann approximation order $M$ and the number of Push-Sum iterations $S$ to the error.

\noindent\textbf{Communication Network}\: We simulated time-varying directed networks with $n=3$.
We generated each directed edge $i \to j$ with probability $\rho_{ij}$ independently for each time step.
We also added self-loop edges, i.e., $\rho_{ii}=1$ for all $i$. 
For every $i \neq j$, $\rho_{ij}$ was independently sampled from the uniform distribution over $[0.4, 0.8]$.

\noindent\textbf{Dataset and Model}\: We generated a synthetic, heterogeneously distributed classification datasets following \citet{Marfoq2021}.
We used a linear logistic regression model and performed decentralized FL by using stochastic gradient push \citep{Assran2019} to obtain $\boldsymbol{x}_i(\boldsymbol{\lambda}) = \boldsymbol{x}_j(\boldsymbol{\lambda}), \forall i,j$.
For the client objective $g_i$, we used multiple regularization parameters similar to \citet{Pedreska2016}:
\begin{equation} \label{eq:lr_error}
g_{i}(\boldsymbol{x}_{i}, \boldsymbol{\lambda}_{i}; \xi_{i}) = \mathtt{BinaryCrossEntropy}\left(\xi_{i}^{\mathrm{out}}, \xi_{i}^{\mathrm{in}\top}\boldsymbol{x}_{i}\right) + \frac{1}{2}\boldsymbol{x}_{i}^{\top}\operatorname{diag}(\boldsymbol{\lambda}_{i})\boldsymbol{x}_{i},\ \forall i,
\end{equation}
where,  the instance of the client is the tuple of input and output $\xi_i = (\xi_{i}^{\mathrm{in}} \in \mathbb{R}^{5}, \xi_{i}^{\mathrm{out}} \in \{0,1\})$ and $\mathrm{diag}(\boldsymbol{\lambda}_{i})$ denotes the diagonal matrix consisting of the elements of $\boldsymbol{\lambda}_{i} \in \mathbb{R}^{d_{\boldsymbol{x}}}$. For the outer-cost $f_i$, we employed the binary cross-entropy loss computed on validation datasets.

\cref{fig:hypergrad_error_det,fig:hypergrad_error_sto} demonstrates the average estimation error $\left\Vert \boldsymbol{v}^{(m)} - \mathrm{d}_{\boldsymbol{\lambda}}  f  \right\Vert$ over 10 whole HGP iterations for various $M$ and $S$.
In \cref{fig:hypergrad_error_det}, we used the whole dataset to compute Hessian $\partial_{\boldsymbol{x_i}}^2 g_i$ and Jacobian $\partial_{\boldsymbol{x_i}\boldsymbol{\lambda_i}}^2 g_i$, while we used the data mini-batch in \cref{fig:hypergrad_error_sto}.
These results consistently align with \cref{thm:error}: the error reduces as $M$ and $S$ and increases in both figures\footnote{$S=1, 2$ seem to be exceptions. This is because $S$ needs to be larger than some constant in \cref{thm:error}.}; there is an irreducible error when the variance of Hessian and Jacobian matrices are non-zeros in \cref{fig:hypergrad_error_sto}.
The latter observations suggest that, when we use data mini-batch, the error tends to saturate and a small number of $M$ and $S$ would be sufficient.

\subsection{Decentralized Influence Estimation of Training Instances} \label{sec:infl}
We solve influence estimation \citep{Koh2017} in a decentralized manner. The goal of decentralized influence estimation is to allow any client to know how the removal of its local training instance changes the total model performance across all clients. 
HGP can easily realize such an estimation by letting $\boldsymbol{\lambda}_{i}$ be ones mask vector for the losses of training instances.
Let $\xi_{ik}$ be the $k$-th instance in the $i$-th client training dataset.
We then consider the following local cost:
\begin{equation} \label{eq:infl}
    g_{i}(\boldsymbol{x}_{i}, \boldsymbol{\lambda}_{i}; \xi_{ik}) = \langle{\boldsymbol{\lambda}_{i}}\rangle_k \ell(\xi_{ik}, \boldsymbol{x}).  
\end{equation}
where  $\ell$ is a loss function computed from the instance $\xi_{ik}$.
By using \cref{eq:infl}, the $k$-th element of $\langle -\mathrm{d}_{\boldsymbol{\lambda}}  f  \rangle_{i}$ gives the linear approximation of the changes in $ f $ by the removal of $\xi_{ik}  $ from the training dataset. 
We used the same settings as \cref{sec:est_err} except for the definition of $g_i$.

\cref{fig:lie_error} plots the approximation and actual change of $ f   $.
We selected the 50 elements with the largest absolute values in $\langle -\mathrm{d}_{\boldsymbol{\lambda}}  f  \rangle_{i} $ to determine the top-50 influential instances.
We then computed the actual effect of removing a single data instance for each of these 50 instances.
\cref{fig:lie_error} demonstrates that HGP accurately predicted the change in validation loss.
We note that, for the first time, HGP enabled the decentralized influence estimation over a time-varying network.

\subsection{Decentralized Personalization for Deep Learning} \label{sec:personal}
This section demonstrates the ability of HGP to solve personalization as a bilevel optimization problem. 
Although our theoretical guarantee in \cref{thm:error} is restricted to strongly convex cases, however, in this section, we experimentally demonstrate that HGP can also be effectively applied to deep neural networks.
We present two formulations of the bilevel problem and benchmarked their performance on four different communication networks. We utilized a 10\% subset of the EMNIST dataset consisting of 62 classes.
Preprocessing was performed for data heterogeneity among clients and let clients train CNN classification models.

\paragraph{Settings} We introduce two personalization methods, HGP-PL and HGP-MTL, derived from different $g_i$ formulations.
HGP-PL trains an attention mask layer using hyper-gradients which can be regarded as a family of personalized layer technique~\citep{Arivazhagan2019}.
HGP-PL optimizes $\boldsymbol{\lambda}_i \in \mathbb{R}^{62}$ to obtain a class-wise attention mask vector \(\mathtt{Softmax}(\boldsymbol{\lambda}_{i})\in [0,1]^{62}\) which is then multiplied by the logits from a CNN, parameterized by \(\boldsymbol{x_i}\).
HGP-MTL lets each client train an ensemble classifier that produces weighted average predictions from three base CNNs,  similarly to FedEM~\citep{Marfoq2021}.
HGP-MTL trains $\boldsymbol{x}_i$ as base CNN parameters and optimizes $\boldsymbol{\lambda}_i \in \mathbb{R}^{3}$ to determine the ensemble weight $\mathtt{Softmax}(\boldsymbol{\lambda}_{i})\in [0,1]^{3}$ of the base CNN outputs.
We utilized the training dataset's cross-entropy for $f_i$.
We performed five Adam~\citep{Adam} outer-steps using $\boldsymbol{v}_i^{(M)}$ to update $\boldsymbol{\lambda}_i$ from the zeros initial vector, and ran decentralized FL training for $\boldsymbol{x}_i$ before every outer-step.
Practical modifications on HGP were also incorporated: utilizing a single estimate \(\hat{\xi}_{i}\) for updates of both \(\boldsymbol{v}_i^{(m)}\) and \(\boldsymbol{u}_i^{(m)}\), and maintaining the updated debias weight $\omega_i^{(s)}$ without initialization in \cref{alg:pushsum}.
We selected $M=10$ and $S=10$ from the observation in \cref{sec:est_err}\footnote{The results in our appendix indicate that $S=1$ still produces comparable performance with the aforementioned modifications on HGP.}.

To demonstrate the flexibility of HGP in communication networks, we simulated four different networks, fully-connected (\texttt{FC}), static undirected (\texttt{FixU}), random undirected (\texttt{RandU}), and random directed (\texttt{RandD}) networks.
\texttt{FC} allows clients to communicate with all the others at any time step, i.e.,  $\mathcal{G}(s)$ is the fully-connected graph for all \(i, j\) and $s>0$.
\texttt{FixU} is a static undirected network simulated by a binomial Erd\H{o}s-R\'enyi graph~\citep{erdos59a} adding the self-loop edges. We construct a constant doubly-stochastic mixing matrix by the fast-mixing Markov chain~\citep{Boyd03fastestmixing} rule.
In \texttt{RandU} and \texttt{RandD}, we followed the setting in \cref{sec:est_err}.
For \texttt{RandU}, we set $\mathcal{N}_{i}^{\mathrm{out}}(s)=\mathcal{N}_{i}^{\mathrm{in}}(s),\forall i,s$ to realize undirected edges.
All communication scenarios were set with \(n=100\) clients.

We compared our approaches with baselines on each communication setting.
For \texttt{FC} and \texttt{FixU} settings, we compared several personalization approaches: a personalized model trained only on the local dataset (Local), FedAvg with local tuning (FedAvg+)~\citep{Jiang2019}, Clustered-FL~\citep{Sattler2020}, pFedMe~\citep{Dinh2020}, and centralized and decentralized versions of FedEM \citep{Marfoq2021}.
We also trained the global models using FedAvg~\citep{McMahan2017}, SGP~\citep{Nedic2016}, and FedProx~\citep{Fedprox}.
Since SGP is mathematically equivalent to FedAvg on \texttt{FC}, we treat them as equivalent approaches.
For a fair comparison, we performed hyper-parameter tuning and reported the best results.
All baselines and HGP inner-optimizations on \texttt{FC} and \texttt{FixU} ran a training procedure following \citet{Marfoq2021}, and ran SGP~\citep{Nedic2016} on \texttt{RandU} and \texttt{RandD}.

\begin{table*}[t]
    \setlength{\tabcolsep}{5pt}
    \vskip -0.1in
	\caption{Test accuracy of personalized models on EMNIST (average clients / bottom 10\% percentile).}
	\label{table:exp}
	\centering
    \scriptsize
\begin{tabular}{clcccc}
	\toprule
	& \multirow{3}{*}{Method}      &  \multicolumn{4}{c}{Communication network}    \\ \cmidrule(lr){3-6}
	                              &                              & Fully-Connected  & Static Undirected   & \textbf{Time-Varying Undirected}  & \textbf{Time-Varying Directed} \\ 
	                              &                              &  (\texttt{FC}) &  (\texttt{FixU}) & (\textbf{\texttt{RandU}}) &  (\textbf{\texttt{RandD}}) \\ \midrule
	\multirow{2}{*}{Global}       & SGP/FedAvg             & \(82.9\) / \(74.5\) & \(83.8\) / \(76.8\) & \(80.0\) / \(71.8\) & \(80.1\) / \(72.4\)         \\
	                              & FedProx                & \(82.9\) / \(74.5\) & n/a & n/a & n/a \\ \midrule
	\multirow{7}{*}{Personalized} & Local                  & \(74.7\) / \(63.0\) & \(74.7\) / \(63.0\) & \(73.4\) / \(64.0\) & \(73.4\) / \(64.0\)             \\
	                              & FedAvg+                & \(83.2\) / \(74.3\) & n/a & n/a & n/a \\
	                              & Clustered-FL           & \(83.0\) / \(74.6\) & n/a & n/a & n/a \\
	                              & pFedMe                 & \(85.1\) / \(79.0\) & n/a & n/a & n/a \\
	                              & FedEM                  & \(84.0\) / \(75.4\) & \(83.9\) / \(75.7\) & n/a & n/a      \\
	                              & \textbf{HGP-MTL (Ours)}    & \(84.0\) / \(76.1\) & \(84.0\) / \(76.3\) & \(82.6\) / \(74.3\) & \(81.8\) / \(72.9\) \\
	                              & \textbf{HGP-PL (Ours)}     &  \(\boldsymbol{88.1}\) / \(\boldsymbol{82.6}\) & \(\boldsymbol{88.0}\) / \(\boldsymbol{82.2}\) & \(\boldsymbol{85.5}\) / \(\boldsymbol{79.6}\) & \(\boldsymbol{85.6}\) / \(\boldsymbol{79.6}\) \\  \bottomrule
\end{tabular}
\vskip -0.1in
\end{table*}

\vspace{-3pt}

\paragraph{Results} \cref{table:exp} presents the average test accuracy with weights proportional to local test dataset sizes.
Here, HGP-PL performed the best across all communication settings demonstrating the effectiveness of the estimated hyper-gradients. HGP-MTL attained similar improvements to FedEM reflecting the fact that both are the mixture of three CNNs.
We also investigated whether the accuracy gain was shared among all clients.
Table~\ref{table:exp} shows the accuracy of the bottom 10\% percentile of the clients.
All our approaches improved accuracy at the 10\% percentile from the baselines in all communication settings, confirming that the clients fairly benefited from our personalization.
In terms of the available communication networks, our approaches with HGP are the only methods that are applicable to time-varying networks, and they attained superior performance compared to the currently available baselines, SGP and Local.

\section{Conclusion}
In conclusion, this paper addressed the challenge of estimating hyper-gradients in decentralized FL, namely, the communication cost and the availability over time-varying directed networks.
We proposed an alternative equivalent optimality condition of decentralized FL based on iterations of Push-Sum, free of explicit consensus constraints.
Consequently, the hyper-gradient derived from our optimality condition only requires (i) communication of vectors by Push-Sum and (ii) communications over time-varying directed networks.
The convergence of our HGP toward the true hyper-gradient was validated both theoretically and empirically.
Furthermore, we demonstrated the practical benefits of HGP in two novel applications: decentralized influence estimation and personalization over time-varying communication networks.

\section*{Acknowledgment}
Satoshi Hara is partially supported by JST, PRESTO Grant Number JPMJPR20C8, Japan.
Computational resource of AI Bridging Cloud Infrastructure (ABCI) provided by National Institute of Advanced Industrial Science and Technology (AIST) was used to produce the experimental results.

{

\bibliographystyle{abbrvnat}
\bibliography{main}
}

\newpage
\appendix

\section*{Limitation and Future Work}
\cref{thm:error} indicates estimation bias remains due to the variance from $g_i$ and finite numbers of $M$ and $S$.  
Although the focus of this paper is a hyper-gradient estimation, analyzing the convergence of bilevel optimizations using such biased gradients is a promising direction for future work.
Additionally,  typical deep learning scenarios with non-convex  $g_i$ violate \cref{ass:convex}.
Although we demonstrated our performance with deep learning models in \cref{sec:personal}, extending \cref{thm:error} to non-convex settings should be included in another future work.

\section{Detailed Experimental Settings and Results}
This section describes detailed settings and additional results of experiments in \cref{sec:exp}.
The source code for reproduction is provided in separate files.

\subsection{Estimation Error of Hyper-Gradient} \label{app:est_err}
We employed synthetic and heterogeneously distributed classification datasets following \citet{Marfoq2021}[I.1.5].
We produced binary classification datasets with five-dimensional inputs, utilizing three as the count of underlying distributions and a constant 0.4 for Dirichlet distribution.

Every client holds training and validation datasets, each with 100 instances.
We explored two types of $g_i$: full-batch $g_i$ and mini-batch $g_i$.
Full-batch $g_i$ is deterministic, utilizing the complete batch, whereas mini-batch $g_i$ is stochastic, using only 20 instances per mini-batch.
For both $g_i$ types, we conducted 5000 iterations using the SGP~\citep{Assran2019} with a multi-step learning rate scheduler to obtain $\boldsymbol{x}(\boldsymbol{\lambda})$.

We plotted the complete versions of \cref{fig:hypergrad_error_det} and \cref{fig:hypergrad_error_sto} with error bars as \cref{fig:hypergrad_error_det_err_bar} and \cref{fig:hypergrad_error_sto_err_bar}, respectively.

\begin{figure}[t]
    \centering
    \begin{subfigure}[b]{0.49\textwidth}
        \includegraphics[width=\textwidth]{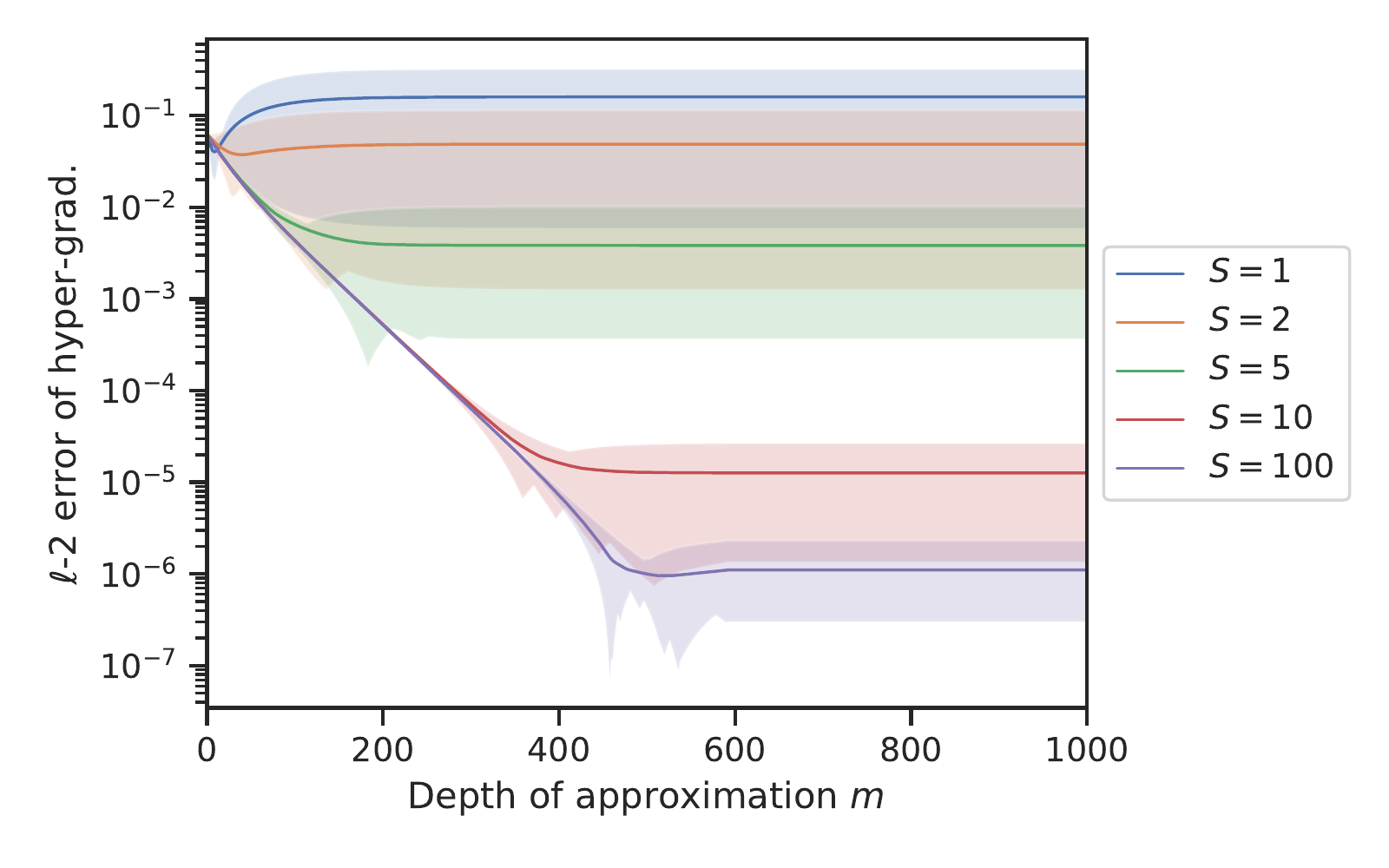}
        \caption{Full-batch $g_i$}
        \label{fig:hypergrad_error_det_err_bar}
    \end{subfigure}
    \begin{subfigure}[b]{0.49\textwidth}
        \includegraphics[width=\textwidth]{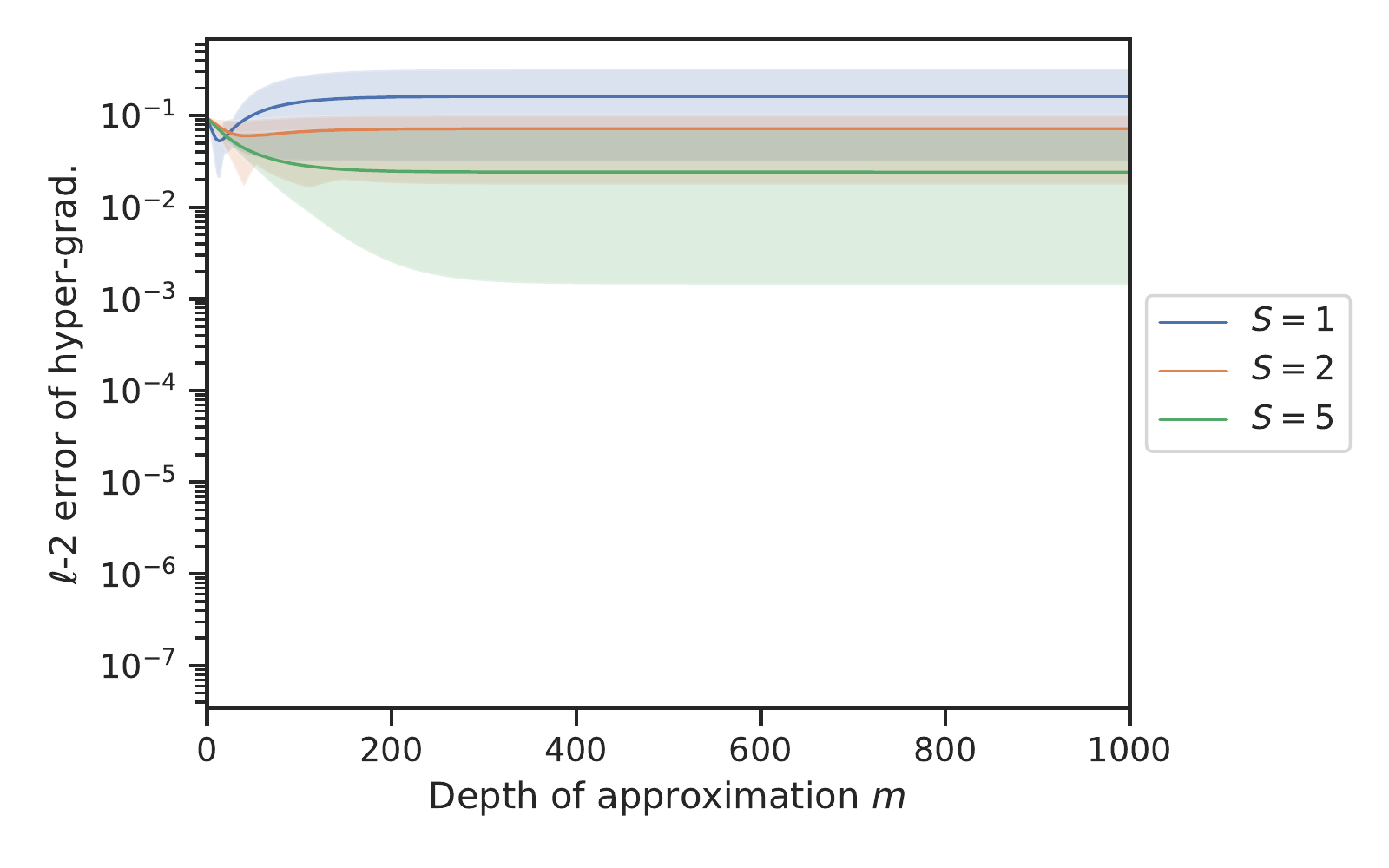}
        \caption{Mini-batch $g_i$}
        \label{fig:hypergrad_error_sto_err_bar}
    \end{subfigure}
    \caption{The estimation errors of the hyper-gradient computed on a logistic regression model trained for the synthetic classification dataset~\citep{Marfoq2021}.
Figures indicate the average $\ell_2$-error between the true hyper-gradient $\mathrm{d}_{\boldsymbol{\lambda}}f$ and its estimation $\boldsymbol{v}^{(m)}$ of the validation loss over 10 iterations of HGP with different seeds.
While (a) computes Hessians/Jacobians using full-batch, (b) uses mini-batch.
We excluded the results with $S=10,100$ from (b) as they have no significant differences compared to the result with $S=5$.
The top and bottom of the error bars represent the 10\% and 90\% interpolated percentile of the estimation errors over 10 whole HGP iterations.}
\end{figure}
\vspace{-6pt}

\subsection{Decentralized Influence Estimation of Training Instances} \label{app:infl}
\paragraph{Detailed and additional settings}
In addition to the case of convex $g_i$ discussed in \cref{sec:infl}, i.e., logistic regression model with binary cross-entropy loss, we also conducted influence estimation on possibly non-convex settings by using a CNN classifier trained on distributed EMNIST datasets. 

For the convex case, we utilized the same dataset as described in Section \ref{sec:est_err}. We employed full-batch loss for $g_i$ and utilized SGP with the same setting as presented in \cref{app:est_err} to obtain $\boldsymbol{x}(\boldsymbol{\lambda})$. 
We used $M=500$, $S=100$, and $\eta=1$ for HGP.

The CNN classifier adopts the architecture described in Table \ref{table:cnn_arch}, incorporating L2 regularization with a decay rate of $0.001$. 
We utilized 1\% of all instances in the EMNIST dataset, specifically those belonging to digit classes. 
We followed the procedure outlined in \cref{app:personal} to split the subset of EMNIST for clients.
As in \cref{sec:infl} we employed $g_i$ in which each softmax cross-entropy loss for an instance $\xi_{ik}$ is multiplied by $\boldsymbol{\lambda}_{i,k}=1$. 
During both decentralized FL training and influence estimation for the CNN classifier, we employed mini-batch loss with a batch size of 512, as well as full-batch loss for $g_i$. 
In both settings of $g_i$, we executed 5000 iterations of SGP and conducted HGP with $M=1000$, $S=100$, and $\eta=0.05$.

\begin{table}[t]
	\caption{CNN architecture for experiments in \cref{app:infl}. The input size and output size are specified in the format: channels $\times$ height $\times$ width.}
	\label{table:cnn_arch}
	\centering
    \small
\begin{tabular}{lcc}
  \toprule
  Torch Operation & Input Size & Output Size \\
  \midrule
   \texttt{Conv2d(1, 20, 5)} & $1 \times 28 \times 28$ & $20 \times 24 \times 24$ \\
   \texttt{ReLU} & $20 \times 24 \times 24$ & $20 \times 24 \times 24$ \\
   \texttt{MaxPool2d(2)} & $20 \times 24 \times 24$ & $20 \times 12 \times 12$ \\
   \texttt{Conv2d(20, 20, 5)} & $20 \times 12 \times 12$ & $20 \times 8 \times 8$ \\
   \texttt{ReLU} & $20 \times 8 \times 8$ & $20 \times 8 \times 8$ \\
   \texttt{MaxPool2d(2)} & $20 \times 8 \times 8$ & $20 \times 4 \times 4$ \\
   \texttt{View(-1, 4 * 4 * 20)} & $20 \times 4 \times 4$ & $320$ \\
   \texttt{Linear(320, 10)} & $320$ & $10$ \\
  \bottomrule
\end{tabular}
\end{table}

\paragraph{Additional results}
The estimation results for full-batch $g_i$ and mini-batch $g_i$ under non-convex settings using a CNN classifier are displayed in \cref{fig:app_infl_err_emnist,fig:app_infl_err_emnist_sto}.
Compared to convex $g_i$ (\cref{fig:app_infl_err_synth}), the non-convex results appear to be noisier.

We performed further exploration of non-convex cases through R2 and F1 scores, measuring the correlation of predictions and discriminatory capability for negative influence instances, respectively.

The R2 score measures the correlation between actual and predicted differences in losses (influences) caused by the removal of the most influential instances.
The R2 score follows a definition of $R^2=1-{SS}_{\mathrm{res}}/{SS}_{\mathrm{tot}}$, where ${SS}_{\mathrm{res}}$ denotes the residual sum of squares score and ${SS}_{\mathrm{tot}}$ the total sum of squares.
One can see from \cref{table:infl} that the predicted influences under non-convex settings present a mild yet positive correlation to the actual influences.

The F1 score studies the ability of estimated hyper-gradients to differentiate between negative and positive influences.
An instance is deemed to have a negative influence if its removal and retraining decrease the validation loss.
The F1 score is computed by considering the influence estimation as a binary classification task, in which prediction is true-positive when both prediction and actual removal demonstrate a negative influence.
The F1 scores under non-convex settings in \cref{table:infl} indicate that the discriminatory capacity for negative influence instances is fairly accurate, confirming the effectiveness of our approach in non-convex settings.

\begin{figure}[t]
    \centering
    \begin{subfigure}[b]{0.32\textwidth}
        \includegraphics[width=\textwidth]{figs/lie_error.pdf}
        \caption{Convex \& full-batch $g_i$}
        \label{fig:app_infl_err_synth}
    \end{subfigure}
    \begin{subfigure}[b]{0.32\textwidth}
        \includegraphics[width=\textwidth]{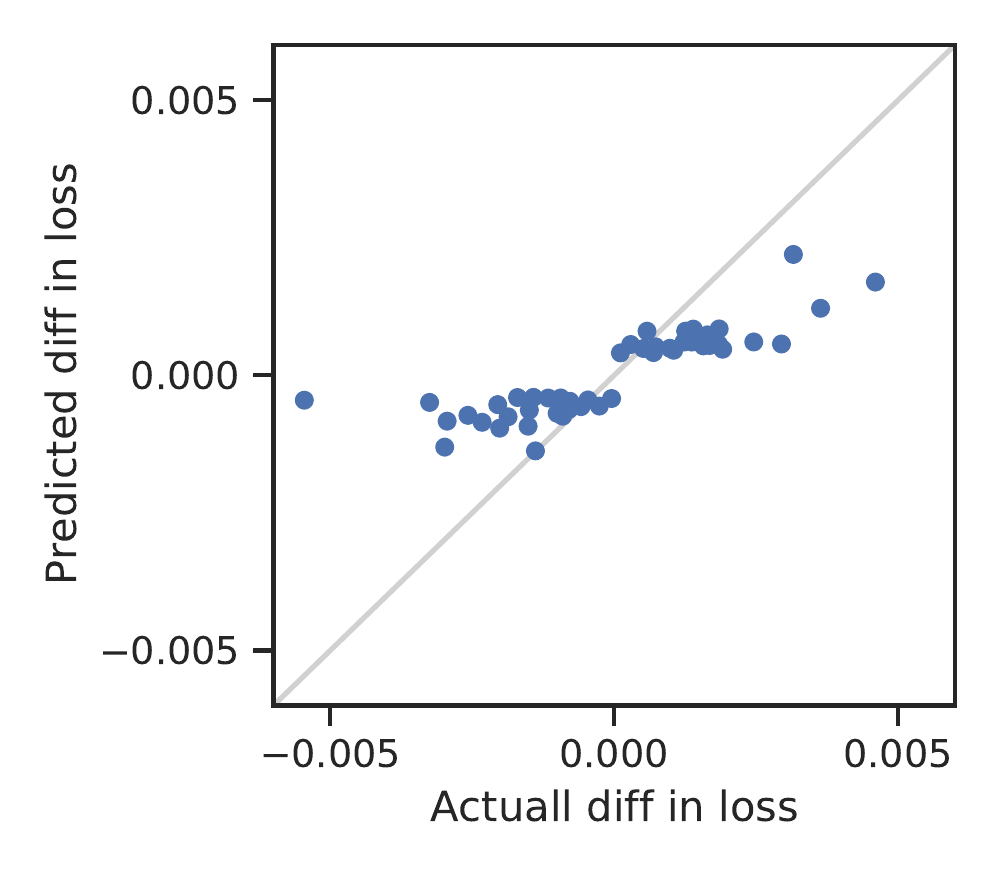}
        \caption{Non-convex \& full-batch $g_i$}
        \label{fig:app_infl_err_emnist}
    \end{subfigure}
     \begin{subfigure}[b]{0.32\textwidth}
        \includegraphics[width=\textwidth]{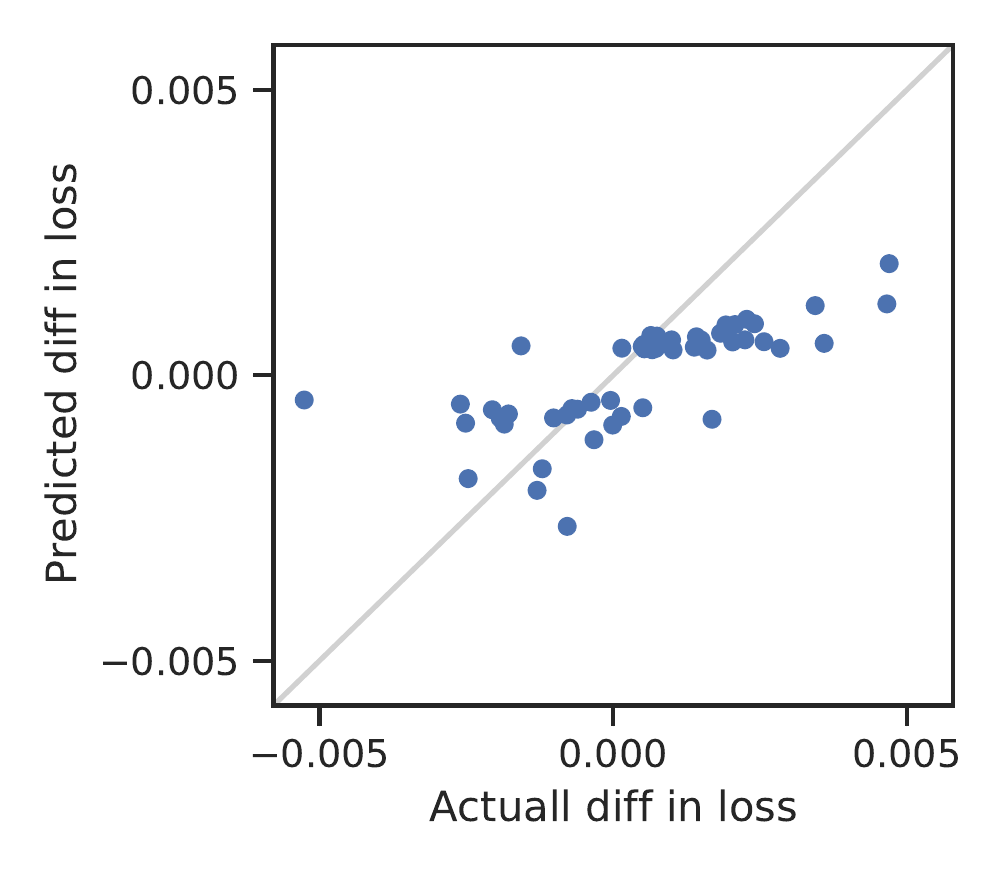}
        \caption{Non-convex \& mini-batch $g_i$}
        \label{fig:app_infl_err_emnist_sto}
    \end{subfigure}
    \caption{Estimation error of influence instances computed on  (a)  logistic regression model trained for the synthetic classification dataset~\citep{Marfoq2021} and (b, c) CNN classifier trained for the subset of EMNIST.
We selected the 50 most influential training instances, predicted by our estimation, and displayed the actual (horizontal) and estimated (vertical) changes in the validation loss after excluding each instance and retraining.} \label{fig:app_infl}
\end{figure}

\begin{table}[t]
	\caption{Quantitative evaluation of influence estimation}
	\label{table:infl}
	\centering
    \small
\begin{tabular}{llccc}
	\toprule
 Model & Dataset    &  Mini-batch $g_i$?          &  R2 Score & F1 Score  \\\midrule
Logistic Regression & Synthetic dataset & No   & 0.99                     & 1.00                      \\ 
	CNN & Subset of EMNIST                  & No &  0.53                   & 1.00                   \\ 
	CNN & Subset of EMNIST                  & Yes &  0.41                   & 0.91                   \\  \bottomrule
\end{tabular}
\end{table}

\subsection{Decentralized Personalization for Deep Learning} \label{app:personal}
\paragraph{Dataset and Model}
In our experiments, we used 10\% of the EMNIST dataset as in \citet{Marfoq2021}.
We generated a personalized version of EMNIST involving a combination of heterogeneity in the distribution in the following fashion.
First, clusters are equally separated into three clusters that use mean and variance to normalize inputs, generating heterogeneous input distributions.
For each cluster, we sampled mean and variance values independently from the uniform distribution with range \([{0,1}]\).
Second, instances are distributed across clients using \(n\)-dimensional Dirichlet distribution of parameter \(\alpha=0.4\) for each label, modeling heterogeneous label distributions. 

Any client splits the assigned dataset set of EMNIST by randomly selecting 20\% of instances to generate the test dataset.
The remaining instances are split into train and validation datasets with a ratio of 3:1.
We use the validation dataset only for the early stopping in outer-optimization of \text{HGP-PL} and \text{HGP-MTL}.

We trained the CNN with the architecture described in \cref{table:cnn_arch_emnist} for all the baselines with a single model and \text{HGP-PL}, and for base-predictor of \text{FedEM} and \text{HGP-MTL}.

\begin{table}[t]
	\caption{CNN architecture for the EMNIST dataset used in \cref{sec:personal}. The input size and output size are specified in the format: channels $\times$ height $\times$ width.}
	\label{table:cnn_arch_emnist}
	\centering
    \small
\begin{tabular}{lcc}
  \toprule
  Torch Operation & Input Size & Output Size \\
  \midrule
   \texttt{Conv2d(1, 32, 5)} & $1 \times 28 \times 28$ & $32 \times 24 \times 24$ \\
   \texttt{MaxPool2d(2, 2)} & $32 \times 24 \times 24$ & $32 \times 12 \times 12$ \\
   \texttt{Conv2d(32, 64, 5)} & $32 \times 12 \times 12$ & $64 \times 8 \times 8$ \\
   \texttt{MaxPool2d(2, 2)} & $64 \times 8 \times 8$ & $64 \times 4 \times 4$ \\
   \texttt{View(-1, 64 * 4 * 4)} & $64 \times 4 \times 4$ & $1024$ \\
   \texttt{Linear(1024, 2048)} & $1024$ & $2048$ \\
   \texttt{Linear(2048, 62)} & $2048$ & $62$ \\
  \bottomrule
\end{tabular}
\end{table}

\paragraph{Training Procedure}
Our \text{HGP-MTL} and \text{HGP-PL} consists of inner- and outer-optimizations.
In our approaches, clients perform decentralized FL as the inner-optimization to obtain $\boldsymbol{x}_i(\boldsymbol{\lambda})$, and then update $\boldsymbol{\lambda}_i$ using the hyper-gradient as the outer-optimization.
We refer to a set of  inner- and outer optimization as an outer-step, and clients perform several outer-steps to complete the update of  $\boldsymbol{\lambda}_i$.

For a fair comparison, in \texttt{FC} and \texttt{FixU} we followed the same setup and procedure in \citet{Marfoq2021} for the inner-optimization.
In \texttt{RandU} and \texttt{RandD}, we performed 600 steps of SGP~\cite{Nedic2016} as the inner-optimization.
SGP iterations employ learning rates of 0.1 for \text{HGP-PL} and the baselines (SGP and Local) and 0.5 for \text{HGP-MTL}.
Learning rates were scheduled to be multiplied by \(0.1\) at the 500-th and 550-th steps.
We set batch size to \(128\) and L2 regularization decay for the inner-parameter to \(0.001\). 

For all \(i\) in the outer-problem, we ran five outer-steps of Adam~\citep{Adam} iterations with \((\beta_1, \beta_2) = (0.9, 0.999)\) from the initial zeros hyperparameters  for both \text{HGP-PL} and \text{HGP-MTL}.
For Adam optimizer, we adopted different learning rates shown in \cref{table:hgp_setting}~(Hyper-learning rate).
For all the approaches, communication environments, and \(i\), \(f_i\) is the sum of the average cross-entropy loss over the local train dataset of the \(i\)-th client and L2 regularization loss of \(\boldsymbol{\lambda}_{i}\) with the rates shown in \cref{table:hgp_setting}~(L2 reg. rate).
We reported the mean test accuracy of an intermediate step that had maximum validation accuracy (i.e., early stopping).

\begin{table}[t]
	\caption{Parameters for the outer-problems in \cref{sec:personal}}
	\label{table:hgp_setting}
	\centering
    \small
\begin{tabular}{clll}
	\toprule
	Network                                          & Method            &  L2 regularization decay & Hyper-learning rate \\\midrule
	\multirow{2}{*}{\texttt{FC} and \texttt{FixU} }  & \text{HGP-PL}      & 0.01                     & 0.1                      \\ 
	                                                 & \text{HGP-MTL}     & 0.01                   & 1.0                      \\ \midrule
	\multirow{2}{*}{\texttt{RandU} and \texttt{RandD}} & \text{HGP-PL}     & 0.001                      & 0.1                   \\
	                                                 & \text{HGP-MTL}    &  0.01                   & 0.1                   \\  \bottomrule
\end{tabular}
\end{table}

\paragraph{Hyperparameter Tuning}
The learning rate for every baseline and our approach was set by performing a grid search on \(10^{-0.5}\), \(10^{-1}\), \(10^{-1.5}\), \(10^{-2}\), \(10^{-2.5}\), and  \(10^{-3}\).
The penalty parameter \(\mu\) for FedProx and pFedMe was optimized by grid search for \(10^{1}\), \(10^{0}\), \(10^{-1}\), and \(10^{-2}\).
We adopted \(\mu=10^{-2}\) for FedProx and \(\mu=10^{0}\) for pFedMe.
Clustered-FL used the same tolerances adopted in the official implementation~\citep{Sattler2020}.
For each method, \cref{table:lr} shows the learning rate that yielded the corresponding results in \cref{table:exp}.

\begin{table}[t]
	\caption{Learning rates used for the experiments in \cref{table:exp}. Base 10 logarithms are reported}
	\label{table:lr}
	\centering
\begin{tabular}{lcc}
	\toprule
	Method &  Fully-Connected (\texttt{FC}) & Static Undirected (\texttt{FixU}) \\ \midrule
	 \text{SGP/FedAvg}   & -1.5 & -1.0 \\
 \text{FedProx}          & -0.5 & n/a \\    
 \text{Local}            & -1.5 & 1.5 \\    
 \text{FedAvg+}          & -1.5 & n/a \\    
 \text{Clustered-FL}     & -1.5 & n/a \\    
 \text{pFedMe}           & -1.0 & n/a \\    
 \text{FedEM}            & -1.0 & -1.0 \\    
\text{HGP-PL}           & -1.5 & -1.5 \\
\text{HGP-MTL}          & -1.0 & -1.0 \\\bottomrule
\end{tabular}
\end{table}

\paragraph{Computational resources}
Our experiments were performed on a CPU/GPU cluster, specifically using Nvidia Tesla V100-SXM2-32GB or Tesla V100-SXM2-32GB GPUs.
\cref{table:compute} illustrates the time duration required for the entire outer-steps in each scenario.
The large time consumption for HGP-PL and HGP-MTL in \texttt{RandU} and \texttt{RandD} communication networks is primarily attributable to the inner-optimization of SGP.
This increased time requirement largely stems from the suboptimal efficiency of our SGP implementation.
We can potentially reduce the computation time on \texttt{RandU} and \texttt{RandD} to those of the \texttt{FC} and \texttt{FixU} communication networks, owing to their more efficient implementation of inner-optimization which are utilized in \citet{Marfoq2021}.

\begin{table}[t]
	\caption{Computation time for our decentralized personalization in \cref{sec:personal}}
	\label{table:compute}
	\centering
    \small
\begin{tabular}{llc}
	\toprule
	Network                                          & Method            &  Simulation time \\\midrule
	\multirow{2}{*}{\texttt{FC}}  & \text{HGP-PL}                          & 1h59min                 \\ 
	                                                 & \text{HGP-MTL}     &  2h47min                    \\
                                                      \midrule
	\multirow{2}{*}{\texttt{FixU}}  & \text{HGP-PL}                     &  2h33min                   \\ 
	                                                 & \text{HGP-MTL}     & 3h33min                  \\
                                                  \midrule
	\multirow{2}{*}{\texttt{RandU}} & \text{HGP-PL}                      & 12h56min                 \\
	                                                 & \text{HGP-MTL}    &  18h3min               \\ 
	                                                  \midrule
\multirow{2}{*}{\texttt{RandD}} & \text{HGP-PL}                         & 14h22min                      \\
	                                                 & \text{HGP-MTL}    & 19h21min                           \\ 
                                                  \bottomrule
\end{tabular}
\end{table}

\paragraph{Learning curve of outer-optimization}
\cref{fig:val_accs} plots the average validation accuracy at each outer-step of HGP-PL and HGP-MTL.
\cref{fig:val_accs} highlights the positive correlation between validation accuracy and the number of outer-steps, confirming that outer-opimizations properly worked and hyper-gradient estimations were sufficiently accurate.

As mentioned in \cref{sec:personal}, we found the performance of outer-optimization with $S=10$ and $S=1$ to be nearly identical (\cref{fig:fixu,fig:randu,fig:randd}).
This suggests that the HGP iteration complexity of $O(MS)$ can potentially be reduced to $O(M)$ for bilevel optimization purposes.

\begin{figure}[htp]
    \centering
    \begin{subfigure}[b]{0.49\textwidth}
        \includegraphics[width=\textwidth]{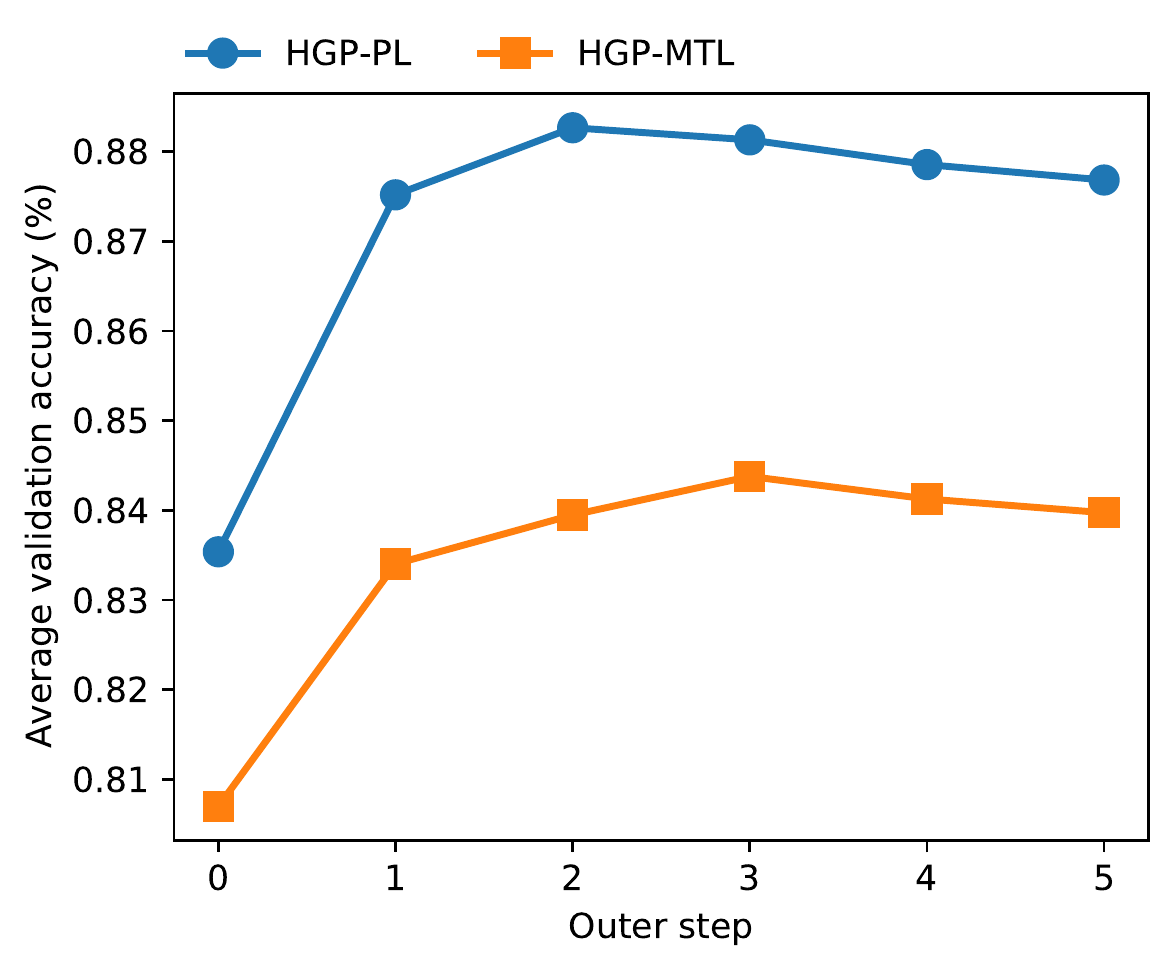}
        \caption{Fully-Connected (\texttt{FC})}
        \label{fig:fc}
    \end{subfigure}
    \hfill
        \begin{subfigure}[b]{0.49\textwidth}
        \includegraphics[width=\textwidth]{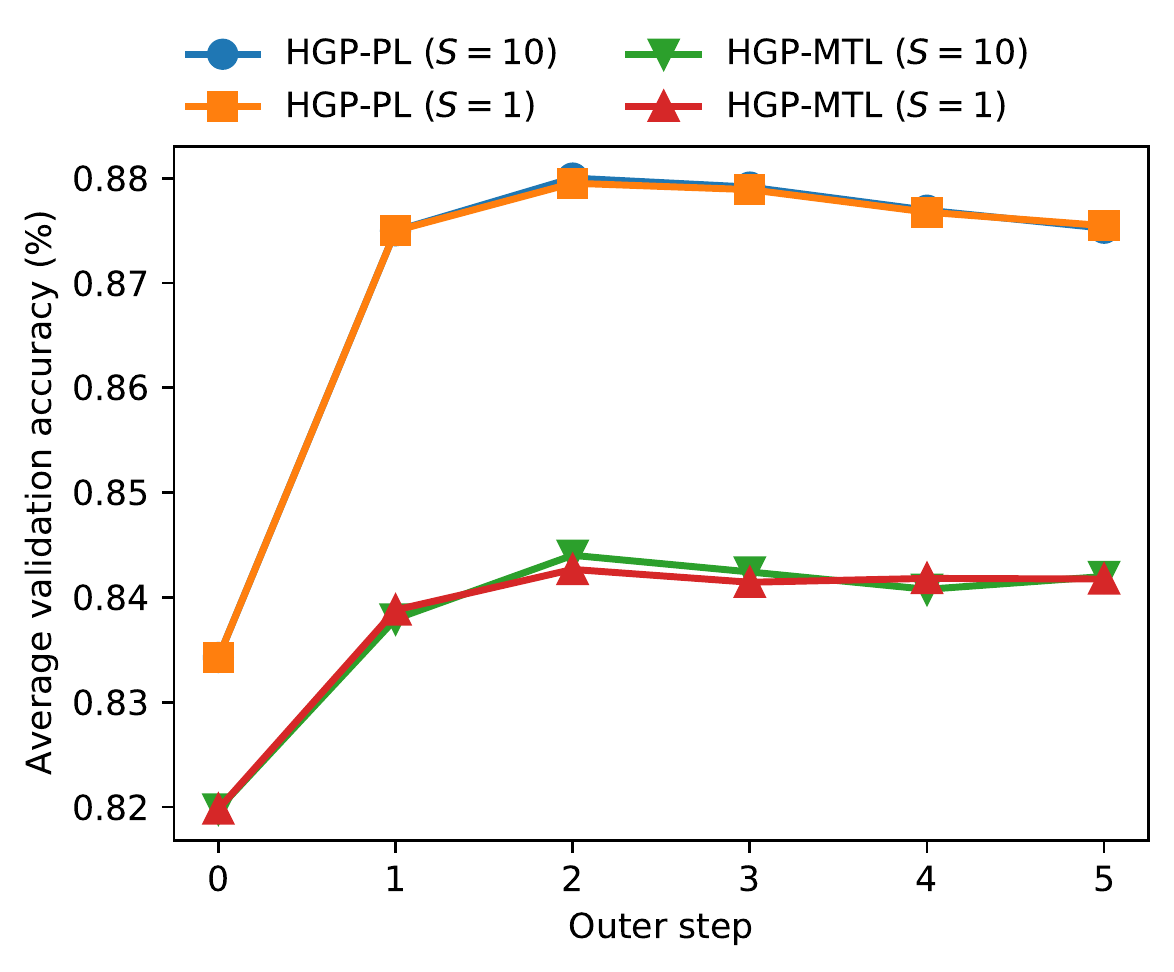}
        \caption{Static Undirected (\texttt{FixU}) }
        \label{fig:fixu}
    \end{subfigure}
    
    \vspace{10pt} 
    \begin{subfigure}[b]{0.49\textwidth}
        \includegraphics[width=\textwidth]{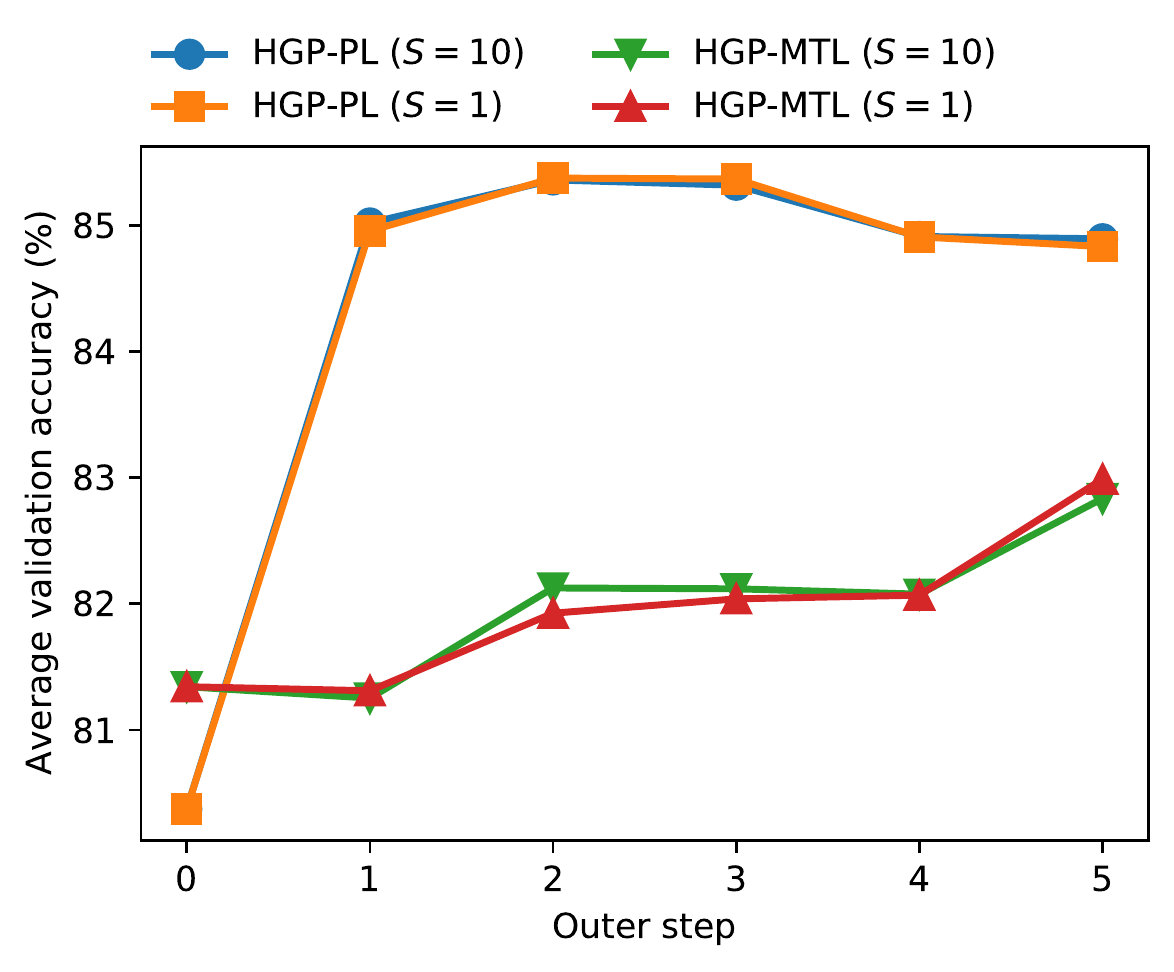}
        \caption{Time-Varying Undirected (\texttt{RandU})}
        \label{fig:randu}
    \end{subfigure}
    \hfill
    \begin{subfigure}[b]{0.49\textwidth}
        \includegraphics[width=\textwidth]{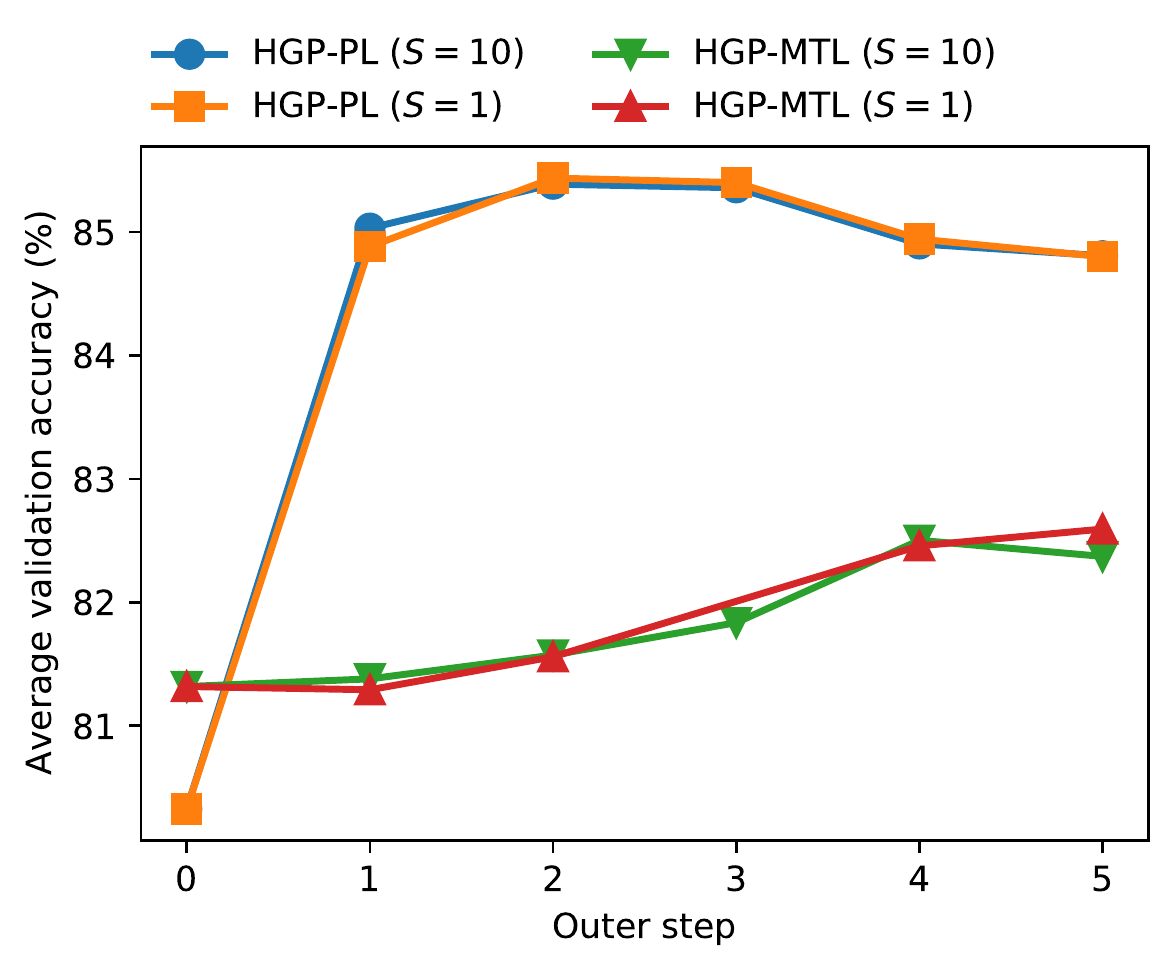}
        \caption{Time-Varying Directed (\texttt{RandD})}
        \label{fig:randd}
    \end{subfigure}
    \caption{Validation accuracies of outer-steps over different communication networks.}
    \label{fig:val_accs}
\end{figure}

\newpage
\section{Proof of \cref{lem:proximal} }

From the optimality condition of \cref{eq:fl}, we will show,
\begin{equation} \label{eq:lemma_alt}
\boldsymbol{x} ( \boldsymbol{\lambda} )=\boldsymbol{\Theta }(\boldsymbol{x} -\eta \partial _{\boldsymbol{x}} g(\boldsymbol{x} ,\boldsymbol{\lambda })) \ \Leftrightarrow \ \partial _{\boldsymbol{x}} g(\boldsymbol{x} (\boldsymbol{\lambda } ),\boldsymbol{\lambda } )=0\ \ \text{s.t.}\ ~\boldsymbol{x}_{i} =\boldsymbol{x}_{j} ,\forall i,j.
\end{equation}
We first show from right to left. Given that $\displaystyle \boldsymbol{x}_{i} =\boldsymbol{x}_{j} ,\forall i,j\ $ and $\sum _{k} \partial _{\boldsymbol{x}_{k}} g_{k} (\boldsymbol{x}_{k} (\boldsymbol{\lambda } ),\boldsymbol{\lambda }_{k} )=0$, the following relation holds,
\begin{equation*}
\frac{1}{n}\sum _{k}\left\{\boldsymbol{x}_{k}(\boldsymbol{\lambda }) -\eta\partial _{\boldsymbol{x}_{k}} g_{k} \left(\boldsymbol{x}_{k} (\boldsymbol{\lambda } ),\boldsymbol{\lambda }_{k}\right)\right\} =\boldsymbol{x}_{i}\left(\boldsymbol{\lambda } \right),\ \forall i.
\end{equation*}
Therefore, 
\begin{align*}
\boldsymbol{\Theta }(\boldsymbol{x} -\eta \partial _{\boldsymbol{x}} g(\boldsymbol{x} ,\boldsymbol{\lambda })) & =\frac{1}{n}\begin{bmatrix}
\boldsymbol{I} & \cdots  & \boldsymbol{I} \\
\vdots  & \ddots  & \vdots \\
\boldsymbol{I} & \cdots  & \boldsymbol{I}
\end{bmatrix}\left[\begin{array}{ c }
\boldsymbol{x}_{1}(\boldsymbol{\lambda }) -\eta\partial_{\boldsymbol{x}_{1}} g_{1} (\boldsymbol{x}_{1}(\boldsymbol{\lambda }) ,\boldsymbol{\lambda }_{1} )\\
\vdots \\
\boldsymbol{x}_{n}(\boldsymbol{\lambda }) -\eta \partial_{ \boldsymbol{x}_{n}} g_{n} (\boldsymbol{x}_{n}(\boldsymbol{\lambda }) ,\boldsymbol{\lambda }_{n} )
\end{array}\right]\\
 & =\left[\begin{array}{ c }
\frac{1}{n}\sum _{k}\left\{\boldsymbol{x}_{k}(\boldsymbol{\lambda }) -\eta \partial_{ \boldsymbol{x}_{k}} g_{k} (\boldsymbol{x}_{k} (\boldsymbol{\lambda } ),\boldsymbol{\lambda }_{k} )\right\}\\
\vdots \\
\frac{1}{n}\sum _{k}\left\{\boldsymbol{x}_{k}(\boldsymbol{\lambda }) -\eta \partial_{ \boldsymbol{x}_{k}} g_{k} (\boldsymbol{x}_{k} (\boldsymbol{\lambda } ),\boldsymbol{\lambda }_{k} )\right\}
\end{array}\right]\\
\  & =\left[\begin{array}{ c }
\boldsymbol{x}_{1}(\boldsymbol{\lambda })\\
\vdots \\
\boldsymbol{x}_{n}(\boldsymbol{\lambda })
\end{array}\right] = \boldsymbol{x} ( \boldsymbol{\lambda} ),
\end{align*}
confirming that the left statement is true. 

Next, we demonstrate from left to right. We will first show that $\displaystyle \boldsymbol{x}_{i} =\boldsymbol{x}_{j}$ for every $i,j$. Let $-\eta \sum _{k}\partial_{ \boldsymbol{x}_{k}} g_{k} (\boldsymbol{x}_{k} (\boldsymbol{\lambda } ),\boldsymbol{\lambda }_{k} ) =\nabla g$. Then the matrix expression of $\boldsymbol{x} ( \boldsymbol{\lambda} )=\boldsymbol{\Theta }(\boldsymbol{x} -\eta \partial _{\boldsymbol{x}} g(\boldsymbol{x} ,\boldsymbol{\lambda }))$ is 
\begin{align} \label{eq:mat_left}
\left[\begin{array}{ c }
\boldsymbol{x}_{1} (\boldsymbol{\lambda } )\\
\vdots \\
\boldsymbol{x}_{n} (\boldsymbol{\lambda } )
\end{array}\right] & =\left[\begin{array}{ c }
\frac{1}{n}\sum _{k}\boldsymbol{x}_{k} (\boldsymbol{\lambda } ) + \nabla g\\
\vdots \\
\frac{1}{n}\sum _{k}\boldsymbol{x}_{k} (\boldsymbol{\lambda } ) + \nabla g
\end{array}\right] .
\end{align}
We therefore have 
\begin{align*}
\nabla g = \boldsymbol{x}_{i} (\boldsymbol{\lambda } )-\frac{1}{n}\sum _{k}\boldsymbol{x}_{k} (\boldsymbol{\lambda } ) & =\boldsymbol{x}_{j} (\boldsymbol{\lambda } )-\frac{1}{n}\sum _{k}\boldsymbol{x}_{k} (\boldsymbol{\lambda } ),\ \forall i,j.
\end{align*}
Thus, the consensus condition $
\boldsymbol{x}_{i} (\boldsymbol{\lambda } ) =\boldsymbol{x}_{j} (\boldsymbol{\lambda } ),\ \forall i,j
$ holds true. 
Because of this consensus condition and \cref{eq:mat_left}, we have $\displaystyle \nabla g = \boldsymbol{x}_{i}(\boldsymbol{\lambda }) -\frac{1}{n}\sum _{k}\boldsymbol{x}_{k}(\boldsymbol{\lambda }) =0,\ \forall i$ . Recalling $ \eta \neq 0$, we obtain $\sum _{k} \partial _{\boldsymbol{x}_{k}} g_{k} (\boldsymbol{x}_{k} (\boldsymbol{\lambda } ),\boldsymbol{\lambda }_{k} )=0$ , confirming that the right statement of \cref{eq:lemma_alt} is true. \qed 

\section{Proof of \cref{thm:error} }

In this section, we provide the proof of \cref{thm:error}.
To begin with, we restate the assumptions and theorem.

\setcounter{assumption}{0}
\setcounter{theorem}{0}

\begin{assumption}[$B$-strong connectivity] \label{ass:connect}
The graph with edge set $\bigcup _{s=tB}^{(t+1)B-1}\mathcal{E}(s)$ is strongly-connected for every $t\geq 0$.
\end{assumption}

\begin{assumption}[Strong convexity] \label{ass:convex}
    For every $i=1,\ldots,n$, $g_{i}$ is strongly convex with respect to the first argument.
\end{assumption}

\begin{assumption}[Jacobian oracle] \label{ass:jacob}
Let $\xi=\{\xi_1,\ldots,\xi_n\}$  and $\hat{g} (\boldsymbol{x},\boldsymbol{\lambda}; \xi)=\sum_{k=1}^{n} g_{k} (\boldsymbol{x}_{k} ,\boldsymbol{\lambda} _{k} ;\xi _{k} ) $.
\begin{itemize}
    \item Unbiasedness \: $\mathbb{E}_\xi [\partial_{\bm x}^2 \hat{g}] = \partial_{\bm x}^2 g$, \: $\mathbb{E}_\xi [\partial_{\bm{x\lambda}}^2 \hat{g}] = \partial_{\bm{x\lambda}}^2 g$
    \item Boundedness \: $\exists \kappa_{\bm x}, \kappa_{\bm \lambda} > 0$, $\forall \bm{x}, \bm{\lambda}, \xi$, 
    
    \hspace{64pt} $\sigma_{\max}( \partial_{\boldsymbol{x}}^2\hat{g} - \partial _{\boldsymbol{x}}^2 g ) \leq \kappa_{\boldsymbol{x}}$, \: $\sigma_{\max}( \partial_{\boldsymbol{x}\boldsymbol{\lambda}}^2\hat{g} - \partial _{\boldsymbol{x}\boldsymbol{\lambda}}^2 g ) \leq \kappa_{\boldsymbol{\lambda}}$
\end{itemize}
\end{assumption}
\begin{theorem} \label{thm:error}
    Let $\alpha= \min_{{\boldsymbol{\lambda} ,\xi}}\left\{\sigma_{\min}( \partial_{\boldsymbol{x}}^2\hat{g})\right\}$, $\beta= \max_{{\boldsymbol{\lambda} ,\xi}}\left\{\sigma_{\max}( \partial_{\boldsymbol{x}\boldsymbol{\lambda}}^2\hat{g})\right\}$.
    Under \crefrange{ass:connect}{ass:jacob} and $0 < \alpha < \nicefrac{1}{\eta}$, there exists constants $0\leq \tau < 1$, $\delta >0$, and $S \geq \nicefrac{\log \frac{\delta}{16\sqrt{n}} \frac{\eta \alpha}{1-\eta \alpha }}{\log \tau}$,such that with probability at least $1-\epsilon $,
\begin{align*}
    \frac{\left\Vert \boldsymbol{v}^{(M)} -\mathrm{d}_{\boldsymbol{\lambda}} f \right\Vert }{\left\Vert \partial _{\boldsymbol{x}}  f   \right\Vert } & \leq \frac{4 \sqrt{\kappa_{\bm \lambda}^2 + \kappa_{\bm x}^2 \frac{\beta^2}{\alpha^2}}}{\alpha} \left(1 + \frac{8 \sqrt{n}}{\delta}\tau^S\right) \sqrt{2 \log \frac{n (d_x + d_\lambda)}{\epsilon}} \\
    & \quad + \frac{2\beta}{\eta \alpha^2} \frac{8 \sqrt{n}}{\delta} \tau^S +\exp (-O(M)) ,
\end{align*}
where $\exp (-O(M))$ denotes the exponentially diminishing term.
\end{theorem}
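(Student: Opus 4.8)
The plan is to control the distance $\lVert \boldsymbol{v}^{(M)} - \mathrm{d}_{\boldsymbol{\lambda}} f\rVert$ by decomposing it into three sources of error: (i) the truncation of the Neumann series at order $M$, (ii) the replacement of the exact averaging operator $\boldsymbol{\Theta}$ by the $S$-step Push-Sum operator $\hat{\boldsymbol{\Theta}}$, and (iii) the stochastic noise from replacing the true Hessians/Jacobians $\partial_{\boldsymbol{x}}^2 g$, $\partial_{\boldsymbol{x}\boldsymbol{\lambda}}^2 g$ by the finite-sample surrogates $\partial_{\boldsymbol{x}}^2 \hat g$, $\partial_{\boldsymbol{x}\boldsymbol{\lambda}}^2 \hat g$. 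First I would write the exact (infinite-series) target $\mathrm{d}_{\boldsymbol{\lambda}} f = -\eta\, \partial^2_{\boldsymbol{x}\boldsymbol{\lambda}} g\, \boldsymbol{\Theta} \sum_{m\ge 0}\big((\boldsymbol{I}-\eta \partial^2_{\boldsymbol{x}}g)\boldsymbol{\Theta}\big)^m \partial_{\boldsymbol{x}} f + \partial_{\boldsymbol{\lambda}} f$ and the HGP output as the analogous finite sum with $\boldsymbol{\Theta}\mapsto\hat{\boldsymbol{\Theta}}$ and the deterministic matrices replaced by their stochastic, per-iteration samples; then insert the intermediate quantity obtained by keeping $\hat{\boldsymbol{\Theta}}$ but using the \emph{true} matrices, so that a triangle inequality splits off term (iii) cleanly from (i)+(ii).

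For term (i), since \cref{ass:convex} forces the eigenvalues of $(\boldsymbol{I}-\eta\partial^2_{\boldsymbol{x}}g)$ (hence of the relevant contraction) to lie in $[0,1)$ under $0<\alpha<1/\eta$, the tail $\sum_{m\ge M}$ is bounded by a geometric series, giving the $\exp(-O(M))$ contribution; I would be slightly careful that the effective contraction factor is bounded away from $1$ uniformly, which follows from $\alpha>0$ and the spectral identity $\langle\boldsymbol{\Theta}\rangle_{ij}=\tfrac1n\boldsymbol{I}$. For term (ii), the key input is a standard geometric convergence rate for Push-Sum over $B$-strongly-connected time-varying digraphs (from \citet{Nedic2016}): $\lVert \hat{\boldsymbol{\Theta}} - \boldsymbol{\Theta}\rVert \le \frac{8\sqrt n}{\delta}\tau^S$ for some $\tau\in[0,1)$ and $\delta>0$. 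I would then propagate this perturbation through the Neumann series: writing $A=(\boldsymbol{I}-\eta\partial^2_{\boldsymbol{x}}g)$, I need $\lVert (\boldsymbol{I}-A\boldsymbol{\Theta})^{-1} - (\boldsymbol{I}-A\hat{\boldsymbol{\Theta}})^{-1}\rVert$, which by the resolvent identity is $\lVert(\boldsymbol{I}-A\hat{\boldsymbol{\Theta}})^{-1} A(\hat{\boldsymbol{\Theta}}-\boldsymbol{\Theta})(\boldsymbol{I}-A\boldsymbol{\Theta})^{-1}\rVert$; the two inverses are bounded by $O(1/(\eta\alpha))$ using $\sigma_{\min}(\partial^2_{\boldsymbol{x}}g)\ge\alpha$, and the $S\ge \log(\tfrac{\delta}{16\sqrt n}\tfrac{\eta\alpha}{1-\eta\alpha})/\log\tau$ condition is exactly what guarantees $\lVert A(\hat{\boldsymbol{\Theta}}-\boldsymbol{\Theta})\rVert$ is small enough for the perturbed inverse to still exist and be controlled (a Neumann-stability threshold). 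This yields the $\frac{2\beta}{\eta\alpha^2}\cdot\frac{8\sqrt n}{\delta}\tau^S$ term after multiplying by $\lVert\partial^2_{\boldsymbol{x}\boldsymbol{\lambda}}g\rVert\le\beta$ and dividing by $\lVert\partial_{\boldsymbol{x}}f\rVert$, plus a correction of the same order from the $\partial^2_{\boldsymbol{x}\boldsymbol{\lambda}}g\,\hat{\boldsymbol{\Theta}}$ vs. $\partial^2_{\boldsymbol{x}\boldsymbol{\lambda}}g\,\boldsymbol{\Theta}$ prefactor.

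Term (iii) is where the $\sqrt{2\log(2n(d_x+d_\lambda)/\epsilon)}$ factor and the high-probability nature come from. Here I would view the stochastic HGP recursion as a noisy version of the deterministic one and show the per-step deviations $\partial^2_{\boldsymbol{x}}\hat g - \partial^2_{\boldsymbol{x}}g$ and $\partial^2_{\boldsymbol{x}\boldsymbol{\lambda}}\hat g - \partial^2_{\boldsymbol{x}\boldsymbol{\lambda}}g$ are zero-mean (unbiasedness in \cref{ass:jacob}) and uniformly bounded in spectral norm by $\kappa_{\boldsymbol{x}},\kappa_{\boldsymbol{\lambda}}$. Summing the error propagation across the (geometrically damped) Neumann iterations, the accumulated noise is a sum/martingale-type object whose "effective variance" scales like $\kappa_{\boldsymbol{\lambda}}^2 + \kappa_{\boldsymbol{x}}^2\beta^2/\alpha^2$ (the $\beta/\alpha$ being the operator norm of the accumulated $\boldsymbol{u}$-iterates times the Jacobian prefactor), and a matrix/vector concentration bound (Hoeffding- or Bernstein-type, union-bounded over the $n(d_x+d_\lambda)$ coordinates) converts this into the stated high-probability bound with the $\sqrt{2\log(\cdot/\epsilon)}$ factor; the extra $(1+\frac{8\sqrt n}{\delta}\tau^S)$ multiplier accounts for the fact that the noise is also propagated through $\hat{\boldsymbol{\Theta}}$ rather than $\boldsymbol{\Theta}$. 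The main obstacle I anticipate is the careful bookkeeping in term (iii): establishing that the partial sums of the noisy iteration form an appropriate martingale difference sequence (the samples $\hat\xi^{(m)}_i,\hat{\hat\xi}^{(m)}_i$ are independent across $m$, so conditioning is clean), getting the right bounded-difference constants at each step after multiplying by the contraction and the $\boldsymbol{u}$-iterate norms, and then assembling these into a single clean concentration statement — rather than the Push-Sum or Neumann-truncation parts, which are comparatively routine once the operator-norm bounds are in place.
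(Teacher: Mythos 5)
Your proposal is correct and follows essentially the same route as the paper: the same three-way error split (Neumann truncation bounded by a geometric tail, Push-Sum error via the $\sigma_{\max}(\boldsymbol{\Theta}-\hat{\boldsymbol{\Theta}})\le \frac{8\sqrt{n}}{\delta}\tau^{S}$ lemma from \citet{nedic2014distributed}, and a martingale-difference concentration bound with Matrix Azuma for the sampled Hessians/Jacobians), with the $S$-condition used exactly as you describe to keep the perturbed contraction factor below one. The only cosmetic difference is that you phrase the Push-Sum perturbation via a resolvent identity while the paper expands it as an explicit telescoping sum interleaved with the stochastic terms; these are equivalent.
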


\subsection{Preliminary}

Let $\hat{\boldsymbol{\Theta}}$ be the $S$-step Push-Sum operator (\cref{alg:pushsum}) such that $\left[ y_i^{(S)} \right]_{i=1}^n = \hat{\boldsymbol{\Theta}} \left[ y_i^{(0)} \right]_{i=1}^n$.

\begin{lemma}
    \label{lem:lipschitz}
    There exists $\delta > 0$ and $0 < \tau < 1$ such that
    \begin{align}
        \sigma_{\max}\left( \boldsymbol{\Theta} - \hat{\boldsymbol{\Theta}} \right) \le \frac{8 \sqrt{n}}{\delta} \tau^S .
    \end{align}
\end{lemma}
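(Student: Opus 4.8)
\textbf{Plan for bounding $\sigma_{\max}(\boldsymbol{\Theta}-\hat{\boldsymbol{\Theta}})$.}

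The plan is to relate the $S$-step Push-Sum operator $\hat{\boldsymbol{\Theta}}$ to its limit $\boldsymbol{\Theta}$ via the known geometric convergence rate of Push-Sum under Assumption~\ref{ass:connect}. First I would recall the standard reformulation of Push-Sum (\cref{alg:pushsum}) in matrix form: the $\boldsymbol{z}$-iteration is $\boldsymbol{z}^{(s)} = (\boldsymbol{A}^{(s)}\otimes \boldsymbol{I}_d)\boldsymbol{z}^{(s-1)}$ where $\boldsymbol{A}^{(s)}$ is the column-stochastic matrix with entries $1/|\mathcal{N}_j^{\mathrm{out}}(s)|$ on the $(i,j)$ position whenever $j\in\mathcal{N}_i^{\mathrm{in}}(s)$, and the weight vector $\boldsymbol{\omega}^{(s)}$ follows the same recursion from $\boldsymbol{\omega}^{(0)}=\boldsymbol{1}$. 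Writing $\boldsymbol{A}^{(S:1)}=\boldsymbol{A}^{(S)}\cdots\boldsymbol{A}^{(1)}$, the per-node output is $\langle\hat{\boldsymbol{\Theta}}\boldsymbol{y}^{(0)}\rangle_i = (1/\omega_i^{(S)})\sum_j \langle\boldsymbol{A}^{(S:1)}\rangle_{ij}\boldsymbol{y}_j^{(0)}$, so $\langle\hat{\boldsymbol{\Theta}}\rangle_{ij} = (\langle\boldsymbol{A}^{(S:1)}\rangle_{ij}/\omega_i^{(S)})\boldsymbol{I}_d$, while $\langle\boldsymbol{\Theta}\rangle_{ij}=(1/n)\boldsymbol{I}_d$ by \cref{eq:lin_op}.

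The key step is then to invoke the classical convergence theorem for products of column-stochastic matrices over $B$-strongly-connected time-varying graphs (e.g.\ \citet{Nedic2016}, building on \citet{nedic2014distributed}): there exist constants $C\ge 0$ and $\lambda\in(0,1)$, depending only on $n$ and $B$, such that $\bigl|\langle\boldsymbol{A}^{(S:1)}\rangle_{ij} - \phi_i^{(S)}\bigr| \le C\lambda^{S}$ for all $i,j$, where $\boldsymbol{\phi}^{(S)}=\boldsymbol{A}^{(S:1)}\boldsymbol{1}/\,\boldsymbol{1}^\top\boldsymbol{A}^{(S:1)}\boldsymbol{1}$ (note $\boldsymbol{1}^\top\boldsymbol{A}^{(S:1)}\boldsymbol{1}=n$ by column-stochasticity and $\omega_i^{(S)}=n\phi_i^{(S)}$), and moreover $\omega_i^{(S)}\ge \delta'$ for a uniform lower bound $\delta'>0$ on the debiasing weights. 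Combining these, $\bigl|\langle\hat{\boldsymbol{\Theta}}\rangle_{ij}-\tfrac1n\bigr| = \bigl|\langle\boldsymbol{A}^{(S:1)}\rangle_{ij}/\omega_i^{(S)} - \phi_i^{(S)}/\omega_i^{(S)}\cdot(\omega_i^{(S)}/n)\cdot\ldots\bigr|$; more cleanly, write $\langle\hat{\boldsymbol\Theta}\rangle_{ij}-\tfrac1n\boldsymbol I_d = \tfrac1{\omega_i^{(S)}}\bigl(\langle\boldsymbol A^{(S:1)}\rangle_{ij}-\phi_i^{(S)}\bigr)\boldsymbol I_d$ using $\phi_i^{(S)}=\omega_i^{(S)}/n$, so each block has operator norm at most $(C/\delta')\lambda^S$. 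Then $\sigma_{\max}(\boldsymbol\Theta-\hat{\boldsymbol\Theta})\le \|\boldsymbol\Theta-\hat{\boldsymbol\Theta}\|_F \le \sqrt{n^2}\cdot (C/\delta')\lambda^S = n(C/\delta')\lambda^S$, and absorbing constants into $\delta$ and $\tau$ (choosing $\tau=\lambda$, $\delta = \delta' \sqrt n /(8C)$ — up to whatever normalization makes the stated $8\sqrt n/\delta$ bound hold) yields the claim.

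The main obstacle is making the constants $\delta$ and $\tau$ precise and ensuring they match the specific form $8\sqrt n/\delta\cdot\tau^S$ used downstream in \cref{thm:error}: this requires carefully citing or re-deriving (i) the uniform geometric decay rate $\lambda$ for backward products of the particular column-stochastic matrices generated by \cref{alg:pushsum}, and (ii) the uniform positive lower bound on $\omega_i^{(S)}$ — both of which follow from Assumption~\ref{ass:connect} and the self-loop assumption $i\in\mathcal N_i^{\mathrm{in}}(s)\cap\mathcal N_i^{\mathrm{out}}(s)$, but whose explicit dependence on $n$ and $B$ is somewhat delicate. I would likely just cite the relevant perturbation/ergodicity lemma from the Push-Sum literature rather than re-prove it, and note that the crude $\|\cdot\|_F$-vs-$\sigma_{\max}$ bound contributes the $\sqrt n$ factor while the $8$ is a convenient absolute constant absorbing the rest.
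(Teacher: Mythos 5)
Your proposal is correct in substance but takes a genuinely different route from the paper. The paper's proof is a two-line citation: it invokes Lemma~1 of \citet{nedic2014distributed} directly in the form $\|\boldsymbol{\Theta}y-\hat{\boldsymbol{\Theta}}y\|_\infty\le\frac{8\|y\|_1}{\delta}\tau^S$ and then converts norms via $\|y\|_1\le\sqrt n\|y\|$, which is exactly where the constants $8\sqrt n/\delta$ come from. You instead unfold Push-Sum into the backward product $\boldsymbol{A}^{(S:1)}$ of column-stochastic matrices, control each block of $\hat{\boldsymbol{\Theta}}$ elementwise via the ergodicity theorem (using the clean identity $\phi_i^{(S)}=\omega_i^{(S)}/n$, which is correct) together with a uniform lower bound on the debiasing weights, and then pass to $\sigma_{\max}$ through the Frobenius norm. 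Both arguments rest on the same underlying ergodicity result, so neither is more elementary; what yours buys is an airtight final step, since $\sigma_{\max}(\boldsymbol{\Theta}-\hat{\boldsymbol{\Theta}})\le\|\cdot\|_F\le n\max_{ij}|\cdot|$ is a valid direction of inequality, whereas the paper's own concluding appeal to $\|Ay\|_\infty\le\sigma_{\max}(A)\|y\|$ points the wrong way for upper-bounding $\sigma_{\max}$ from an $\ell_\infty$ estimate. Two small points to tighten: (i) when taking the Frobenius norm you should exploit that $\boldsymbol{\Theta}-\hat{\boldsymbol{\Theta}}=\boldsymbol{E}\otimes\boldsymbol{I}_d$ for a scalar $n\times n$ matrix $\boldsymbol{E}$, so that $\sigma_{\max}(\boldsymbol{E}\otimes\boldsymbol{I}_d)=\sigma_{\max}(\boldsymbol{E})\le\|\boldsymbol{E}\|_F\le n\max_{ij}|E_{ij}|$ and no spurious $\sqrt d$ factor appears; (ii) your constants $n(C/\delta')\lambda^S$ do not literally match $\frac{8\sqrt n}{\delta}\tau^S$, but since $\delta$ and $\tau$ are existentially quantified (here and in \cref{thm:error}) you may simply redefine $\delta:=8\delta'/(\sqrt n\,C)$ and $\tau:=\lambda$, as you note.
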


\begin{proof}
From Lemma 1 of \cite{nedic2014distributed} and \cref{ass:connect}, there exists $\delta > 0$ and $0 < \tau < 1$ such that
\begin{align*}
    \left\| \boldsymbol{\Theta} y - \hat{\boldsymbol{\Theta}} y \right\|_\infty \le \frac{8 \|y\|_1}{\delta} \tau^S, \: \forall y \in \mathbb{R}^n .
\end{align*}
We can conclude the claim by recalling
\begin{align*}
    \left\| A y \right\|_\infty \le \sigma_{\max}\left( A \right) \|y\| , \: \forall A \in \mathbb{R}^{n \times n} , \quad \text{and} \quad \|y\|_1 \le \sqrt{n} \|y\| .
\end{align*}
\end{proof}

\begin{remark}[Explicit formula of $\bm{v}^{(m)}$] \label{rem:vm}
    $\bm{v}^{(M)}$ in \cref{alg:hgp} is expressed as
    \begin{align}
        \bm{v}^{(M)} = - \eta \sum_{m=0}^{M-1} \partial_{\bm{x \lambda}}^2 \hat{g}^{(m)} \hat{\boldsymbol{\Theta}} \prod_{m'=0}^{m-1} \left( \left( I - \eta \partial_{\bm x}^2 \hat{g}^{(m')} \right) \hat{\boldsymbol{\Theta}} \right) \partial_{\bm x} f +\partial _{\boldsymbol{\lambda}} f ,
    \end{align}
    where
    \begin{align*}
        \partial_{\bm{x \lambda}}^2 \hat{g}^{(m)} &= \mathrm{diag}\left(\left[\partial _{\boldsymbol{x}_i\boldsymbol{\lambda}_i} ^2 g_i(\boldsymbol{x}_i(\boldsymbol{\lambda}), \boldsymbol{\lambda}_i; \hat{\xi}_i^{(m)})\right]_{i=1}^n\right) \in \mathbb{R}^{n d_{\bm \lambda} \times n d_{\bm{x}}} , \\
        \partial_{\bm{x}}^2 \hat{g}^{(m)} &= \mathrm{diag}\left(\left[\partial _{\boldsymbol{x}_i} ^2 g_i(\boldsymbol{x}_i(\boldsymbol{\lambda}), \boldsymbol{\lambda}_i; \hat{\xi}_i^{(m)})\right]_{i=1}^n\right) \in \mathbb{R}^{n d_{\bm x} \times n d_{\bm{x}}} ,
    \end{align*}
    are block-diagonal matrices, and $\prod_{m=0}^{M-1} A^{(m)} = A^{(M-1)} A^{(M-2)} \ldots A^{(0)}$ for matrices $A^{(0)}, A^{(1)}, \ldots, A^{(M-1)}$. We also define $\prod_{m=0}^{-1} = I$ for convention.
\end{remark}

\subsection{Proof of \cref{thm:error}}

Recall that the true hyper-gradient $\mathrm{d}_{\boldsymbol{\lambda}} f$ is given as
\begin{equation*}
\mathrm{d}_{\boldsymbol{\lambda}} f =-\eta \partial _{\boldsymbol{x}\boldsymbol{\lambda}}^{2} g   \boldsymbol{\Theta} \sum\limits _{m=0}^{\infty }\left(\left( \boldsymbol{I}-\eta \partial _{\boldsymbol{x}}^{2} g   \right) \boldsymbol{\Theta} \right)^{m} \partial _{\boldsymbol{x}} f  +\partial _{\boldsymbol{\lambda}} f .
\end{equation*}
From \cref{rem:vm}, we have
\begin{align*}
    v^{(M)} - \mathrm{d}_{\boldsymbol{\lambda}} f =& - \underbrace{\eta \sum_{m=0}^{M-1} \left\{ \partial_{\bm{x \lambda}}^2 \hat{g}^{(m)} \hat{\boldsymbol{\Theta}} \prod_{m'=0}^{m-1} \left( \left( I - \eta \partial_{\bm x}^2 \hat{g}^{(m')} \right) \hat{\boldsymbol{\Theta}} \right) \partial_{\bm x} f -  \partial _{\boldsymbol{x}\boldsymbol{\lambda}}^{2} g   \boldsymbol{\Theta} \left(\left( \boldsymbol{I}-\eta \partial _{\boldsymbol{x}}^{2} g   \right) \boldsymbol{\Theta} \right)^{m} \partial _{\boldsymbol{x}} f \right\}}_{\text{(A)}} \\
    & +\underbrace{ \eta \partial _{\boldsymbol{x}\boldsymbol{\lambda}}^{2} g   \boldsymbol{\Theta} \sum\limits _{m=M}^{\infty }\left(\left( \boldsymbol{I}-\eta \partial _{\boldsymbol{x}}^{2} g   \right) \boldsymbol{\Theta} \right)^{m} \partial _{\boldsymbol{x}} f }_{\text{(B)}}.
\end{align*}

We can bound term (B) as
\begin{align*}
    \left\| \text{(B)} \right\| & \le \eta \beta \sum_{m=M+1}^\infty (1 - \eta \alpha)^m \|\partial _{\boldsymbol{x}} f\| = \eta \beta \frac{(1 - \eta \alpha)^{M}}{\eta \alpha} \|\partial _{\boldsymbol{x}} f\| = \exp\left(-O(M)\right) .
\end{align*}

To bound the term (A), we decompose the terms as
\begin{align*}
    \text{(A)} & = \underbrace{\eta \sum_{m=0}^{M-1} \left(\partial_{\bm{x \lambda}}^2 \hat{g}^{(m)} - \partial_{\bm{x \lambda}}^2 g\right) \hat{\boldsymbol{\Theta}} \prod_{m'=0}^{m-1} \left( \left( I - \eta \partial_{\bm x}^2 \hat{g}^{(m')} \right) \hat{\boldsymbol{\Theta}} \right) \partial _{\boldsymbol{x}} f}_{\text{(A1)}} \\
    & \quad + \underbrace{\eta \sum_{m=0}^{M-1} \partial_{\bm{x \lambda}}^2 g \left(\hat{\boldsymbol{\Theta}} - \boldsymbol{\Theta}\right) \prod_{m'=0}^{m-1} \left( \left( I - \eta \partial_{\bm x}^2 \hat{g}^{(m')} \right) \hat{\boldsymbol{\Theta}} \right) \partial _{\boldsymbol{x}} f}_{\text{(A2)}} \\
    & \quad - \underbrace{\eta \sum_{m=1}^{M-1} \partial_{\bm{x \lambda}}^2 g \boldsymbol{\Theta} \sum_{t=0}^{m-1} \left(\left( \boldsymbol{I}-\eta \partial _{\boldsymbol{x}}^{2} g   \right) \boldsymbol{\Theta} \right)^{t} \eta \left(\partial_{\bm{x}}^2 \hat{g}^{(m-t-1)} - \partial_{\bm{x}}^2 g\right) \hat{\boldsymbol{\Theta}} \prod_{m'=0}^{m-t-2} \left( \left( I - \eta \partial_{\bm x}^2 \hat{g}^{(m')} \right) \hat{\boldsymbol{\Theta}} \right) \partial _{\boldsymbol{x}} f}_{\text{(A3)}} \\
    & \quad + \underbrace{\eta \sum_{m=1}^{M-1} \partial_{\bm{x \lambda}}^2 g \boldsymbol{\Theta} \sum_{t=0}^{m-1} \left(\left( \boldsymbol{I}-\eta \partial _{\boldsymbol{x}}^{2} g   \right) \boldsymbol{\Theta} \right)^{t} \left( \boldsymbol{I}-\eta \partial _{\boldsymbol{x}}^{2} g   \right) \left( \hat{\boldsymbol{\Theta}} - \boldsymbol{\Theta} \right) \prod_{m'=0}^{m-t-2} \left( \left( I - \eta \partial_{\bm x}^2 \hat{g}^{(m')} \right) \hat{\boldsymbol{\Theta}} \right) \partial _{\boldsymbol{x}} f}_{\text{(A4)}} .
\end{align*}
We can thus bound (A) as
\begin{align*}
    \| \text{(A)} \| \le \|\text{(A1)} - \text{(A3)}\| + \|\text{(A2)} + \text{(A4)}\|.
\end{align*}

Below, we bound each term.

\paragraph{Bounding $\|\text{(A2)} + \text{(A4)}\|$}
The assumptions $0 \le \alpha \le \nicefrac{1}{\eta}$ and $S \geq \nicefrac{\log \frac{\delta}{16\sqrt{n}} \frac{\eta \alpha}{1-\eta \alpha }}{\log \tau}$ ensures
\begin{align}
    & 1 + \frac{8 \sqrt{n}}{\delta} \tau^S \le 1 + \frac{8 \sqrt{n}}{\delta} \frac{\delta}{16\sqrt{n}} \frac{\eta \alpha}{1-\eta \alpha } = 1 + \frac{1}{2} \frac{\eta \alpha}{1-\eta \alpha } = \frac{1}{2}\left( 1 + \frac{1}{1 - \eta \alpha} \right) , \label{eq:S1} \\
    & \left(1 - \eta \alpha\right) \left(1 + \frac{8 \sqrt{n}}{\delta} \tau^S\right) \le (1 - \eta \alpha) \frac{1}{2}\left( 1 + \frac{1}{1 - \eta \alpha} \right) = 1 - \frac{\eta \alpha}{2} < 1 . \label{eq:S2}
\end{align}

By \cref{lem:lipschitz}, we have
\begin{align}
    & \| \text{(A2)} + \text{(A4)} \| \nonumber \\
    & \le \eta \sum_{m=0}^{M-1} \beta \frac{8 \sqrt{n}}{\delta} \tau^S \left(\left(1 - \eta \alpha\right) \left(1 + \frac{8 \sqrt{n}}{\delta} \tau^S\right)\right)^{m} \|\partial_{\bm x} f\| \nonumber \\
    & \quad + \eta \sum_{m=1}^{M-1} \beta \sum_{t=0}^{m-1} (1 - \eta \alpha)^{t+1} \frac{8 \sqrt{n}}{\delta} \tau^S \left(\left(1 - \eta \alpha\right) \left(1 + \frac{8 \sqrt{n}}{\delta} \tau^S\right)\right)^{m-t-1} \|\partial_{\bm x} f\| \nonumber \\
    & = \|\partial_{\bm x} f\| \eta \beta \frac{8 \sqrt{n}}{\delta} \tau^S \sum_{m=0}^{M-1} \left(\left(1 - \eta \alpha\right) \left(1 + \frac{8 \sqrt{n}}{\delta} \tau^S\right)\right)^{m} \nonumber \\
    & \quad + \|\partial_{\bm x} f\| \eta \beta \frac{8 \sqrt{n}}{\delta} \tau^S \sum_{m=1}^{M-1} (1 - \eta \alpha)^{m} \sum_{t=0}^{m-1} \left(1 + \frac{8 \sqrt{n}}{\delta} \tau^S\right)^{m-t-1}  \nonumber \\
    & = \|\partial_{\bm x} f\| \eta \beta \frac{8 \sqrt{n}}{\delta} \tau^S \sum_{m=0}^{M-1} \left(\left(1 - \eta \alpha\right) \left(1 + \frac{8 \sqrt{n}}{\delta} \tau^S\right)\right)^{m} \nonumber \\
    & \quad + \|\partial_{\bm x} f\| \eta \beta \frac{8 \sqrt{n}}{\delta} \tau^S \sum_{m=1}^{M-1} (1 - \eta \alpha)^{m} \frac{\left(1 + \frac{8 \sqrt{n}}{\delta} \tau^S\right)^m - 1}{\frac{8 \sqrt{n}}{\delta} \tau^S}  \nonumber \\
    & = \|\partial_{\bm x} f\| \eta \beta \frac{8 \sqrt{n}}{\delta} \tau^S \frac{1}{1 - \left(1 - \eta \alpha\right) \left(1 + \frac{8 \sqrt{n}}{\delta} \tau^S\right)} \nonumber \\
    & \quad + \|\partial_{\bm x} f\| \eta \beta \left(1 - \eta \alpha\right) \left\{\frac{1 + \frac{8 \sqrt{n}}{\delta} \tau^S}{1 - \left(1 - \eta \alpha\right) \left(1 + \frac{8 \sqrt{n}}{\delta} \tau^S\right)} - \frac{1}{1 - (1 - \eta \alpha)}\right\} + \exp\left( - O(M) \right) \nonumber \\
    & = \|\partial_{\bm x} f\| \eta \beta \frac{8 \sqrt{n}}{\delta} \tau^S \frac{1}{1 - \left(1 - \eta \alpha\right) \left(1 + \frac{8 \sqrt{n}}{\delta} \tau^S\right)} \nonumber \\
    & \quad + \|\partial_{\bm x} f\| \frac{\beta}{\alpha} \left(1 - \eta \alpha\right) \frac{8 \sqrt{n}}{\delta} \tau^S \frac{1}{1 - \left(1 - \eta \alpha\right) \left(1 + \frac{8 \sqrt{n}}{\delta} \tau^S\right)} + \exp\left( - O(M) \right) \nonumber \\
    & \le \|\partial_{\bm x} f\| \frac{2\beta}{\alpha} \frac{8 \sqrt{n}}{\delta} \tau^S + \|\partial_{\bm x} f\| \frac{2\beta}{\alpha} \frac{1 - \eta \alpha}{\eta \alpha} \frac{8 \sqrt{n}}{\delta} \tau^S + \exp\left( - O(M) \right) \nonumber \\
    & = \|\partial_{\bm x} f\| \frac{2\beta}{\eta \alpha^2} \frac{8 \sqrt{n}}{\delta} \tau^S + \exp\left( - O(M) \right) , \label{eq:a24} 
\end{align}
where we used \cref{eq:S2} for the last inequality.

\paragraph{Bounding $\|\text{(A1)} - \text{(A3)}\|$}
We use Matrix Azuma's inequality to bound $\|\text{(A1)} - \text{(A3)}\|$.
Recall that we can express $\text{(A1)} - \text{(A3)}$ as 
\begin{align} \label{eq:A13}
    \text{(A1)} - \text{(A3)} = \left(\sum_{m=0}^{M-1} X^{(m)} + \sum_{m=0}^{M-2} Y^{(m)} \right) \partial_{\bm x} f
\end{align}
where
\begin{align*}
    X^{(m)} &= \eta \left(\partial_{\bm{x \lambda}}^2 \hat{g}^{(m)} - \partial_{\bm{x \lambda}}^2 g\right) \hat{\boldsymbol{\Theta}} \prod_{m'=0}^{m-1} \left( \left( I - \eta \partial_{\bm x}^2 \hat{g}^{(m')} \right) \hat{\boldsymbol{\Theta}} \right) , \\
    Y^{(m)} &= - \eta^2 \sum_{t=m+1}^{M-2} \partial_{\bm{x \lambda}}^2 g \boldsymbol{\Theta} \left(\left( \boldsymbol{I}-\eta \partial _{\boldsymbol{x}}^{2} g   \right) \boldsymbol{\Theta} \right)^{t-m-1} \left(\partial_{\bm{x}}^2 \hat{g}^{(m)} - \partial_{\bm{x}}^2 g\right) \hat{\boldsymbol{\Theta}} \prod_{m'=0}^{m-1} \left( \left( I - \eta \partial_{\bm x}^2 \hat{g}^{(m')} \right) \hat{\boldsymbol{\Theta}} \right) .
\end{align*}
By the independence of $\hat{\xi}_i^{(m)}$ and $\hat{\hat{\xi}}_i^{(m)}$ in \cref{alg:hgp}, we have 
\begin{align*}
    & \mathbb{E}\left[X^{(m)} \mid X^{(0)}, \ldots, X^{(m-1)}, Y^{(0)}, \ldots, Y^{(m-1)}\right] = 0 , \\
    & \mathbb{E}\left[Y^{(m)} \mid X^{(0)}, \ldots, X^{(m)}, Y^{(0)}, \ldots, Y^{(m-1)}\right] = 0 ,
\end{align*}
from the fact that $\mathbb{E}[\partial_{\bm{x \lambda}}^2 \hat{g}^{(m)}] = \partial_{\bm{x \lambda}}^2 g$ and $\mathbb{E}[\partial_{\bm{x}}^2 \hat{g}^{(m)}] = \partial_{\bm{x}}^2 g$.
Thus, we can apply Matrix Azuma's inequality to \cref{eq:A13} and obtain, with probability at least $1 - \epsilon$, 
\begin{align*}
    \| \text{(A1)} - \text{(A3)} \| \le \|\partial_{\bm x} f\| \sqrt{8 s^2 \log \frac{n (d_x + d_\lambda)}{\epsilon}} ,
\end{align*}
where
\begin{align*}
    & s^2 \le \left(\sum_{m=0}^{M-1} \max \sigma_{\max}\left( X^{(m)} {X^{(m)}}^\top \right) + \sum_{m=0}^{M-2} \max \sigma_{\max}\left( Y^{(m)} {Y^{(m)}}^\top \right) \right) .
\end{align*}
By \cref{lem:lipschitz} and \cref{ass:jacob}, we can see that
\begin{align*}
    \sigma_{\max}\left( X^{(m)} {X^{(m)}}^\top \right) \le \left\{ \eta \kappa_{\bm \lambda} \left(1 - \eta \alpha\right)^m \left(1 + \frac{8 \sqrt{n}}{\delta}\tau^S\right)^{m+1} \right\}^2 , 
\end{align*}
and
\begin{align*}
    \sigma_{\max}\left( Y^{(m)} {Y^{(m)}}^\top \right) & \le \left\{ \eta^2 \beta \sum_{t=m+1}^{M-2} (1 - \eta \alpha)^{t-m-1} \kappa_{\bm x} \left(1 - \eta \alpha\right)^m \left(1 + \frac{8 \sqrt{n}}{\delta}\tau^S\right)^{m+1} \right\}^2 \\
    & = \left\{\kappa_{\bm x} \frac{\eta \beta}{\alpha} \left(1 - \eta \alpha\right)^m \left(1 + \frac{8 \sqrt{n}}{\delta}\tau^S\right)^{m+1} \right\}^2 + \exp\left(-O(M)\right) .
\end{align*}
Hence, we have
\begin{align*}
    s^2 & \le \eta^2 \kappa_{\bm \lambda}^2 \sum_{m=0}^{M-1} \left(1 - \eta \alpha\right)^{2m} \left(1 + \frac{8 \sqrt{n}}{\delta}\tau^S\right)^{2m+2} \\
    & \quad + \kappa_{\bm x}^2 \frac{\eta^2 \beta^2}{\alpha^2} \sum_{m=0}^{M-2} \left(1 - \eta \alpha\right)^{2m} \left(1 + \frac{8 \sqrt{n}}{\delta}\tau^S\right)^{2m+2} + \exp\left(-O(M)\right) \\
    &= \eta^2 \left(\kappa_{\bm \lambda}^2 + \kappa_{\bm x}^2 \frac{\beta^2}{\alpha^2} \right) \frac{1}{1 - \left(1 - \eta \alpha\right)^2 \left(1 + \frac{8 \sqrt{n}}{\delta}\tau^S\right)^2} \left(1 + \frac{8 \sqrt{n}}{\delta}\tau^S\right)^2 + \exp\left(-O(M)\right) .
\end{align*}
Here, by \cref{eq:S1}, we have
\begin{align*}
    1 - (1 - \eta \alpha)^2 \left(1 + \frac{8 \sqrt{n}}{\delta} \tau^S \right)^2 \ge 1 - \left(1 - \frac{\eta \alpha}{2} \right)^2 \ge \frac{\eta^2 \alpha^2}{4} .
\end{align*}
We can thus conclude that
\begin{align}
    \| \text{(A1)} - \text{(A3)} \| \le \|\partial_{\bm x} f\| \frac{4 \sqrt{\kappa_{\bm \lambda}^2 + \kappa_{\bm x}^2 \frac{\beta^2}{\alpha^2}}}{\alpha} \left(1 + \frac{8 \sqrt{n}}{\delta}\tau^S\right) \sqrt{2 \log \frac{n (d_x + d_\lambda)}{\epsilon}} + \exp\left(-O(M)\right)  . \label{eq:a13}
\end{align}

Finally, \cref{thm:error} follows from \cref{eq:a24} and \cref{eq:a13}.

\section{Extensions of \cref{thm:error}}

Here, we present a few extensions of \cref{thm:error}.

\subsection{Use of Exact Jacobians in \cref{alg:hgp}}
In \cref{alg:hgp}, we assume that we only have access to the unbiased estimates of Jacobians $\partial_{\bm{x\lambda}}^2 \hat{g}$ and $\partial_{\bm{x}}^2 \hat{g}$.
If we use the true Jacobians $\partial_{\bm{x\lambda}}^2 g$ and $\partial_{\bm{x}}^2 g$, the terms (A1) and (A3) gets zeros in the proof of \cref{thm:error}.
Thus, only the term $\|\text{(A2)} + \text{(A4)}\|$ remains.
\begin{corollary}\label{cor:true_jacob}
    Suppose that we run \cref{alg:hgp} using the true Jacobians $\partial_{\bm{x\lambda}}^2 g$ and $\partial_{\bm{x}}^2 g$.
    Then, under the same assumptions as \cref{thm:error}, we have
    \begin{align*}
        \frac{\left\Vert \boldsymbol{v}^{(M)} -\mathrm{d}_{\boldsymbol{\lambda}} f \right\Vert }{\left\Vert \partial _{\boldsymbol{x}}  f   \right\Vert } \le \frac{2\beta}{\eta \alpha^2} \frac{8 \sqrt{n}}{\delta} \tau^S +\exp (-O(M)) .
    \end{align*}
\end{corollary}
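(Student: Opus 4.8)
The plan is to piggyback almost entirely on the proof of \cref{thm:error}, which already isolates the stochastic contribution into exactly the two terms that now vanish. First I would recall the decomposition established there: $\boldsymbol{v}^{(M)} - \mathrm{d}_{\boldsymbol{\lambda}} f = -\text{(A)} + \text{(B)}$, where (B) is the Neumann-truncation tail and (A) is split further as $\text{(A)} = \bigl(\text{(A1)} - \text{(A3)}\bigr) + \bigl(\text{(A2)} + \text{(A4)}\bigr)$. By construction, (A1) and (A3) are precisely the blocks carrying the factors $\partial_{\boldsymbol{x\lambda}}^2 \hat{g}^{(m)} - \partial_{\boldsymbol{x\lambda}}^2 g$ and $\partial_{\boldsymbol{x}}^2 \hat{g}^{(m)} - \partial_{\boldsymbol{x}}^2 g$, while (A2) and (A4) carry only the Push-Sum approximation error $\hat{\boldsymbol{\Theta}} - \boldsymbol{\Theta}$.

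Next I would observe that when \cref{alg:hgp} is executed with the exact Jacobians $\partial_{\boldsymbol{x\lambda}}^2 g$ and $\partial_{\boldsymbol{x}}^2 g$ in place of $\partial_{\boldsymbol{x\lambda}}^2 \hat{g}^{(m)}$ and $\partial_{\boldsymbol{x}}^2 \hat{g}^{(m)}$, every difference $\partial_{\boldsymbol{x\lambda}}^2 \hat{g}^{(m)} - \partial_{\boldsymbol{x\lambda}}^2 g$ and $\partial_{\boldsymbol{x}}^2 \hat{g}^{(m)} - \partial_{\boldsymbol{x}}^2 g$ is identically zero, so $\text{(A1)} = \text{(A3)} = 0$ and the Matrix-Azuma argument is no longer invoked; hence $\text{(A)} = \text{(A2)} + \text{(A4)}$. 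It then remains to reuse the deterministic bound \cref{eq:a24}, and for that I would check that the spectral estimates behind it survive the substitution: $\partial_{\boldsymbol{x}}^2 g = \mathbb{E}_\xi[\partial_{\boldsymbol{x}}^2 \hat{g}] \succeq \alpha \boldsymbol{I}$ and $\sigma_{\max}(\partial_{\boldsymbol{x\lambda}}^2 g) \le \beta$ follow from \cref{ass:jacob} together with Jensen's inequality, so $\sigma_{\max}(\boldsymbol{I} - \eta\,\partial_{\boldsymbol{x}}^2 g) \le 1 - \eta\alpha$ and $\sigma_{\max}(\hat{\boldsymbol{\Theta}}) \le 1 + \tfrac{8\sqrt{n}}{\delta}\tau^S$ (from \cref{lem:lipschitz} and $\sigma_{\max}(\boldsymbol{\Theta}) = 1$) continue to hold verbatim. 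These are the only inequalities used to derive \cref{eq:a24}, so $\|\text{(A2)} + \text{(A4)}\| \le \|\partial_{\boldsymbol{x}} f\|\,\tfrac{2\beta}{\eta\alpha^2}\,\tfrac{8\sqrt{n}}{\delta}\tau^S + \exp(-O(M))$ still holds.

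Finally I would combine everything: by the triangle inequality $\|\boldsymbol{v}^{(M)} - \mathrm{d}_{\boldsymbol{\lambda}} f\| \le \|\text{(A2)} + \text{(A4)}\| + \|\text{(B)}\|$, and since $\|\text{(B)}\| = \exp(-O(M))$ as in the proof of \cref{thm:error}, dividing through by $\|\partial_{\boldsymbol{x}} f\|$ yields the claimed bound; the statement is now deterministic because the only probabilistic ingredient — Matrix Azuma applied to $\text{(A1)} - \text{(A3)}$ — has been removed. I do not expect a genuine obstacle here: the single point requiring care is the bookkeeping verification that the derivation of \cref{eq:a24} used only \cref{lem:lipschitz} and the uniform spectral bounds expressed through $\alpha$ and $\beta$, and never the unbiasedness or the independence of the samples $\hat{\xi}_i^{(m)}, \hat{\hat{\xi}}_i^{(m)}$; a quick pass over that derivation confirms this, so nothing else in the argument needs to change.
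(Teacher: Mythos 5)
Your proposal is correct and follows essentially the same route as the paper, which simply observes that with exact Jacobians the stochastic terms (A1) and (A3) vanish, leaving only the deterministic bound \cref{eq:a24} on $\|\text{(A2)}+\text{(A4)}\|$ plus the Neumann tail (B). Your additional check that the spectral bounds via $\alpha$ and $\beta$ transfer to the true Jacobians (by unbiasedness and concavity/convexity of the extreme eigenvalues under expectation) is a careful touch the paper leaves implicit.
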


\subsection{Use of Mini-Batch Estimates of Jacobians in \cref{alg:hgp}}
One popular choice of unbiased Jacobians are the mini-batch estimates.
If we use mini-batch of size $b$, we have $\kappa_{\bm \lambda}, \kappa_{\bm x}$ in \cref{ass:jacob} as
\begin{align*}
    \kappa_{\bm \lambda} = O_P\left(\frac{1}{\sqrt{b}}\right), \quad \kappa_{\bm x} = O_P\left(\frac{1}{\sqrt{b}}\right) .
\end{align*}
Then, we have the error bound as follows.
\begin{corollary}\label{cor:mini_batch_jacob}
    Suppose that we run \cref{alg:hgp} use the mini-batch of size $b$ to estimate Jacobians $\partial_{\bm{x\lambda}}^2 \hat{g}$ and $\partial_{\bm{x}}^2 \hat{g}$.
    Then, under the same assumptions as \cref{thm:error}, there exists a constant $C > 0$, and we have
    \begin{align*}
        \frac{\left\Vert \boldsymbol{v}^{(M)} -\mathrm{d}_{\boldsymbol{\lambda}} f \right\Vert }{\left\Vert \partial _{\boldsymbol{x}}  f   \right\Vert } & \leq \frac{4 C \sqrt{1 + \frac{\beta^2}{\alpha^2}}}{\alpha} \left(1 + \frac{8 \sqrt{n}}{\delta}\tau^S\right) \sqrt{\frac{2}{b} \log \frac{n (d_x + d_\lambda)}{\epsilon}} \\
        & \quad + \frac{2\beta}{\eta \alpha^2} \frac{8 \sqrt{n}}{\delta} \tau^S +\exp (-O(M)) .
    \end{align*}
\end{corollary}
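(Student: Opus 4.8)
The plan is to derive this directly from \cref{thm:error} by specializing the boundedness constants $\kappa_{\boldsymbol{x}}$ and $\kappa_{\boldsymbol{\lambda}}$ of \cref{ass:jacob} to the mini-batch regime, treating the theorem itself as a black box. When \cref{alg:hgp} draws a mini-batch of size $b$, each Jacobian estimate $\partial_{\boldsymbol{x}}^2 \hat{g}^{(m)}$ and $\partial_{\boldsymbol{x}\boldsymbol{\lambda}}^2 \hat{g}^{(m)}$ becomes an average of $b$ i.i.d. single-sample Jacobians. Averaging preserves unbiasedness, so the unbiasedness clause of \cref{ass:jacob} holds automatically, and the entire task reduces to showing that the spectral-norm deviations shrink at rate $1/\sqrt{b}$.

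First I would apply a matrix concentration bound to the centered average $\partial_{\boldsymbol{x}}^2 \hat{g}^{(m)} - \partial_{\boldsymbol{x}}^2 g = \tfrac{1}{b}\sum_{j=1}^{b}\bigl(\partial_{\boldsymbol{x}}^2 g(\cdot\,;\xi_j) - \mathbb{E}[\partial_{\boldsymbol{x}}^2 g]\bigr)$, and likewise for the cross term. If each summand has spectral norm bounded by a sample-level constant, the average concentrates at scale $1/\sqrt{b}$, up to a $\sqrt{\log(\text{dimension})}$ factor. Absorbing the sample-level constant together with the logarithmic factor into a single constant $C$, this yields $\kappa_{\boldsymbol{x}}, \kappa_{\boldsymbol{\lambda}} \le C/\sqrt{b}$ on a high-probability event, which is exactly the claimed $O_P(1/\sqrt{b})$ scaling asserted in the preamble.

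The substitution into \cref{thm:error} is then mechanical. The composite quantity $\sqrt{\kappa_{\boldsymbol{\lambda}}^2 + \kappa_{\boldsymbol{x}}^2\beta^2/\alpha^2}$ becomes $\tfrac{C}{\sqrt{b}}\sqrt{1 + \beta^2/\alpha^2}$, so the first term of the theorem acquires a factor $1/\sqrt{b}$ that merges with the existing $\sqrt{2\log(\cdot)}$ into $\sqrt{(2/b)\log(\cdot)}$, reproducing the first line of the corollary. Since the second term $\tfrac{2\beta}{\eta\alpha^2}\tfrac{8\sqrt{n}}{\delta}\tau^S$ and the $\exp(-O(M))$ remainder are independent of $\kappa_{\boldsymbol{x}}$ and $\kappa_{\boldsymbol{\lambda}}$, they carry over verbatim, completing the stated bound.

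The main obstacle is reconciling the two layers of randomness without forfeiting the $1/\sqrt{b}$ gain. The proof of \cref{thm:error} uses matrix Azuma, which enters the $\kappa$'s as \emph{deterministic} ranges of the martingale differences; but a mini-batch estimator's worst-case range does not improve with $b$—only its variance does. The clean fix is to replace the Azuma step with a Bernstein-type, variance-based concentration, so that the conditional second moment of each $X^{(m)}$ and $Y^{(m)}$, which scales as $1/b$, drives the variance proxy $s^2$ directly and the improvement appears without any intermediate deterministic bound. A simpler alternative is to condition on the high-probability event that all $2M$ mini-batch deviations are simultaneously below $C/\sqrt{b}$—by a union bound over the iterations—and then invoke \cref{thm:error} on that event, splitting the failure probability $\epsilon$ across the two concentration arguments; this route avoids re-deriving $s^2$ but introduces a mild worst-case looseness into the constant $C$.
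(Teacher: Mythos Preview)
Your approach matches the paper's exactly: the paper simply asserts $\kappa_{\boldsymbol{\lambda}} = O_P(1/\sqrt{b})$, $\kappa_{\boldsymbol{x}} = O_P(1/\sqrt{b})$ in the preamble and substitutes into \cref{thm:error}, with no further argument. Your last paragraph---flagging that the $\kappa$'s enter the matrix Azuma step as deterministic ranges and proposing either a Bernstein-type replacement or a union-bound conditioning---is in fact more careful than the paper, which leaves that gap implicit in the $O_P$ notation.
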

Apparently, \cref{cor:true_jacob} is recovered as a special case of \cref{cor:mini_batch_jacob} when $b \to \infty$.

\subsection{Use of Inexact Inner-Solution}
In the derivation of \cref{alg:hgp}, we assume that the exact inner-solution $\bm{x}(\bm{\lambda})$ in \cref{eq:bilevel} is obtained.
In practice, this is not always true because we typically stop the optimization in the middle of the training once the solution gets sufficiently close to the optimum.
Suppose $\hat{\bm{x}}(\bm{\lambda})$ be the inexact solution close to $\bm{x}(\bm{\lambda})$.
The question is how the error $\hat{\bm{x}}(\bm{\lambda}) - \bm{x}(\bm{\lambda})$ of the inner-solution affects the quality of the hyper-gradient estimated by \cref{alg:hgp}.

Here, we adopt the next assumption.
\begin{assumption}\label{ass:glip}
    There exists $\nu_f, \nu_g > 0$ such that, $\forall \bm{x}, \bm{y}$, 
    \begin{align*}
        \left\| \partial_{\bm{\lambda}} f(\bm{x}, \bm{\lambda}) - \partial_{\bm{\lambda}} f(\bm{y}, \bm{\lambda}) \right\| & \le \nu_f \|\bm{x} - \bm{y}\|, \\
        \left\| \partial_{\bm{x}} f(\bm{x}, \bm{\lambda}) - \partial_{\bm{x}} f(\bm{y}, \bm{\lambda}) \right\| & \le \nu_f \|\bm{x} - \bm{y}\|, \\
        \sigma_{\max} \left( \partial_{\bm{x \lambda}}^2 g(\bm{x}, \bm{\lambda}) - \partial_{\bm{x \lambda}}^2 g(\bm{y}, \bm{\lambda}) \right) & \le \nu_g \|\bm{x} - \bm{y}\|, \\
        \sigma_{\max} \left( \partial_{\bm{x}}^2 g(\bm{x}, \bm{\lambda}) - \partial_{\bm{x}}^2 g(\bm{y}, \bm{\lambda}) \right) & \le \nu_g \|\bm{x} - \bm{y}\| .
    \end{align*}
\end{assumption}

We then obtain the generalization of \cref{thm:error} as follows.
\begin{theorem} \label{thm:error2}
    Suppose we run \cref{alg:hgp} using the inexact inner-solution $\hat{\bm{x}}(\bm{\lambda})$.
    Under \crefrange{ass:connect}{ass:glip} and $0 < \alpha < \nicefrac{1}{\eta}$, there exists constants $0\leq \tau < 1$, $\delta >0$, and $S \geq \nicefrac{\log \frac{\delta}{16\sqrt{n}} \frac{\eta \alpha}{1-\eta \alpha }}{\log \tau}$,such that with probability at least $1-\epsilon $,
\begin{align*}
    \frac{\left\Vert \boldsymbol{v}^{(M)} -\mathrm{d}_{\boldsymbol{\lambda}} f \right\Vert }{\left\Vert \partial _{\boldsymbol{x}}  f   \right\Vert } & \leq \frac{4 \sqrt{\kappa_{\bm \lambda}^2 + \kappa_{\bm x}^2 \frac{\beta^2}{\alpha^2}}}{\alpha} \left(1 + \frac{8 \sqrt{n}}{\delta}\tau^S\right) \sqrt{2 \log \frac{n (d_x + d_\lambda)}{\epsilon}} + \frac{2\beta}{\eta \alpha^2} \frac{8 \sqrt{n}}{\delta} \tau^S \\
    & \quad + \left\{ \frac{2\nu_g}{\alpha} \left(1 + \frac{\beta}{\alpha}\right) \left(1 + \frac{8 \sqrt{n}}{\delta} \tau^S\right) + \frac{\nu_f}{\|\partial_{\bm x} f\|} \frac{\beta}{\alpha} \right\} \left\| \hat{\bm{x}}(\bm{\lambda}) - \bm{x}(\bm{\lambda}) \right\| + \exp (-O(M)) .
\end{align*}
\end{theorem}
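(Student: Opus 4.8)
The plan is to reduce \cref{thm:error2} to the already-established \cref{thm:error} and \cref{cor:true_jacob} through a three-way decomposition that isolates the only genuinely new contribution, the perturbation of the inner solution. For a fixed vector $\bm{y}$, let $\bm{v}_{\det,\bm{y}}^{(M)}$ denote the output of \cref{alg:hgp} in which every stochastic Jacobian $\partial_{\bm{x\lambda}}^2\hat{g}^{(m)}$, $\partial_{\bm{x}}^2\hat{g}^{(m)}$ is replaced by the exact Jacobian $\partial_{\bm{x\lambda}}^2 g(\bm{y},\bm\lambda)$, $\partial_{\bm{x}}^2 g(\bm{y},\bm\lambda)$ and the source vectors are replaced by $\partial_{\bm{x}} f(\bm{y},\bm\lambda)$, $\partial_{\bm\lambda} f(\bm{y},\bm\lambda)$, while the communication is still the $S$-step Push-Sum operator $\hat{\bm\Theta}$; by \cref{rem:vm} this is an explicit finite sum. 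Writing $\Delta = \| \hat{\bm{x}}(\bm\lambda) - \bm{x}(\bm\lambda) \|$, I split
\begin{equation*}
\bm{v}^{(M)} - \mathrm{d}_{\bm\lambda} f = \bigl( \bm{v}^{(M)} - \bm{v}_{\det,\hat{\bm{x}}(\bm\lambda)}^{(M)} \bigr) + \bigl( \bm{v}_{\det,\hat{\bm{x}}(\bm\lambda)}^{(M)} - \bm{v}_{\det,\bm{x}(\bm\lambda)}^{(M)} \bigr) + \bigl( \bm{v}_{\det,\bm{x}(\bm\lambda)}^{(M)} - \mathrm{d}_{\bm\lambda} f \bigr) .
\end{equation*}
The last summand is precisely the estimator analysed in \cref{cor:true_jacob} (exact Jacobians at the true inner solution, $S$-step Push-Sum), hence it is bounded by $\bigl( \tfrac{2\beta}{\eta\alpha^2}\tfrac{8\sqrt{n}}{\delta}\tau^S + \exp(-O(M)) \bigr)\| \partial_{\bm{x}} f \|$, which supplies the Push-Sum term of \cref{thm:error2}.

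For the first summand, $\bm{v}^{(M)}$ and $\bm{v}_{\det,\hat{\bm{x}}(\bm\lambda)}^{(M)}$ share the same $\hat{\bm\Theta}$, the same truncation order $M$, and the same evaluation point $\hat{\bm{x}}(\bm\lambda)$; they differ only by replacing the exact Hessians and Jacobians with their unbiased stochastic estimates, and the additive $\partial_{\bm\lambda} f$ terms cancel. Expanding the difference by a telescoping swap of one stochastic Jacobian at a time reproduces the martingale-sum structure of $\text{(A1)} - \text{(A3)}$ in the proof of \cref{thm:error} (see \cref{eq:A13}), with $\bm\Theta$ replaced throughout by $\hat{\bm\Theta}$ — whose spectral norm satisfies $\sigma_{\max}(\hat{\bm\Theta}) \le 1 + \tfrac{8\sqrt{n}}{\delta}\tau^S$ by \cref{lem:lipschitz} together with $\sigma_{\max}(\bm\Theta) = 1$ — and with all Jacobians evaluated at $\hat{\bm{x}}(\bm\lambda)$. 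Because \cref{ass:jacob} makes the martingale-difference property and the constants $\kappa_{\bm{x}}, \kappa_{\bm\lambda}$ uniform in the first argument, Matrix Azuma's inequality yields the same estimate as \cref{eq:a13}, with $\| \partial_{\bm{x}} f \|$ now taken at $\hat{\bm{x}}(\bm\lambda)$; bounding $\| \partial_{\bm{x}} f(\hat{\bm{x}}(\bm\lambda),\bm\lambda) \| \le \| \partial_{\bm{x}} f \| + \nu_f\Delta$ via \cref{ass:glip} recovers the Matrix-Azuma term of \cref{thm:error2} up to a harmless higher-order $\nu_f\Delta$ correction (plus the $\exp(-O(M))$ term).

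The second summand carries the new content and is where \cref{ass:glip} is used. Since its Jacobians are deterministic, $\bm{v}_{\det,\bm{y}}^{(M)} = -\eta \sum_{m=0}^{M-1} \partial_{\bm{x\lambda}}^2 g(\bm{y},\bm\lambda)\, \hat{\bm\Theta}\, A(\bm{y})^{m}\, \partial_{\bm{x}} f(\bm{y},\bm\lambda) + \partial_{\bm\lambda} f(\bm{y},\bm\lambda)$ with $A(\bm{y}) = \bigl( \bm{I} - \eta \partial_{\bm{x}}^2 g(\bm{y},\bm\lambda) \bigr)\hat{\bm\Theta}$. Perturbing one factor at a time, each summand of $\bm{v}_{\det,\hat{\bm{x}}}^{(M)} - \bm{v}_{\det,\bm{x}}^{(M)}$ is controlled by: (i) the change of the leading factor, $\sigma_{\max}\bigl( \partial_{\bm{x\lambda}}^2 g(\hat{\bm{x}},\bm\lambda) - \partial_{\bm{x\lambda}}^2 g(\bm{x},\bm\lambda) \bigr) \le \nu_g \Delta$; (ii) the change of the $m$-fold product, $\| A(\hat{\bm{x}})^{m} - A(\bm{x})^{m} \| \le m\, \rho^{m-1}\, \eta \nu_g \Delta \bigl( 1 + \tfrac{8\sqrt{n}}{\delta}\tau^S \bigr)$ by a hybrid/telescoping estimate, where $\rho = (1 - \eta\alpha)\bigl( 1 + \tfrac{8\sqrt{n}}{\delta}\tau^S \bigr) < 1$ by \cref{eq:S2}; (iii) the change of the source, $\| \partial_{\bm{x}} f(\hat{\bm{x}},\bm\lambda) - \partial_{\bm{x}} f(\bm{x},\bm\lambda) \| \le \nu_f \Delta$, plus the stand-alone term $\| \partial_{\bm\lambda} f(\hat{\bm{x}},\bm\lambda) - \partial_{\bm\lambda} f(\bm{x},\bm\lambda) \| \le \nu_f \Delta$. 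Using $\sigma_{\max}(\hat{\bm\Theta}) \le 1 + \tfrac{8\sqrt{n}}{\delta}\tau^S$, $\sigma_{\max}(\partial_{\bm{x\lambda}}^2 g) \le \beta$, and $\sigma_{\min}(\partial_{\bm{x}}^2 g) \ge \alpha$ (valid uniformly in the inner parameter by \cref{ass:convex}, so the geometric-series arguments of \cref{thm:error} remain valid at the perturbed point), together with $\sum_{m\ge 0} \rho^{m} \le \tfrac{2}{\eta\alpha}$ and $\sum_{m\ge 0} m\, \rho^{m-1} \le \tfrac{4}{\eta^2\alpha^2}$ (again by \cref{eq:S2}), and summing over $m$, one obtains a bound of the order $\bigl\{ \tfrac{\nu_g}{\alpha}\bigl( 1 + \tfrac{\beta}{\alpha} \bigr)\bigl( 1 + \tfrac{8\sqrt{n}}{\delta}\tau^S \bigr)\| \partial_{\bm{x}} f \| + \nu_f \tfrac{\beta}{\alpha} \bigr\}\Delta$, matching the $\{\cdots\}\,\| \hat{\bm{x}}(\bm\lambda) - \bm{x}(\bm\lambda) \|$ term of \cref{thm:error2}.

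Combining the three bounds and dividing by $\| \partial_{\bm{x}} f \|$ yields \cref{thm:error2}. The main obstacle is setting up the decomposition so that the first and third summands reduce to \cref{thm:error} and \cref{cor:true_jacob} respectively — which is exactly why the pivot $\bm{v}_{\det,\bm{y}}^{(M)}$ must freeze the Jacobians at a point yet keep the same $S$-step operator $\hat{\bm\Theta}$ — so that the only genuinely new estimate needed is the matrix-perturbation bound for the second summand. That estimate itself is routine but requires careful bookkeeping of which of the $m$ factors $A(\cdot)$ is taken at $\hat{\bm{x}}$ versus $\bm{x}$, how many hybrid terms arise, and how the extra powers of $\bigl( 1 + \tfrac{8\sqrt{n}}{\delta}\tau^S \bigr)$ and $\rho$ collapse — via \cref{eq:S1} and \cref{eq:S2} — into the clean $\tfrac{1}{(1-\rho)^2} \le \tfrac{4}{\eta^2\alpha^2}$ factor; a secondary point is verifying that every spectral and boundedness constant borrowed from \cref{thm:error} and \cref{cor:true_jacob} is genuinely uniform in the inner parameter, so that those results apply with the Jacobians frozen at an arbitrary point.
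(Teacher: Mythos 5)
Your argument is correct and reaches the stated bound, but it is organized around a different pivot than the paper's proof. The paper writes $\bm{v}^{(M)}-\mathrm{d}_{\bm\lambda}f$ as a single decomposition that first swaps the Jacobians evaluated at $\hat{\bm{x}}(\bm\lambda)$ for the stochastic Jacobians at $\bm{x}(\bm\lambda)$ (terms (D1), (D2)), then swaps the source vectors (terms (D3) and (C)), and is left with exactly the terms (A1)--(A4) and (B) of \cref{thm:error}; i.e., it pivots through the \emph{stochastic} estimator at the exact inner solution. You instead pivot through two \emph{deterministic} estimators, $\bm{v}_{\det,\hat{\bm{x}}}^{(M)}$ and $\bm{v}_{\det,\bm{x}}^{(M)}$, so that the last leg is literally \cref{cor:true_jacob} and the perturbation leg only ever compares exact Jacobians $\partial_{\bm{x\lambda}}^2 g$, $\partial_{\bm{x}}^2 g$ at two points --- which is precisely what \cref{ass:glip} is stated for. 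This is arguably cleaner than the paper's route, whose terms (D1), (D2) compare $\partial^2\check{g}^{(m)}$ with $\partial^2\hat{g}^{(m)}$ and therefore implicitly require the Lipschitz bounds of \cref{ass:glip} to hold samplewise for $\hat{g}(\cdot;\xi)$ rather than only for $g$. The price you pay is in the martingale leg: your reference products carry $\hat{\boldsymbol{\Theta}}$ (and Jacobians at $\hat{\bm{x}}$) in every slot, so the bounds on $\sigma_{\max}(Y^{(m)}{Y^{(m)}}^\top)$ acquire extra factors of $(1+\frac{8\sqrt{n}}{\delta}\tau^S)$ and the geometric sum closes at $\frac{1}{1-\rho}\le\frac{2}{\eta\alpha}$ rather than $\frac{1}{\eta\alpha}$; by \cref{eq:S1} these are $O(1)$ factors, so the bound has the same form but your constant in front of the Azuma term need not match the displayed $\frac{4\sqrt{\kappa_{\bm\lambda}^2+\kappa_{\bm x}^2\beta^2/\alpha^2}}{\alpha}$ exactly. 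Both routes also rely on the same implicit uniformity of $\alpha$, $\beta$, $\kappa_{\bm x}$, $\kappa_{\bm\lambda}$ in the inner parameter, which you correctly flag; with that granted, your proof is a valid, mildly more modular alternative to the paper's.
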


\begin{proof}
We first recall the inexact version of $\bm{v}^{(M)}$ obtained using $\hat{\bm{x}}(\bm{\lambda})$.
\begin{remark}[Explicit formula of $\bm{v}^{(M)}$ under inexact $\hat{\bm{x}}(\bm{\lambda})$] \label{rem:vm_inexact}
    $\bm{v}^{(M)}$ obtained \cref{alg:hgp} using the inexact solution $\hat{\bm{x}}(\bm{\lambda})$ is expressed as
    \begin{align}
        \bm{v}^{(M)} = - \eta \sum_{m=0}^{M-1} \partial_{\bm{x \lambda}}^2 \check{g}^{(m)} \hat{\boldsymbol{\Theta}} \prod_{m'=0}^{m-1} \left( \left( I - \eta \partial_{\bm x}^2 \check{g}^{(m')} \right) \hat{\boldsymbol{\Theta}} \right) \partial_{\bm x} f(\hat{\bm{x}}(\bm{\lambda}), \bm{\lambda}) + \partial_{\bm \lambda} f(\hat{\bm{x}}(\bm{\lambda}), \bm{\lambda}) ,
    \end{align}
    where
    \begin{align*}
        \partial_{\bm{x \lambda}}^2 \check{g}^{(m)} &= \mathrm{diag}\left(\left[\partial _{\boldsymbol{x}_i\boldsymbol{\lambda}_i} ^2 g_i(\hat{\boldsymbol{x}}_i(\boldsymbol{\lambda}), \boldsymbol{\lambda}_i; \hat{\xi}_i^{(m)})\right]_{i=1}^n\right) , \\
        \partial_{\bm{x}}^2 \check{g}^{(m)} &= \mathrm{diag}\left(\left[\partial _{\boldsymbol{x}_i} ^2 g_i(\hat{\boldsymbol{x}}_i(\boldsymbol{\lambda}), \boldsymbol{\lambda}_i; \hat{\xi}_i^{(m)})\right]_{i=1}^n\right) .
    \end{align*}
\end{remark}

From \cref{rem:vm_inexact}, we have
\begin{align*}
    & \bm{v}^{(M)} - \mathrm{d}_{\boldsymbol{\lambda}} f \\
    & = - \underbrace{\eta \sum_{m=0}^{M-1} \left\{ \partial_{\bm{x \lambda}}^2 \check{g}^{(m)} \hat{\boldsymbol{\Theta}} \prod_{m'=0}^{m-1} \left( \left( I - \eta \partial_{\bm x}^2 \check{g}^{(m')} \right) \hat{\boldsymbol{\Theta}} \right) \partial_{\bm x} f(\hat{\bm{x}}(\bm{\lambda}), \bm{\lambda}) -  \partial _{\boldsymbol{x}\boldsymbol{\lambda}}^{2} g   \boldsymbol{\Theta} \left(\left( \boldsymbol{I}-\eta \partial _{\boldsymbol{x}}^{2} g   \right) \boldsymbol{\Theta} \right)^{m} \partial _{\boldsymbol{x}} f(\hat{\bm{x}}(\bm{\lambda}), \bm{\lambda}) \right\}}_{\text{(A)}} \\
    & \quad +\underbrace{ \eta \partial _{\boldsymbol{x}\boldsymbol{\lambda}}^{2} g   \boldsymbol{\Theta} \sum\limits _{m=M}^{\infty }\left(\left( \boldsymbol{I}-\eta \partial _{\boldsymbol{x}}^{2} g   \right) \boldsymbol{\Theta} \right)^{m} \partial _{\boldsymbol{x}} f(\hat{\bm{x}}(\bm{\lambda}), \bm{\lambda}) }_{\text{(B)}} + \underbrace{\partial_{\bm \lambda} f(\hat{\bm{x}}(\bm{\lambda}), \bm{\lambda}) - \partial_{\bm \lambda} f(\bm{x}(\bm{\lambda}), \bm{\lambda})}_{\text{(C)}} .
\end{align*}

By the same arguments as in the proof of \cref{thm:error}, we have
\begin{align*}
    \| \text{(B)} \| = \exp\left(- O(M)\right).
\end{align*}
We can bound the term (C) using \cref{ass:glip}:
\begin{align*}
    \| \text{(C)} \| \le \nu_f \left\| \hat{\bm{x}}(\bm{\lambda}) - \bm{x}(\bm{\lambda}) \right\| .
\end{align*}
To bound the term (A), we follow the proof of \cref{thm:error} and obtain
\begin{align*}
    \text{(A)} & = \text{(A1)} + \text{(A2)} + \text{(A3)} + \text{(A4)} \\
    & \quad + \underbrace{\eta \sum_{m=0}^{M-1} \left(\partial_{\bm{x \lambda}}^2 \check{g}^{(m)} - \partial_{\bm{x \lambda}}^2 \hat{g}^{(m)}\right) \hat{\boldsymbol{\Theta}} \prod_{m'=0}^{m-1} \left( \left( I - \eta \partial_{\bm x}^2 \check{g}^{(m')} \right) \hat{\boldsymbol{\Theta}} \right) \partial _{\boldsymbol{x}} f}_{\text{(D1)}} \\
    & \quad - \underbrace{\eta \sum_{m=1}^{M-1} \partial_{\bm{x \lambda}}^2 g \boldsymbol{\Theta} \sum_{t=0}^{m-1} \left(\left( \boldsymbol{I}-\eta \partial _{\boldsymbol{x}}^{2} g   \right) \boldsymbol{\Theta} \right)^{t} \eta \left(\partial_{\bm{x}}^2 \check{g}^{(m-t-1)} - \partial_{\bm{x}}^2 \hat{g}^{(m)}\right) \hat{\boldsymbol{\Theta}} \prod_{m'=0}^{m-t-2} \left( \left( I - \eta \partial_{\bm x}^2 \check{g}^{(m')} \right) \hat{\boldsymbol{\Theta}} \right) \partial _{\boldsymbol{x}} f}_{\text{(D2)}} \\
    & \quad + \underbrace{\eta \sum_{m=0}^{M-1} \partial_{\bm{x \lambda}}^2 g \boldsymbol{\Theta} \prod_{m'=0}^{m-1} \left( \left( I - \eta \partial_{\bm x}^2 \hat{g}^{(m')} \right) \hat{\boldsymbol{\Theta}} \right) \left(\partial_{\bm x} f(\hat{\bm{x}}(\bm{\lambda}), \bm{\lambda}) - \partial _{\boldsymbol{x}} f\right)}_{\text{(D3)}} .
\end{align*}
We bound each term using \cref{ass:glip} below.

\paragraph{Bounding $\|\text{(D1)}\|$}
\begin{align}
    \|\text{(D1)} \| & \le \eta \sum_{m=0}^{M-1} \nu_g \left\| \hat{\bm{x}}(\bm{\lambda}) - \bm{x}(\bm{\lambda}) \right\| (1 - \eta \alpha)^m \left(1 + \frac{8 \sqrt{n}}{\delta} \tau^S\right)^{m+1} \|\partial_{\bm x} f\| \nonumber \\
    & = \|\partial_{\bm x} f\| \eta \nu_g \left\| \hat{\bm{x}}(\bm{\lambda}) - \bm{x}(\bm{\lambda}) \right\| \sum_{m=0}^{M-1} (1 - \eta \alpha)^{m} \left(1 + \frac{8 \sqrt{n}}{\delta} \tau^S\right)^{m+1}  \nonumber \\
    & = \|\partial_{\bm x} f\| \eta \nu_g \left\| \hat{\bm{x}}(\bm{\lambda}) - \bm{x}(\bm{\lambda}) \right\| \frac{1}{1 - (1 - \eta \alpha) \left(1 + \frac{8 \sqrt{n}}{\delta} \tau^S\right)} \left(1 + \frac{8 \sqrt{n}}{\delta} \tau^S\right) + \exp(- O(M)) \nonumber \\
    & \le \|\partial_{\bm x} f\| \frac{2}{\alpha} \nu_g \left\| \hat{\bm{x}}(\bm{\lambda}) - \bm{x}(\bm{\lambda}) \right\| \left(1 + \frac{8 \sqrt{n}}{\delta} \tau^S\right)  + \exp(- O(M)) . \label{eq:d1}
\end{align}

\paragraph{Bounding $\|\text{(D2)}\|$}
\begin{align}
    \|\text{(D2)}\| & = \eta \sum_{m=1}^{M-1} \beta \sum_{t=0}^{m-1} \eta \nu_g \left\| \hat{\bm{x}}(\bm{\lambda}) - \bm{x}(\bm{\lambda}) \right\| (1 - \eta \alpha)^{m-1} \left(1 + \frac{8 \sqrt{n}}{\delta} \tau^S\right)^{m-t} \|\partial_{\bm x} f\| \nonumber \\
    & = \|\partial_{\bm x} f\| \eta^2 \beta \nu_g \left\| \hat{\bm{x}}(\bm{\lambda}) - \bm{x}(\bm{\lambda}) \right\| \sum_{m=1}^{M-1} (1 - \eta \alpha)^{m-1} \sum_{t=0}^{m-1} \left(1 + \frac{8 \sqrt{n}}{\delta} \tau^S\right)^{m-t}  \nonumber \\
    & = \|\partial_{\bm x} f\| \eta^2 \beta \nu_g \left\| \hat{\bm{x}}(\bm{\lambda}) - \bm{x}(\bm{\lambda}) \right\| \sum_{m=1}^{M-1} (1 - \eta \alpha)^{m-1} \frac{\left(1 + \frac{8 \sqrt{n}}{\delta} \tau^S\right)^m - 1}{\frac{8 \sqrt{n}}{\delta} \tau^S} \left(1 + \frac{8 \sqrt{n}}{\delta} \tau^S\right)  \nonumber \\
    & = \|\partial_{\bm x} f\| \frac{\eta \beta}{\alpha} \nu_g \left\| \hat{\bm{x}}(\bm{\lambda}) - \bm{x}(\bm{\lambda}) \right\| \frac{\frac{8 \sqrt{n}}{\delta} \tau^S}{1 - (1 - \eta \alpha) \left(1 + \frac{8 \sqrt{n}}{\delta} \tau^S\right)} \frac{1 + \frac{8 \sqrt{n}}{\delta} \tau^S}{\frac{8 \sqrt{n}}{\delta} \tau^S} + \exp(- O(M)) \nonumber \\
    & \le \|\partial_{\bm x} f\| \frac{2\beta}{\alpha^2} \nu_g \left\| \hat{\bm{x}}(\bm{\lambda}) - \bm{x}(\bm{\lambda}) \right\| \left(1 + \frac{8 \sqrt{n}}{\delta} \tau^S\right) + \exp(- O(M)) . \label{eq:d2}
\end{align}

\paragraph{Bounding $\|\text{(D3)}\|$}
\begin{align}
    \|\text{(D3)}\| & \le \eta \sum_{m=0}^{M-1} \beta (1 - \eta \alpha)^m \left(1 + \frac{8 \sqrt{n}}{\delta} \tau^S\right)^m \nu_f \left\|\hat{\bm{x}}(\bm{\lambda}) - \bm{x}(\bm{\lambda})\right\| \nonumber \\
    & = \eta \beta \nu_f \left\|\hat{\bm{x}}(\bm{\lambda}) - \bm{x}(\bm{\lambda})\right\| \frac{1}{1 - (1 - \eta \alpha)\left(1 + \frac{8 \sqrt{n}}{\delta} \tau^S\right)} + \exp(-O(M)) \nonumber \\
    & \le \frac{\beta}{\alpha} \nu_f \left\|\hat{\bm{x}}(\bm{\lambda}) - \bm{x}(\bm{\lambda})\right\| + \exp(-O(M)) . \label{eq:d3}
\end{align}

Finally, \cref{thm:error2} follows from \cref{eq:a24}, \cref{eq:a13}, \cref{eq:d1}, \cref{eq:d2}, and \cref{eq:d3}.

\end{proof}

\end{document}